\documentclass{article}

\usepackage[nonatbib,preprint]{neurips_2026}

\usepackage[utf8]{inputenc} % allow utf-8 input
\usepackage[T1]{fontenc}    % use 8-bit T1 fonts
\usepackage{hyperref}       % hyperlinks
\usepackage{url}            % simple URL typesetting
\usepackage{booktabs}       % professional-quality tables
\usepackage{amsfonts}       % blackboard math symbols
\usepackage{nicefrac}       % compact symbols for 1/2, etc.
\usepackage{microtype}      % microtypography
\usepackage{xcolor}         % colors

\usepackage[numbers,sort&compress]{natbib}
\usepackage{amssymb}
\usepackage{amsmath}
\usepackage{algorithm}
\usepackage{algpseudocode}
\usepackage{caption}
\usepackage{soul}
\usepackage{wrapfig}
\usepackage{makecell}
\usepackage{subcaption}
\usepackage{graphicx}
\usepackage{enumitem}
\usepackage{multirow}
\usepackage{wrapfig}
\usepackage{titlesec}

\titlespacing*{\section}
  {0pt}{*0.4}{*0.4}  % {left}{before-sep}{after-sep}
\titlespacing*{\subsection}
  {0pt}{*0.3}{*0.3}

\DeclareMathOperator*{\argmin}{arg\,min}
\newcommand{\norm}[1]{\left\lVert#1\right\rVert}

\usepackage{amsthm}
\newtheorem{theorem}{Theorem}
\newtheorem{proposition}{Proposition}
\newtheorem{corollary}{Corollary}
\newtheorem{lemma}{Lemma}
\newtheorem{remark}{Remark}

\title{HodgeCover: Higher-Order Topological Coverage Drives Compression of Sparse Mixture-of-Experts}

\author{%
  Tao Zhong$^{1}$,
  Dongzhe Zheng$^{1}$,
  Christine Allen-Blanchette$^{1}$ \\
  $^1$Princeton University\\
  \texttt{\{tzhong, ca15\}@princeton.edu}
}

\begin{document}

\maketitle

\begin{abstract}
Sparse Mixture-of-Experts (MoE) layers route tokens through a handful
of experts, and learning-free compression of these layers reduces
inference cost without retraining. A subtle obstruction blocks every
existing compressor in this family: three experts can each be
pairwise compatible yet form an irreducible cycle when merged
together, so any score that ranks experts on pairwise signals is
structurally blind to which triples are jointly mergeable. We show
the obstruction is a precise mathematical object, the harmonic
kernel of the simplicial Laplacian on a $2$-complex whose vertices
are experts, whose edges carry KL merge barriers, and whose faces
carry triplet barriers; Hodge-decomposing the edge-barrier signal
isolates the kernel exactly. We turn the diagnostic into a selection
objective: HodgeCover greedily covers the harmonic-critical edges
and triplet-critical triangles, and a hybrid variant of HodgeCover pairs it with
off-the-shelf weight pruning on survivors. On three open-weight Sparse MoE
backbones under aggressive expert reduction, HodgeCover matches
state-of-the-art learning-free baselines on the expert-reduction
axis, leads on the aggressive-compression frontier of the hybrid axis, and uniquely balances retained mass
across all four Hodge components. These results show that exposing
the harmonic kernel of a learned MoE structure changes which
compressor wins at the regime that matters most.
\end{abstract}

\begin{figure}[!h]
  \centering
  \vspace{-10pt}
  \includegraphics[width=\linewidth]{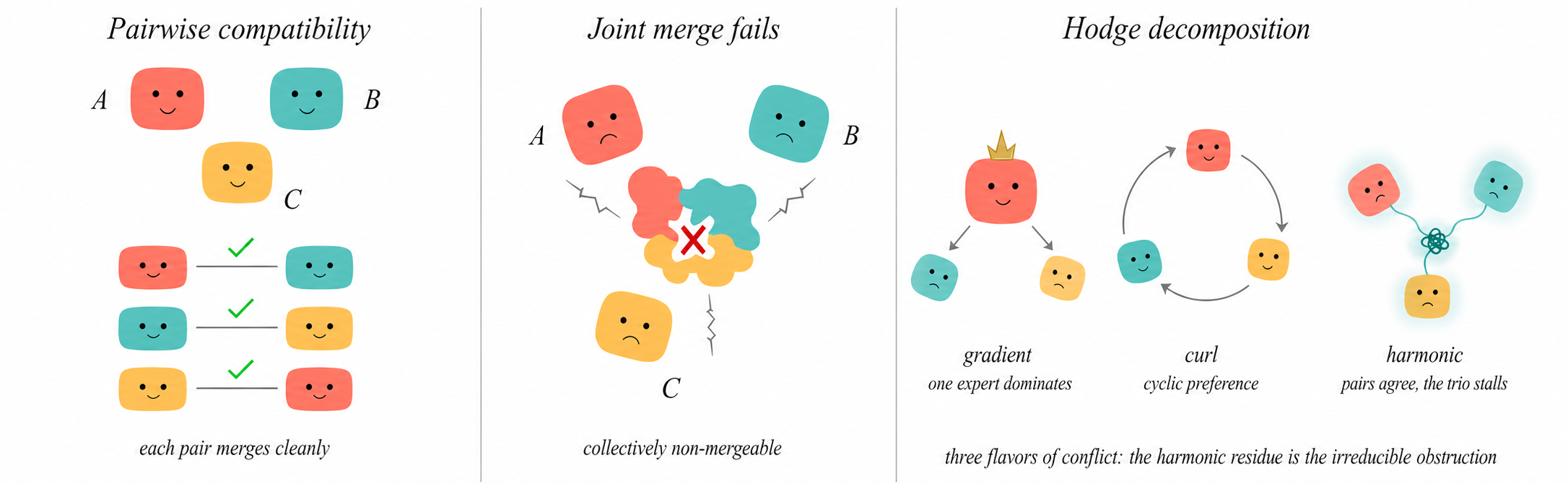}
  \caption{Three experts can be pairwise compatible yet jointly form an
    irreducible mergeability barrier that no pair of pairwise scores predicts.
    This residue is exactly the harmonic component of the Hodge decomposition
    on the layer's simplicial mergeability complex.}
  \vspace{-5pt}
  \label{fig:teaser}
\end{figure}

\section{Introduction}
\label{sec:intro}

In a trained sparse Mixture-of-Experts (MoE) layer, expert $A$ may be pairwise
compatible with expert $B$, expert $B$ with expert $C$, and expert $C$ with
expert $A$, yet collapsing all three into a single expert can still incur an
irreducible barrier that no pair of these compatibilities predicts
(Fig.~\ref{fig:teaser}). Sparsely-gated Mixture-of-Experts~\citep{shazeer2017}
routes each token through only a handful of expert sub-networks, and Switch
Transformer~\citep{fedus2022switch} scaled this construction past the dense
compute frontier; the recipe is now standard in production-grade open
checkpoints such as
Mixtral~\citep{jiang2024mixtral},
OLMoE~\citep{muennighoff2024olmoe},
Qwen 3.5~\citep{qwen3.5}, and
DeepSeek-V3~\citep{deepseekv3-2025},
each of which hosts dozens to hundreds of feed-forward experts per MoE layer
while routing only a small constant subset to any one token. Compressing such
a layer without retraining is therefore a first-order concern:
the parameter mass lives in experts that are inactive at any inference step,
and a learning-free compressor would let practitioners reuse a pretrained MoE
checkpoint on smaller fleets without fine-tuning, knowledge distillation, or
re-pretraining.

The dominant family of such compressors ranks experts on \emph{pairwise}
signals:
REAP~\citep{muzio2025reap} uses output saliency,
REAM~\citep{ream2024} reweights routing through saliency-aware merge scores,
MC-SMoE~\citep{mcsmoe2024} clusters experts via routing-cosine similarity, and
STUN~\citep{xu2025stun} structurally prunes experts before Wanda~\citep{sun2024wanda} unstructured sparsity. Pairwise rankings, however, are
blind to a structural property of mergeability: three experts can be pairwise
mergeable yet \emph{collectively} non-mergeable when their merge-barriers form
an irreducible cycle. Methods that do consume triplet inputs, such as
hypergraph spectral cuts~\citep{zhou2007hypergraph} and triplet-loss penalties~\citep{schroff2015triplet}, aggregate triplets into a sum-of-pairs surrogate
or a binary veto rather than isolating the topological residue. The harmonic
kernel of the simplicial Laplacian~\citep{lim2020hodge,schaub2020simplicial} is the unique linear-algebraic
object that captures this residue, and prior topological deep learning~\citep{bronstein2021geometric} has used it as a diagnostic tool but never as a
parameter-selection objective.

To this end, we introduce \emph{HodgeCover}, a learning-free expert-selection
objective that turns the harmonic kernel into a coverage criterion. From a
sparse MoE layer with $n$ experts we build a $2$-dimensional simplicial
complex $K$ whose vertices are experts, whose edges carry pairwise KL
merge-barriers, and whose $2$-faces carry triplet barriers, and we apply the
simplicial Laplacian~\citep{lim2020hodge}
$L_1=\partial_1^{\!\top}\partial_1+\partial_2\partial_2^{\!\top}$ to Hodge-decompose the edge-barrier signal into gradient,
curl, and harmonic components. HodgeCover then picks $k$ survivor experts by
greedy submodular coverage~\citep{nemhauser1978submodular} of the top-$p$\%
harmonic-critical edges and triplet-critical triangles, weighted by a saliency
score adapted from REAP~\citep{muzio2025reap}; non-survivors are pruned with a
router redirect to the closest survivor. The hybrid \emph{HodgeCover+Wanda}
pairs this expert-axis compressor with unstructured Wanda~\citep{sun2024wanda}
pruning on survivor weights, composing two orthogonal compression axes inside
a single learning-free pipeline.

We summarize our contributions as follows.
\begin{itemize}[leftmargin=*]
  \item We define the simplicial mergeability complex of a sparse MoE layer
  and identify its higher-order mergeability obstruction with the harmonic
  component of its Hodge decomposition, verifying this component is
  non-trivial ($29$\,--\,$62\%$ of per-layer barrier energy) across three MoE
  families (\S\ref{sec:complex}).

  \item We instantiate HodgeCover, a learning-free objective that greedily
  covers the top-$p$\% harmonic-critical edges and triplet-critical triangles,
  and a hybrid HodgeCover+Wanda that pairs it with unstructured Wanda pruning
  on survivors (\S\ref{sec:method}, Fig.~\ref{fig:framework}).

  \item At $66\%$ expert reduction across three MoE scales, HodgeCover+Wanda
  achieves the best perplexity on every model and outperforms STUN+Wanda~\citep{xu2025stun} by up to $+12.6$ pp downstream accuracy, while a matched
  ablation that skips the Hodge step costs $-5.74$ pp on Qwen 3.5-35B
  (\S\ref{sec:exp}, \S\ref{sec:ablations}).
\end{itemize}

\section{Related Work}
\label{sec:related}

\textbf{MoE expert pruning and merging} is a static post-training family that
ranks experts on pairwise signals scored from a small calibration set:
REAP~\citep{muzio2025reap} on output saliency,
REAM~\citep{ream2024} on saliency-aware merge scoring,
MC-SMoE~\citep{mcsmoe2024} and HC-SMoE~\citep{chen2024hcsmoe} on
routing-cosine clustering, and
STUN~\citep{xu2025stun} on a structured-then-unstructured decision composed
with Wanda~\citep{sun2024wanda} and OWL~\citep{yin2023outlier}. Calibration-free pretrained-weight scores~\citep{lu2024notall,aimer2026} and router-gated weight metrics~\citep{moepruner2024} relax the supervision of the same pairwise rule, while
evolutionary search over expert subsets~\citep{liu2024eep,evoesap2026},
prune-then-recompose redistribution~\citep{yang2024moei2}, and unified
slim-trim trimming~\citep{he2024merging} relax the one-shot constraint;
learning-based independent-expert training and merging~\citep{li2022branch,zhong2022meta}
and dense-to-sparse upcycling~\citep{komatsuzaki2023sparse} sit orthogonally
on the capacity-construction axis. Every method aggregates the merge landscape into
edge-level pairwise scores, and none represents the higher-order obstruction
that arises when three experts are pairwise mergeable but not jointly
mergeable. HodgeCover is the first learning-free expert-selection objective
that operates on the harmonic kernel of the layer's simplicial mergeability
complex.

\textbf{Sparsity-based weight pruning} compress LLMs at
the weight-matrix level along three axes. One-shot unstructured pruning ranks weights from
calibration activations (Wanda~\citep{sun2024wanda},
SparseGPT~\citep{frantar2023sparsegpt}) and extends a magnitude- and
movement-pruning lineage~\citep{han2016deepcompression,sanh2020movement};
structured pruning removes coupled channels, heads, or transformer
blocks (LLM-Pruner~\citep{ma2023llmpruner}, ShortGPT~\citep{men2024shortgpt});
post-training quantization compresses bit-width rather than count
(GPTQ~\citep{frantar2023gptq}, AWQ~\citep{lin2024awq},
SmoothQuant~\citep{xiao2023smoothquant},
OmniQuant~\citep{shao2024omniquant}). All these axes operate on weight
matrices and cannot exploit the structural redundancy among MoE experts; on a
sparse MoE checkpoint they leave expert-count compression entirely on the
table. HodgeCover+Wanda composes the two axes inside one learning-free
pipeline: Stage-1 expert reduction on the topological signal, Stage-2
unstructured Wanda on survivor weights.

\textbf{Topological and spectral methods on neural networks} apply
algebraic-topology and spectral machinery along two complementary lines. The architectural line installs Hodge and simplicial
operators as message-passing primitives or feature layers (geometric deep
learning~\citep{bronstein2021geometric}, simplicial neural networks~\citep{ebli2020simplicial,bodnar2021weisfeiler}, topological
GNNs~\citep{horn2022topology}), with foundational Hodge theory traced to~\citet{eckmann1944} and modernized for discrete data~\citep{lim2020hodge,schaub2020simplicial,zheng2026hsd}. The diagnostic line summarises Betti
numbers, persistence diagrams, and Hodge spectra of trained networks for
generalization analysis~\citep{naitzat2020topology,rieck2019neural,grande2024simplicial,zheng2026hsd}; the closest
matrix-level precedents that act on a spectral signal,
spectral sparsification~\citep{spielman2011spectral} and
spectral pruning~\citep{suzuki2020spectral}, operate at the dyadic level and
never lift to higher-order simplicial structure. Across both lines topology
is consumed as input or summarised as output, and almost no prior work
operationalizes it into a parameter-selection objective. We lift the Hodge
decomposition from analysis tool to engineering selection criterion that
drives a practical compressor.

\textbf{Triplet- and hypergraph-aware methods in compression and clustering}
encode higher-order relational structure as hyperedges or triplet
constraints in two distinct families. Hypergraph spectral methods reduce
triplet (or larger) cliques to a clique-expansion Laplacian for cuts
\citep{zhou2007hypergraph,agarwal2006higher},
higher-order Cheeger inequalities~\citep{lee2014multiway,louis2015hypergraph},
motif-based partitioning~\citep{benson2016higher}, and hypergraph convolution
and attention~\citep{feng2019hypergraph,bai2021hypergraph}. Metric learning
encodes triplets as scalar margin constraints in pair-contrastive form
\citep{hadsell2006dimensionality} or triplet form
(FaceNet~\citep{schroff2015triplet}, deep ranking~\citep{wang2014triplet}).
All these methods aggregate higher-order inputs into a sum-of-pairs surrogate
or a binary veto, never separating the gradient, curl, and harmonic
components of the signal. We Hodge-decompose triplet-augmented edge barriers
and show (\S\ref{sec:ablations}) that the harmonic component is the
load-bearing piece: a soft-triplet ablation that skips the decomposition
costs $-5.74$\,pp downstream-task average accuracy on Qwen 3.5-35B at $66\%$ compression.

\section{The Mergeability Complex and Its Hodge Decomposition}
\label{sec:complex}

We now formalize the higher-order obstruction sketched in
Section~\ref{sec:intro}; the full Hodge primer, all proofs, projection
formulas, and extended diagnostics are in App.~\ref{app:hodge}.

\subsection{Problem statement and notation}
\label{sub:problem}

Fix a sparse Mixture-of-Experts (MoE) layer with $n$ experts
$\{f_1, \ldots, f_n\}$ and a learned router $g$, and write $f(\cdot \mid x)$
for its output next-token distribution at input $x$. Given a target survivor
count $k < n$, learning-free expert compression seeks a subset
$S \subset \{1, \ldots, n\}$ with $|S| = k$ and a router-redirect map
$\pi : \{1, \ldots, n\} \setminus S \to S$ such that the post-compression
layer $f_S$ minimizes the calibration-set KL divergence
\begin{equation}
  \label{eq:objective}
  \mathcal{L}(S, \pi)
    \;=\;
    \mathbb{E}_{x \sim \mathcal{D}}\,
    D_{\mathrm{KL}}\!\left(f(\cdot \mid x) \,\big\Vert\, f_S(\cdot \mid x)\right),
\end{equation}
under the constraint that $f_S$ is obtained from a single forward pass over a
small calibration corpus $\mathcal{D}$, with no fine-tuning, knowledge
distillation, or LoRA adaptation. We fix $\mathcal{D}$ to a small held-out
calibration corpus throughout (App.~\ref{app:calibration}).

We treat experts as vertices of an undirected complete graph
$V = \{1, \ldots, n\}$. For an unordered pair
$\{i, j\} \in \binom{V}{2}$, the \emph{pairwise merge barrier}
\begin{equation}
  \label{eq:pair-barrier}
  b_{ij}
    \;=\;
    \mathbb{E}_{x \sim \mathcal{D}}\,
    D_{\mathrm{KL}}\!\left(
      f(\cdot \mid x) \,\big\Vert\, f_{V \setminus \{i,j\}\,\cup\,\{i \oplus j\}}(\cdot \mid x)
    \right)
\end{equation}
measures the distributional cost of replacing $\{i, j\}$ with their
frequency-weighted merge $i \oplus j$, where the frequency weighting is the
empirical token-routing rate of each expert on $\mathcal{D}$ and a guard
falls back to the unweighted average when both rates vanish
(App.~\ref{app:merge}). For an unordered triple
$\{i, j, k\} \in \binom{V}{3}$, the \emph{triplet merge barrier} $b_{ijk}$ is
defined analogously, by replacing $\{i, j, k\}$ with their joint
frequency-weighted merge $i \oplus j \oplus k$. These pairwise and triplet
barriers will be lifted to edge- and triangle-supported signals on the
2-complex defined next.

\subsection{The mergeability complex and the simplicial Laplacian}
\label{sub:complex}

The \emph{mergeability complex} of an MoE layer is the abstract simplicial
2-complex $K = (V, E, T)$ with $E = \binom{V}{2}$ (the complete pairwise edge
set) and $T \subseteq \binom{V}{3}$ a curated triangle set whose
two-stage construction is included in App.~\ref{app:triangle-set}. Pairwise barriers form an
edge-supported signal $b \in C_1(K)$ with coefficient $b_{ij}$ on edge
$\{i, j\}$, and triplet barriers form a triangle-supported signal
$c \in C_2(K)$ with coefficient $b_{ijk}$ on triangle $\{i, j, k\}$. They
enter the analysis as signals on $K$, never as weights inside the operator
$L_1$ defined below. We work with the complete edge set rather than a
thresholded subgraph $\{(i,j) : b_{ij} \le \tau_e\}$, because thresholding
distorts the Hodge spectrum: it changes the rank of $\partial_1$ on
disconnection events and can detach otherwise harmonic cycles
(App.~\ref{app:thresholding}).

Fix any total ordering on $V$ and the induced lexicographic orientation on
edges and triangles. Let $C_q(K) = \mathbb{R}^{F_q(K)}$, where $F_q(K)$ is
the set of oriented $q$-simplices ($q = 0, 1, 2$); throughout this section
$q$ indexes the chain dimension and is unrelated to the percentile
thresholds for top-percentage edges and triangles introduced in
Section~\ref{sec:method}. For oriented edges
$[i, j]$ with $i < j$ and oriented triangles $[i, j, k]$ with $i < j < k$,
the signed boundary operators are
\begin{align}
  \label{eq:bdry1}
  \partial_1 : C_1(K) &\to C_0(K),
    & \partial_1\,[i, j] &= [j] - [i], \\
  \label{eq:bdry2}
  \partial_2 : C_2(K) &\to C_1(K),
    & \partial_2\,[i, j, k] &= [j, k] - [i, k] + [i, j],
\end{align}
satisfying the chain-complex identity $\partial_1 \circ \partial_2 = 0$ that
underlies all of homology theory \citep{hatcher2002, munkres1984}. The
\emph{1-Hodge Laplacian} of $K$ is
\begin{equation}
  \label{eq:laplacian}
  L_1
    \;=\;
    \partial_1^{\!\top}\partial_1 \;+\; \partial_2\,\partial_2^{\!\top},
\end{equation}
a symmetric positive semi-definite operator on $C_1(K)$
\citep{eckmann1944, lim2020hodge, schaub2020simplicial, horak2013spectra}.
We use the unweighted form of $L_1$ throughout: barriers enter only as the
edge-supported signal $b \in C_1(K)$, never as edge weights inside the
operator. The kernel $\ker(L_1)$ depends only on the combinatorial topology
of $K$ (App.~\ref{app:laplacian}), so the unweighted $L_1$ keeps the
harmonic projector independent of how barriers are normalized across
layers and across model families.

\subsection{Hodge decomposition of the merge-barrier signal}
\label{sub:hodge}

\begin{theorem}[Discrete Hodge decomposition; \citep{eckmann1944,lim2020hodge}]
\label{thm:hodge}
Every edge-supported signal $b \in C_1(K)$ admits a unique orthogonal
decomposition
\begin{equation}
  \label{eq:hodge}
  b
    \;=\;
    b_{\mathrm{grad}}
    \;+\; b_{\mathrm{curl}}
    \;+\; b_{\mathrm{harm}},
\end{equation}
with $b_{\mathrm{grad}} \in \mathrm{im}(\partial_1^{\!\top})$,
$b_{\mathrm{curl}} \in \mathrm{im}(\partial_2)$, and
$b_{\mathrm{harm}} \in \ker(L_1)$. The three subspaces are pairwise
orthogonal in $\mathbb{R}^{|E|}$, and
$\dim \ker(L_1) = \beta_1(K)$, the first Betti number of $K$.
\end{theorem}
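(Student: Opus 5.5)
The plan is to treat this as pure finite-dimensional linear algebra on $C_1(K)=\mathbb{R}^{|E|}$ with the standard inner product, using only the chain identity $\partial_1\circ\partial_2=0$ from \eqref{eq:bdry1}--\eqref{eq:bdry2}. Write $A=\partial_1^{\!\top}$ and $B=\partial_2$, so that $L_1=AA^{\!\top}+BB^{\!\top}$ by \eqref{eq:laplacian}.

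\textbf{Step 1: orthogonality of the two image subspaces.} The chain identity gives $A^{\!\top}B=\partial_1\partial_2=0$. Hence for any $u\in C_0(K)$ and $w\in C_2(K)$ we have $\langle Au,Bw\rangle=\langle u,A^{\!\top}Bw\rangle=0$, so $\mathrm{im}(\partial_1^{\!\top})\perp\mathrm{im}(\partial_2)$.

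\textbf{Step 2: identify the kernel.} For any $x\in C_1(K)$,
\[
\langle x,L_1x\rangle=\|A^{\!\top}x\|^2+\|B^{\!\top}x\|^2 .
\]
This shows $L_1$ is positive semi-definite. It also gives
\[
\ker L_1=\ker A^{\!\top}\cap\ker B^{\!\top}=\ker\partial_1\cap\ker\partial_2^{\!\top},
\]
since $\ker L_1\supseteq$ the intersection trivially, and $L_1x=0$ forces both squared norms to vanish. By the fundamental theorem of linear algebra, $\ker A^{\!\top}=(\mathrm{im}A)^\perp$ and $\ker B^{\!\top}=(\mathrm{im}B)^\perp$. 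Therefore
\[
\ker L_1=\bigl(\mathrm{im}A+\mathrm{im}B\bigr)^\perp .
\]

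\textbf{Step 3: decomposition and uniqueness.} In the finite-dimensional space $C_1(K)$ we have $C_1(K)=W\oplus W^\perp$ with $W=\mathrm{im}A+\mathrm{im}B$. Step 1 upgrades this to an orthogonal direct sum, $W=\mathrm{im}A\oplus\mathrm{im}B$. This yields \eqref{eq:hodge}, with all three subspaces pairwise orthogonal. Uniqueness follows from orthogonality: if two decompositions of $b$ exist, their difference is a sum of three mutually orthogonal vectors equal to zero, so each piece vanishes. Concretely, each component is the orthogonal projection of $b$ onto its subspace.

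\textbf{Step 4: the Betti number claim.} I expect this to be the only step needing any care. Over $\mathbb{R}$, define
\[
H_1(K)=\ker\partial_1/\mathrm{im}\,\partial_2,
\]
which is well defined because $\mathrm{im}\,\partial_2\subseteq\ker\partial_1$. Inside $\ker\partial_1$, the orthogonal complement of $\mathrm{im}\,\partial_2$ is
\[
\ker\partial_1\cap(\mathrm{im}\,\partial_2)^\perp=\ker\partial_1\cap\ker\partial_2^{\!\top}=\ker L_1 .
\]
So $\ker\partial_1=\mathrm{im}\,\partial_2\oplus\ker L_1$, and the quotient map restricted to $\ker L_1$ is an isomorphism onto $H_1(K)$. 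Thus $\dim\ker L_1=\dim H_1(K;\mathbb{R})=\beta_1(K)$.

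The subtle points are that $\beta_1$ must be read with real coefficients, and that the orientation choice only flips signs of basis vectors. Flipping a sign is an orthogonal change of basis, so it does not affect the decomposition or any of the dimensions.
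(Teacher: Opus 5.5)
Your proposal is correct and follows essentially the same route as the paper's proof in App.~\ref{app:hodge-proof}. Both use orthogonality of $\mathrm{im}(\partial_1^{\!\top})$ and $\mathrm{im}(\partial_2)$ from $\partial_1\partial_2=0$, the identification $\ker(L_1)=\ker(\partial_1)\cap\ker(\partial_2^{\!\top})$ via $\langle L_1 b,b\rangle=\norm{\partial_1 b}^2+\norm{\partial_2^{\!\top} b}^2$, and spanning by orthogonal complements; your only differences are packaging spanning as $\ker L_1=(\mathrm{im}\,\partial_1^{\!\top}+\mathrm{im}\,\partial_2)^\perp$ in one step, and making explicit in Step 4 the isomorphism $\ker L_1\cong H_1(K;\mathbb{R})$, which the paper only records as Eq.~\eqref{eq:betti} from the same splitting $\ker\partial_1=\mathrm{im}\,\partial_2\oplus\ker L_1$.
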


A self-contained linear-algebraic proof using the chain identity
$\partial_1\partial_2 = 0$ is in App.~\ref{app:hodge-proof}; the explicit
projection formulas
$P_{\mathrm{grad}} = \partial_1^{\!\top}(\partial_1\partial_1^{\!\top})^{+}\partial_1$,
$P_{\mathrm{curl}} = \partial_2(\partial_2^{\!\top}\partial_2)^{+}\partial_2^{\!\top}$,
and $P_{\mathrm{harm}} = I - P_{\mathrm{grad}} - P_{\mathrm{curl}}$, with
$(\cdot)^{+}$ the Moore-Penrose pseudoinverse, are derived in
App.~\ref{app:projection}.

\begin{proposition}[Harmonic energy as irreducible mergeability residual]
\label{prop:harmonic-residual}
Let
$\mathcal{M}_K
  \;:=\; \mathrm{im}(\partial_1^{\!\top})\,\oplus\,\mathrm{im}(\partial_2)
  \,\subseteq\, C_1(K)$
denote the subspace of edge-barrier signals expressible as a vertex
potential lifted to edges plus a triangle-boundary correction. For every
$b \in C_1(K)$,
\begin{equation}
  \label{eq:harmonic-residual}
  \inf_{m \in \mathcal{M}_K}\,\norm{b - m}^2
    \;=\; \norm{b_{\mathrm{harm}}}^2,
\end{equation}
with unique minimizer $m^\star = b_{\mathrm{grad}} + b_{\mathrm{curl}}$.
\end{proposition}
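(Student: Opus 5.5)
The plan is to reduce the statement to the standard fact that orthogonal projection onto a closed subspace minimizes distance, with Theorem~\ref{thm:hodge} identifying both the subspace and its orthogonal complement.

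First we show that $\mathcal{M}_K$ is a genuine orthogonal direct sum. This is the one place where the chain identity enters. For $u \in C_0(K)$ and $w \in C_2(K)$ we have $\langle \partial_1^{\top} u, \partial_2 w\rangle = \langle u, \partial_1\partial_2 w\rangle = 0$. Hence $\mathrm{im}(\partial_1^{\top}) \perp \mathrm{im}(\partial_2)$, and the sum written as $\oplus$ is an orthogonal, internal direct sum. This orthogonality is also part of the conclusion of Theorem~\ref{thm:hodge}.

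Next we identify $\mathcal{M}_K^{\perp}$ with $\ker(L_1)$. An edge signal $h$ is orthogonal to $\mathcal{M}_K$ if and only if $\partial_1 h = 0$ and $\partial_2^{\top} h = 0$. This in turn holds if and only if
\[
  \langle h, L_1 h\rangle = \norm{\partial_1 h}^2 + \norm{\partial_2^{\top} h}^2 = 0 .
\]
Since $L_1$ is positive semi-definite, that equality is equivalent to $L_1 h = 0$. Combining this with Theorem~\ref{thm:hodge}, we get the orthogonal splitting $C_1(K) = \mathcal{M}_K \oplus \ker(L_1)$. Under it, $b = m^\star + b_{\mathrm{harm}}$ with $m^\star = b_{\mathrm{grad}} + b_{\mathrm{curl}} \in \mathcal{M}_K$ and $b_{\mathrm{harm}} \perp \mathcal{M}_K$.

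The minimization is then a Pythagorean argument. For any $m \in \mathcal{M}_K$,
\[
  b - m = b_{\mathrm{harm}} + (m^\star - m),
\]
and the two terms on the right are orthogonal. Therefore
\[
  \norm{b - m}^2 = \norm{b_{\mathrm{harm}}}^2 + \norm{m^\star - m}^2 \ge \norm{b_{\mathrm{harm}}}^2 .
\]
Equality holds exactly when $m = m^\star$. This gives both the value of the infimum and the uniqueness of the minimizer. The infimum is attained because $\mathcal{M}_K$ is a finite-dimensional, hence closed, subspace; with the explicit decomposition this is immediate anyway.

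I expect no real obstacle. The only step needing care is the identification $\mathcal{M}_K^{\perp} = \ker(L_1)$, which requires positive semi-definiteness to pass from $\langle h, L_1 h\rangle = 0$ to $L_1 h = 0$. If Theorem~\ref{thm:hodge} is taken as given with its orthogonality claims, this step is already supplied, and the proof reduces to the Pythagorean identity above.
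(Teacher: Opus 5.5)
Your proposal is correct and matches the paper's proof: the paper also takes $\mathcal{M}_K^{\perp} = \ker(L_1)$ from the Hodge decomposition theorem, writes $b - m = (b_{\mathrm{grad}} + b_{\mathrm{curl}} - m) + b_{\mathrm{harm}}$, and concludes by Pythagoras, with equality exactly at $m = m^\star$. Your re-derivation of the orthogonality $\mathrm{im}(\partial_1^{\top}) \perp \mathrm{im}(\partial_2)$ and of $\mathcal{M}_K^{\perp} = \ker(\partial_1) \cap \ker(\partial_2^{\top}) = \ker(L_1)$ repeats the steps the paper uses in its proof of the Hodge theorem.
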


In particular, the harmonic energy fraction $\rho_{\mathrm{harm}}(\ell)$
in Eq.~\ref{eq:harmonic-frac} is the fraction of layer-$\ell$
merge-barrier energy that no vertex-potential or triangle-boundary
explanation can capture. A one-line orthogonal-projection proof is in
App.~\ref{app:harmonic-residual}; a conditional corollary linking this
residual to the calibration KL loss $\mathcal{L}(S, \pi)$ under an
edge-exposure linearization is in App.~\ref{app:kl-corollary}, where it
is used only as interpretation, not as an assumption in the algorithm.

The three components have a transparent interpretation specific to the
mergeability complex. The gradient component $b_{\mathrm{grad}}$ is the lift
to edges of a vertex-supported potential, and answers the question
\emph{"is expert $i$ broadly mergeable?"} as a per-expert score; pairwise
methods~\citep{muzio2025reap,xu2025stun}
implicitly recover only this component. The curl component
$b_{\mathrm{curl}}$ is the lift to edges of a triangle-supported potential,
and captures whether \emph{around triangle $(i,j,k)$ the merge barriers close
up coherently}; hypergraph cuts \citep{zhou2007hypergraph} and triplet
penalties \citep{schroff2015triplet, wang2014triplet} aggregate this
component into a sum-of-pairs surrogate. The harmonic component
$b_{\mathrm{harm}} \in \ker(L_1)$ is the residue that no triangulation of
pairwise scores can flatten: by Proposition~\ref{prop:harmonic-residual}
it is the unique edge-barrier component irreducible under any
vertex-potential plus triangle-boundary explanation, and the ambient
harmonic subspace $\ker(L_1)$ has dimension $\beta_1(K)$
\citep{eckmann1944, hatcher2002}. None of the prior pairwise or triplet
pruners isolates it.

\subsection{Per-layer diagnostic across three MoE families}
\label{sub:diagnostic}

\begin{figure}[!t]
  \centering
  \includegraphics[width=\linewidth]{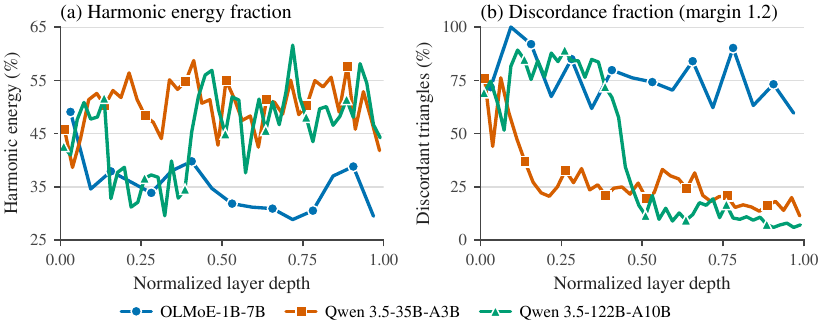}
  \caption{Harmonic energy fraction $\rho_{\mathrm{harm}}(\ell)$
  (Eq.~\ref{eq:harmonic-frac}) and discordance fraction $\delta(\ell)$
  (Eq.~\ref{eq:discordance}) at every layer of three
  production MoE families. Both signals stay non-trivial throughout depth.}
  \vspace{-10pt}
  \label{fig:diagnostic}
\end{figure}

Two complementary per-layer diagnostics test whether the harmonic
component is non-trivial in production sparse MoEs. Let
$b^{(\ell)} \in \mathbb{R}^{|E|}$ denote the edge-supported pairwise-KL
merge-barrier signal at layer $\ell$. The first is the
\emph{harmonic energy fraction}
\begin{equation}
  \label{eq:harmonic-frac}
  \rho_{\mathrm{harm}}(\ell)
    \;=\;
    \frac{\norm{P_{\mathrm{harm}}\,b^{(\ell)}}^2}
         {\norm{b^{(\ell)}}^2}
  \qquad (b^{(\ell)} \neq 0),
\end{equation}
a global property of the simplicial Laplacian. The second is the
combinatorial \emph{discordance fraction}
\begin{equation}
  \label{eq:discordance}
  \delta(\ell)
    \;=\;
    \frac{1}{|T|}\,\Big|\big\{\{i,j,k\} \in T :
      b_{ijk}^{(\ell)} > 1.2 \cdot \max\!\big(b_{ij}^{(\ell)}, b_{ik}^{(\ell)}, b_{jk}^{(\ell)}\big)
    \big\}\Big|,
\end{equation}
the fraction of sampled triangles whose joint merge barrier exceeds the
worst pairwise barrier by a $20\%$ margin. Both live in $[0, 1]$ and are
scale-invariant per layer; $\rho_{\mathrm{harm}}$ is a spectral property
tied to $\ker(L_1)$, while $\delta$ is a combinatorial stress test that
does not require diagonalizing $L_1$.

Figure~\ref{fig:diagnostic} reports both diagnostics on every layer of
OLMoE-1B-7B \citep{muennighoff2024olmoe}, Qwen3.5-35B-A3B and
Qwen3.5-122B-A10B \citep{qwen3.5}. The harmonic energy fraction stays
non-trivial at every layer of every model, with the two Qwen variants in a
slightly higher and tighter band than OLMoE; the discordance fraction is
also non-trivial throughout, with OLMoE the most discordant model and the
two Qwen models tapering in deeper layers. The two diagnostics agree in
direction and disagree in shape, giving independent evidence of an
irreducible higher-order obstruction; per-layer ranges, gradient/curl
companion curves, and the Euler-Poincar\'{e} count for the first Betti
number $\beta_1(K)$ (which is structurally pinned by $n$ and $|T|$ and
thus demoted from per-layer analysis) are reported in
App.~\ref{app:extended-diagnostic} and~\ref{app:laplacian}.

By Proposition~\ref{prop:harmonic-residual}, a non-trivial
$\rho_{\mathrm{harm}}(\ell)$ measures the fraction of layer-$\ell$
merge-barrier energy that no vertex-potential or triangle-boundary
model can explain, so Figure~\ref{fig:diagnostic} quantifies the
lower-order-irreducible barrier energy that HodgeCover explicitly
targets. We \emph{designate} the top-$p\%$ edges of $K$
ranked by
$|b_{\mathrm{harm}, e}|$, together with the top-$q_T\%$ triangles ranked
by $|b_{ijk}|$, as coverage constraints for survivor selection in the
next section, and call the resulting selector HodgeCover
(Section~\ref{sec:method}).

\section{Method}
\label{sec:method}

\begin{figure*}[!t]
  \centering
  \includegraphics[width=\linewidth]{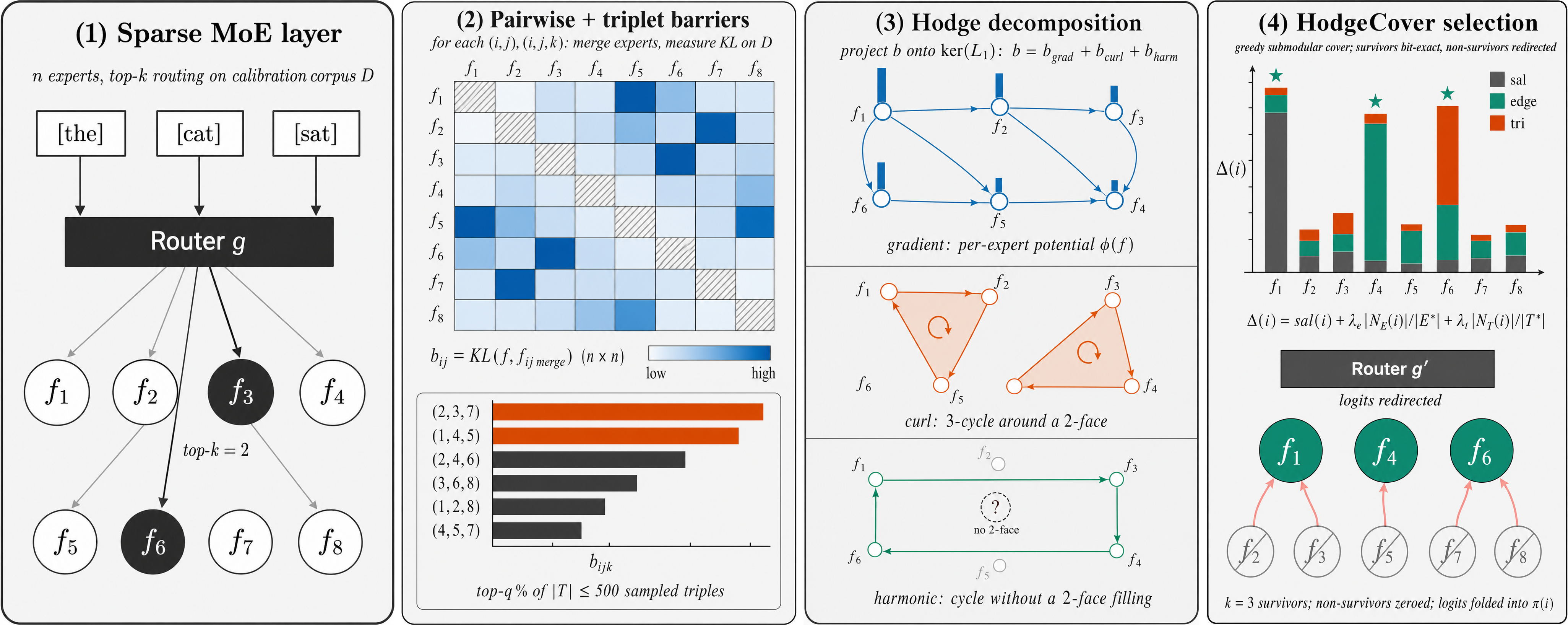}
  \caption{The four stages of HodgeCover: token-level routing, pairwise
  + triplet KL barriers on the calibration corpus, Hodge decomposition
  of the edge-supported barrier signal, and submodular survivor
  selection with a Hodge-weighted router redirect.}
  \vspace{-10pt}
  \label{fig:framework}
\end{figure*}

We turn the diagnostic of Section~\ref{sec:complex} into a learning-free
expert-selection algorithm, \emph{HodgeCover}
(\S\ref{sub:hodgecover}), and a hybrid \emph{HodgeCover+Wanda}
that composes it with off-the-shelf weight pruning
(\S\ref{sub:hodgewanda}).

\subsection{HodgeCover}
\label{sub:hodgecover}

Let $k < n$ be the target survivor count from the layer's $n$ experts.
HodgeCover selects a survivor set $S^\star \subseteq V$ of size $k$ in
three steps.

\textbf{Critical Simplices.} From the Hodge decomposition
$b = b_{\mathrm{grad}} + b_{\mathrm{curl}} + b_{\mathrm{harm}}$
(Theorem~\ref{thm:hodge}) we extract the
\emph{harmonic-critical edge set} $E^\star \subseteq E$ as the top-$p\%$
edges ranked by $|b_{\mathrm{harm},e}|$, and the
\emph{triplet-critical triangle set} $T^\star \subseteq T$ as the
top-$q_T\%$ triangles ranked by raw $|b_{ijk}|$ rather than by a $2$-harmonic projection. The $2$-harmonic kernel of the triangle-supported signal
$c \in C_2(K)$ requires assembling the next Laplacian $L_2$ and adds
no information here, since on the curated $T$ the
top-$q_T\%$ triplet-barrier triangles already mark the regions where
joint merging amplifies pairwise cost
(Section~\ref{sub:diagnostic}, App.~\ref{app:triangle-set}). Both $p$ and $q_T$ are scalar
percentile thresholds reported in App.~\ref{app:hyperparameters}.

\textbf{Coverage Objective.} Let
$N_E(i) = \{e \in E^\star : i \in e\}$ and
$N_T(i) = \{\sigma \in T^\star : i \in \sigma\}$ be the per-expert
incidence sets, and write
$C_E(S) = \bigcup_{i \in S} N_E(i)$,
$C_T(S) = \bigcup_{i \in S} N_T(i)$ for what a candidate
$S \subseteq V$ covers. HodgeCover defines the selection objective
\begin{equation}
  \label{eq:hc-objective}
  \Phi(S)
    \;:=\;
    \underbrace{\sum_{i \in S}\mathrm{sal}(i)}_{\text{saliency}}
    \;+\;
    \lambda_e\;
    \underbrace{\frac{|C_E(S)|}{|E^\star|}}_{\text{harmonic-edge coverage}}
    \;+\;
    \lambda_t\;
    \underbrace{\frac{|C_T(S)|}{|T^\star|}}_{\text{triplet-triangle coverage}},
\end{equation}
with $\lambda_e, \lambda_t \ge 0$ and saliencies $\mathrm{sal}(i) \in [0, 1]$
given by the REAP output-saliency score \citep{muzio2025reap}
normalized per layer (with the convention $|C_E(S)|/|E^\star| = 0$ when
$E^\star = \emptyset$, and likewise for $T^\star$). The objective $\Phi$
is non-negative monotone submodular
(Proposition~\ref{prop:topological-coverage}); the standard greedy
procedure adds, at each step, the expert maximizing the marginal score
\begin{equation}
  \label{eq:greedy-marginal}
  \Delta(i \mid S)
    \;=\;
    \mathrm{sal}(i)
    \;+\;
    \lambda_e\,\frac{|N_E(i) \setminus C_E(S)|}{|E^\star|}
    \;+\;
    \lambda_t\,\frac{|N_T(i) \setminus C_T(S)|}{|T^\star|},
\end{equation}
and terminates after $k$ additions. The denominators
$|E^\star|, |T^\star|$ keep all three terms in $[0, 1]$ across layers
and models, so $\lambda_e, \lambda_t$ are layer-agnostic. The
harmonic-edge term targets the irreducible residual identified by
Proposition~\ref{prop:harmonic-residual}; the triplet-triangle term
protects high-cost joint merges that enter through the curated
triangle set $T^\star$.

\begin{proposition}[Topological coverage of HodgeCover]
\label{prop:topological-coverage}
The HodgeCover selection objective $\Phi$ in
Eq.~\ref{eq:hc-objective} is a non-negative monotone submodular set
function in $S \subseteq V$ subject to the cardinality constraint
$|S| = k$, and the greedy survivor set $S^\star$ produced by
iterating Eq.~\ref{eq:greedy-marginal} from the empty set
($\mathrm{SE} = \emptyset$ in Algorithm~\ref{alg:hodgecover}, the
setting used for every reported run) satisfies
\begin{equation}
  \label{eq:topological-coverage}
  \Phi(S^\star)
    \;\ge\;
    \big(1 - 1/e\big)\,\max_{S \subseteq V,\,|S| = k}\,\Phi(S).
\end{equation}
\end{proposition}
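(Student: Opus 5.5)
The plan is to verify the three structural properties of $\Phi$ term by term and then invoke the classical greedy guarantee of \citet{nemhauser1978submodular} for maximizing a non-negative monotone submodular function under a cardinality constraint.

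\textbf{Structural properties.} First, write $\Phi = \Phi_{\mathrm{sal}} + \lambda_e \Phi_E + \lambda_t \Phi_T$. Here $\Phi_{\mathrm{sal}}(S) = \sum_{i\in S}\mathrm{sal}(i)$, $\Phi_E(S) = |C_E(S)|/|E^\star|$, and $\Phi_T(S) = |C_T(S)|/|T^\star|$; by the stated convention, $\Phi_E \equiv 0$ if $E^\star=\emptyset$, and likewise for $\Phi_T$.
\begin{itemize}
\item Since non-negative combinations of non-negative monotone submodular functions inherit all three properties, it suffices to check each summand.
\item $\Phi_{\mathrm{sal}}$ is modular with non-negative weights because $\mathrm{sal}(i)\in[0,1]$. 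It is therefore non-negative, monotone, and (trivially) submodular.
\item $\Phi_E$ and $\Phi_T$ are set-cover functions $S \mapsto |\bigcup_{i\in S} N(i)|$, scaled by a positive constant. Non-negativity and monotonicity are immediate from $C(S)\subseteq C(S')$ for $S\subseteq S'$.
\item For submodularity of the cover terms, take $S\subseteq S'$ and $i\notin S'$. The marginal gain is $|N(i)\setminus C(S)| \ge |N(i)\setminus C(S')|$, again because $C(S)\subseteq C(S')$.
\end{itemize}
The same computation confirms that Eq.~\ref{eq:greedy-marginal} is exactly $\Phi(S\cup\{i\})-\Phi(S)$. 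So the procedure in Algorithm~\ref{alg:hodgecover}, started from $\mathrm{SE}=\emptyset$, is the standard greedy algorithm on $\Phi$. We also note $\Phi(\emptyset)=0$.

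\textbf{Approximation bound.} Let $S_t$ be the greedy set after $t$ steps and $O$ an optimal $k$-set. The standard argument proceeds in three steps.
\begin{itemize}
\item By monotonicity and submodularity,
\[
\Phi(O) \le \Phi(S_t\cup O) \le \Phi(S_t) + \sum_{o\in O}\Delta(o\mid S_t) \le \Phi(S_t) + k\,\max_i \Delta(i\mid S_t).
\]
\item The greedy choice attains that maximum, so $\Phi(O)-\Phi(S_{t+1}) \le (1-1/k)\bigl(\Phi(O)-\Phi(S_t)\bigr)$.
\item Iterating $k$ times from $\Phi(S_0)=\Phi(\emptyset)=0$ gives $\Phi(S_k)\ge \bigl(1-(1-1/k)^k\bigr)\Phi(O)\ge(1-1/e)\Phi(O)$.
\end{itemize}
Because $\Phi$ is monotone, the maximum over $|S|\le k$ equals the maximum over $|S|=k$ (as $k<n$). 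This matches Eq.~\ref{eq:topological-coverage}.

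\textbf{Expected obstacle.} There is no deep obstacle. The only care needed is bookkeeping:
\begin{itemize}
\item the empty-denominator convention must preserve each property, which it does since the zero function qualifies trivially;
\item the greedy must start from $\emptyset$, since a nonempty seed set would change the guarantee;
\item ties in the argmax do not matter.
\end{itemize}
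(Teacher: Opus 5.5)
Your proposal is correct and follows the paper's proof essentially line for line. You use the same split of $\Phi$ into a non-negative modular saliency term plus two normalized maximum-coverage terms, the same diminishing-returns inequality from $C(S)\subseteq C(S')$, and the Nemhauser--Wolsey--Fisher greedy guarantee from $S_0=\emptyset$. The only difference is that you sketch the standard $(1-1/k)$ contraction argument that the paper simply cites. In that sketch, read $\Delta(o\mid S_t)$ as the true marginal gain $\Phi(S_t\cup\{o\})-\Phi(S_t)$, which is zero for $o\in S_t$, and take the maximum over $V\setminus S_t$. The literal formula in Eq.~\ref{eq:greedy-marginal} would return $\mathrm{sal}(o)$ for already-selected experts and break the step ``the greedy choice attains that maximum.''
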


A self-contained proof and the formal definition of $\Phi$'s two
coverage components as set-cover instances are in
App.~\ref{app:topological-coverage-proof}.

\begin{remark}[Inexpressibility for per-expert rankings]
\label{rem:inexpressibility}
The harmonic-critical incidence is set-valued in $S$: when critical
edges are shared between experts, $|C_E(S)|$ cannot be expressed as
a top-$k$ rule on any individual per-expert score
$h : V \to \mathbb{R}$. Rankings such as REAP \citep{muzio2025reap},
REAM centroid selection \citep{ream2024}, and MC-SMoE
\citep{mcsmoe2024} all factor through such a vertex score, so they
can match $\Phi$ only in the trivially modular case where
$\{N_E(i)\}_{i \in V}$ are pairwise disjoint;
App.~\ref{app:topological-coverage-proof} gives a constructive
counterexample on $K_4$.
\end{remark}

\textbf{Router Redirect for non-Survivors.} Let
$N^\star = V \setminus S^\star$ be the dropped experts. We use the
REAP redirect surgery \citep{muzio2025reap}: expert weights of each $i \in N^\star$ are erased and its router logit is
folded into a single survivor $\pi(i) \in S^\star$, so the gating
distribution renormalizes onto $S^\star$ at no extra forward pass.
Where REAP picks $\pi(i)$ to be the cosine-nearest survivor of expert
$i$ in router-key space, we pick it to be the nearest survivor under a
\emph{Hodge-weighted barrier} that gives extra weight to harmonic edges:
\begin{equation}
  \label{eq:hc-redirect}
  \pi(i)
    \;=\;
    \argmin_{j \in S^\star}\;
    b_{ij}
    \cdot
    \Big(1 + \alpha\,
      \tfrac{|b_{\mathrm{harm},\{i,j\}}|}{\|b\|}\Big),
    \qquad i \in N^\star,
\end{equation}
where the strength $\alpha > 0$ controls how much edges that carry
their own harmonic mass are penalized: routing non-survivor mass
through such an edge would re-introduce the very obstruction that
the survivor set was selected to cover. We adopt the convention that
the harmonic factor is $1$ when $\|b\| = 0$ (equivalently, when
$b_{\mathrm{harm}} = 0$). Crucially, \emph{survivors are
never perturbed}: their gate vectors and expert weights are kept
bit-exact, so the post-compression layer is the survivor sub-MoE with
a renormalized router. These three steps run independently per MoE layer; the
cross-layer budget allocator that turns a global compression rate into
the per-layer counts $\{k_\ell\}_\ell$ is given in
App.~\ref{app:algorithm}.

\label{sub:costs}
The full HodgeCover pipeline is single-pass and learning-free. Plan-time is dominated by the $O\big((n^2+|T|)\,|\mathcal{D}|\,d^2\big)$
barrier sweep ($d$ = expert hidden dim, $|\mathcal{D}|$ = calibration
tokens). Algorithm pseudocode, hyperparameter defaults,
the cross-layer allocator, and a baseline-axis
comparison grid (Table~\ref{tab:baseline-axis}) versus other
learning-free MoE compressors benchmarked in \S\ref{sec:exp} are
in App.~\ref{app:algorithm} and App.~\ref{app:hyperparameters}.

\subsection{HodgeCover+Wanda}
\label{sub:hodgewanda}

HodgeCover targets the expert-count axis; we obtain a hybrid
compressor, \emph{HodgeCover+Wanda}, by composing it with an
off-the-shelf one-shot weight pruner. Stage~1 runs HodgeCover at a
fixed expert-count drop rate $r_1$, yielding the survivor set
$S^\star$ and the redirected router. Stage~2 applies unstructured
Wanda \citep{sun2024wanda} on the survivor weights at the residual
sparsity needed to reach the total target rate, reusing the same
calibration corpus $\mathcal{D}$ at no additional forward-pass.
Stage~1 is the contribution; Stage~2 is plug-and-play (any
calibration-aware unstructured weight pruner could replace Wanda).
The pseudocode, the residual-sparsity protocol that selects
$r_2$ given a Stage-1 rate $r_1$, and the per-cell numerical values
of $r_1$ and $r_2$ used are in App.~\ref{app:hodgecover-pseudocode} and~\ref{app:wanda-stage}.

\section{Experiments}
\label{sec:exp}

\subsection{Setup}
\label{sub:exp-setup}

We benchmark three open-weight sparse MoE backbones spanning two
expert-count regimes and two scales:
OLMoE-1B-7B \citep{muennighoff2024olmoe} (16 MoE layers, 64 experts),
Qwen 3.5-35B-A3B and Qwen 3.5-122B-A10B \citep{qwen3.5}
(40 / 48 MoE layers, 256 experts).
Each backbone is compressed at $33\%$ and $66\%$ of expert count.
Each baseline follows its source paper's released implementation up to
the calibration-footing reductions detailed in
App.~\ref{app:baseline-impl}: REAP \citep{muzio2025reap}, REAM
\citep{ream2024}, MC-SMoE~\citep{mcsmoe2024} (w/o KD reduction), and STUN~\citep{xu2025stun}. HodgeCover (ours) uses the same
uniform per-layer convention as REAP and REAM
(App.~\ref{app:method-detail}). Calibration uses $2{,}048$ C4-train
tokens \citep{raffel2020c4}; evaluation reports WikiText-103
perplexity \citep{merity2017wikitext}, C4 perplexity, and nine
downstream tasks via the LM Evaluation Harness \citep{gao2024lmeval}:
ARC-c and ARC-e \citep{clark2018arc}, BoolQ \citep{clark2019boolq},
HellaSwag \citep{zellers2019hellaswag}, MMLU 5-shot
\citep{hendrycks2021mmlu}, PIQA \citep{bisk2020piqa}, TruthfulQA-MC2
\citep{lin2022truthfulqa}, WinoGrande \citep{sakaguchi2021winogrande},
and GSM8K 8-shot \citep{cobbe2021gsm8k}. We summarize the nine
downstream-task accuracies by their unweighted arithmetic mean, the
\emph{downstream-task average} (DS-Avg).
Beyond the four published baselines we also report four ablations of
HodgeCover that share its calibration corpus and per-layer expert
budget but replace the topological objective with a no-triangle,
soft-triplet-penalty, hard-triplet-veto, or topology-blind alternative
\citep{zhou2007hypergraph}; precise constructions, hardware, harness
configuration, and per-method allocator descriptions are in
App.~\ref{app:method-detail} and App.~\ref{app:expsetup}.

\subsection{Main results}
\label{sub:exp-main}

% Merged main results across 3 backbones (Model + Rate columns)
\begin{table*}[!t]
  \centering
  \footnotesize
  \setlength{\tabcolsep}{3pt}
  \renewcommand{\arraystretch}{1.0}
  \caption{Per-task downstream accuracy (\%) plus WikiText-103 / C4 perplexity, with the rightmost column reporting the unweighted nine-task DS-Avg. \textbf{Bold} = best in column within the same axis; \underline{underline} = second-best in the expert-count axis.}
  \label{tab:main-results}
  \resizebox{\linewidth}{!}{%
  \begin{tabular}{ll|l|cc|ccccccccc|c}
    \toprule
    Model & Rate & Method & Wiki & C4 & ARC-c & ARC-e & BoolQ & HellaS & MMLU & PIQA & TQA & WinoG & GSM8K & DS-Avg \\
    \midrule
    \multirow{17}{*}{\makecell[l]{OLMoE\\1B-7B}} & -- & \emph{Uncompressed} & \emph{8.17} & \emph{13.64} & \emph{48.9} & \emph{76.2} & \emph{75.0} & \emph{77.0} & \emph{52.8} & \emph{80.7} & \emph{35.6} & \emph{68.7} & \emph{9.8} & \emph{58.3} \\
    \cmidrule(lr){2-15}
     & \multirow{8}{*}{$33\%$} & \multicolumn{12}{l}{\textit{Expert-count reduction methods}} \\
     &  & \quad REAP \citep{muzio2025reap} & \textbf{19.28} & \textbf{19.62} & \underline{28.7} & \underline{42.5} & \textbf{65.3} & \underline{63.8} & \underline{27.4} & \underline{66.8} & 35.5 & \underline{64.8} & \underline{1.7} & \underline{44.1} \\
     &  & \quad REAM \citep{ream2024} & 33.00 & 26.64 & 27.1 & 39.3 & \underline{63.3} & 53.0 & \textbf{27.5} & 63.4 & \underline{41.5} & 60.9 & 1.3 & 41.9 \\
     &  & \quad MC-SMoE \citep{mcsmoe2024} & 39.56 & 39.87 & 26.3 & 41.8 & 62.3 & 43.3 & 27.2 & 65.1 & 39.2 & 50.3 & 1.2 & 39.6 \\
     &  & \quad \textbf{HodgeCover (ours)} & \underline{23.50} & \underline{22.91} & \textbf{32.8} & \textbf{50.3} & 62.8 & \textbf{67.0} & 25.6 & \textbf{73.0} & \textbf{41.7} & \textbf{64.9} & \textbf{2.2} & \textbf{46.7} \\
     &  & \multicolumn{12}{l}{\textit{Hybrid methods}} \\
     &  & \quad STUN+Wanda \citep{xu2025stun,sun2024wanda} & \textbf{11.85} & 18.43 & \textbf{41.5} & 67.2 & 66.4 & 69.2 & 34.6 & 77.3 & 33.3 & 64.8 & \textbf{2.6} & 50.8 \\
     &  & \quad \textbf{HodgeCover+Wanda (ours)} & 13.92 & \textbf{16.69} & 40.9 & \textbf{67.3} & \textbf{73.0} & \textbf{73.6} & \textbf{38.7} & \textbf{78.1} & \textbf{35.1} & \textbf{67.8} & 2.4 & \textbf{53.0} \\
    \cmidrule(lr){2-15}
     & \multirow{8}{*}{$66\%$} & \multicolumn{12}{l}{\textit{Expert-count reduction methods}} \\
     &  & \quad REAP \citep{muzio2025reap} & \textbf{864.7} & \textbf{180.8} & \textbf{24.5} & \textbf{30.9} & \textbf{48.0} & \textbf{32.3} & 25.1 & \textbf{54.2} & \textbf{50.7} & \textbf{53.2} & \underline{0.0} & \textbf{35.4} \\
     &  & \quad REAM \citep{ream2024} & 4,131.2 & 1,026.3 & \underline{24.3} & 30.4 & 42.8 & 28.8 & \underline{25.3} & 53.2 & \underline{50.2} & 50.9 & 0.0 & \underline{34.0} \\
     &  & \quad MC-SMoE \citep{mcsmoe2024} & 1,443.7 & 835.2 & 23.6 & \underline{30.8} & \underline{43.3} & 27.9 & 23.5 & \underline{54.1} & 50.0 & 50.8 & 0.0 & 33.8 \\
     &  & \quad \textbf{HodgeCover (ours)} & \underline{1,134.9} & \underline{535.5} & 24.2 & 29.5 & 38.6 & \underline{30.0} & \textbf{25.4} & 52.7 & 47.7 & \underline{51.2} & \textbf{0.1} & 33.3 \\
     &  & \multicolumn{12}{l}{\textit{Hybrid methods}} \\
     &  & \quad STUN+Wanda \citep{xu2025stun,sun2024wanda} & 22.11 & 31.01 & 32.9 & \textbf{58.0} & 62.5 & 55.3 & 26.9 & 71.7 & \textbf{36.0} & 61.1 & 1.8 & 45.1 \\
     &  & \quad \textbf{HodgeCover+Wanda (ours)} & \textbf{18.32} & \textbf{21.18} & \textbf{33.7} & 56.6 & \textbf{72.6} & \textbf{64.4} & \textbf{35.0} & \textbf{72.6} & 35.2 & \textbf{64.5} & \textbf{2.5} & \textbf{48.6} \\
    \midrule
    \multirow{17}{*}{\makecell[l]{Qwen 3.5\\35B-A3B}} & -- & \emph{Uncompressed} & \emph{7.25} & \emph{13.19} & \emph{62.1} & \emph{79.5} & \emph{87.9} & \emph{82.5} & \emph{84.9} & \emph{83.0} & \emph{53.5} & \emph{74.7} & \emph{86.9} & \emph{77.2} \\
    \cmidrule(lr){2-15}
     & \multirow{8}{*}{$33\%$} & \multicolumn{12}{l}{\textit{Expert-count reduction methods}} \\
     &  & \quad REAP \citep{muzio2025reap} & \textbf{9.65} & \textbf{13.71} & \textbf{57.1} & \textbf{76.0} & 90.0 & 81.5 & 79.8 & \underline{82.0} & 53.2 & 75.1 & \textbf{87.6} & \underline{75.8} \\
     &  & \quad REAM \citep{ream2024} & \underline{9.91} & 14.14 & 54.4 & 73.1 & \underline{90.5} & 80.6 & 78.8 & 81.2 & 53.8 & 74.0 & \underline{86.8} & 74.8 \\
     &  & \quad MC-SMoE \citep{mcsmoe2024} & 10.90 & 14.97 & 45.1 & 64.6 & 89.2 & 78.3 & 62.5 & 81.0 & 51.3 & \textbf{76.1} & 83.9 & 70.2 \\
     &  & \quad \textbf{HodgeCover (ours)} & 9.97 & \underline{13.75} & \underline{56.3} & \underline{73.9} & \textbf{91.3} & \textbf{82.5} & \textbf{80.7} & \textbf{82.9} & \textbf{54.7} & \underline{76.3} & 85.1 & \textbf{75.9} \\
     &  & \multicolumn{12}{l}{\textit{Hybrid methods}} \\
     &  & \quad STUN+Wanda \citep{xu2025stun,sun2024wanda} & \textbf{8.86} & 15.61 & 50.9 & 74.4 & \textbf{89.2} & 72.1 & 76.7 & 76.1 & 51.0 & 70.6 & 33.2 & 66.0 \\
     &  & \quad \textbf{HodgeCover+Wanda (ours)} & 9.18 & \textbf{13.42} & \textbf{55.9} & \textbf{76.0} & 89.0 & \textbf{81.8} & \textbf{82.6} & \textbf{82.3} & \textbf{53.7} & \textbf{74.2} & \textbf{89.6} & \textbf{76.1} \\
    \cmidrule(lr){2-15}
     & \multirow{8}{*}{$66\%$} & \multicolumn{12}{l}{\textit{Expert-count reduction methods}} \\
     &  & \quad REAP \citep{muzio2025reap} & \textbf{14.97} & \underline{19.29} & \underline{46.0} & \textbf{69.8} & \textbf{88.2} & \underline{71.4} & \textbf{58.7} & \underline{77.3} & \underline{48.2} & \underline{72.5} & \underline{65.3} & \underline{66.4} \\
     &  & \quad REAM \citep{ream2024} & 16.80 & 22.32 & 42.7 & 64.4 & 87.2 & 67.9 & 51.7 & 74.1 & 46.9 & 70.9 & 39.0 & 60.5 \\
     &  & \quad MC-SMoE \citep{mcsmoe2024} & 20.64 & 24.25 & 29.0 & 47.5 & 85.1 & 54.9 & 32.7 & 69.4 & 47.2 & 67.6 & 34.6 & 52.0 \\
     &  & \quad \textbf{HodgeCover (ours)} & \underline{15.13} & \textbf{18.86} & \textbf{46.4} & \underline{67.5} & \underline{88.1} & \textbf{72.5} & \underline{54.5} & \textbf{78.1} & \textbf{51.7} & \textbf{73.7} & \textbf{67.5} & \textbf{66.7} \\
     &  & \multicolumn{12}{l}{\textit{Hybrid methods}} \\
     &  & \quad STUN+Wanda \citep{xu2025stun,sun2024wanda} & 11.77 & 20.05 & 49.6 & 76.6 & 89.5 & 63.8 & 69.2 & 72.9 & 47.1 & 67.5 & 22.1 & 62.0 \\
     &  & \quad \textbf{HodgeCover+Wanda (ours)} & \textbf{10.25} & \textbf{15.22} & \textbf{56.5} & \textbf{78.5} & \textbf{90.8} & \textbf{76.8} & \textbf{78.0} & \textbf{80.8} & \textbf{50.3} & \textbf{74.1} & \textbf{85.5} & \textbf{74.6} \\
    \midrule
    \multirow{17}{*}{\makecell[l]{Qwen 3.5\\122B-A10B}} & -- & \emph{Uncompressed} & \emph{4.50} & \emph{12.33} & \emph{63.6} & \emph{80.5} & \emph{86.3} & \emph{85.9} & \emph{88.1} & \emph{83.4} & \emph{51.8} & \emph{74.6} & \emph{88.3} & \emph{78.0} \\
    \cmidrule(lr){2-15}
     & \multirow{8}{*}{$33\%$} & \multicolumn{12}{l}{\textit{Expert-count reduction methods}} \\
     &  & \quad REAP \citep{muzio2025reap} & \underline{7.21} & \textbf{12.78} & \textbf{63.3} & \underline{81.6} & 71.8 & \underline{84.8} & \textbf{84.7} & \textbf{83.3} & \underline{53.2} & \underline{77.1} & 81.7 & \textbf{75.7} \\
     &  & \quad REAM \citep{ream2024} & 7.52 & 12.98 & \underline{61.9} & \textbf{81.7} & \underline{74.6} & 83.5 & 83.5 & \underline{83.1} & 52.9 & 75.6 & \underline{82.6} & 75.5 \\
     &  & \quad MC-SMoE \citep{mcsmoe2024} & 9.44 & 13.96 & 49.2 & 72.4 & \textbf{80.5} & 81.8 & 71.6 & 82.1 & \textbf{54.0} & 75.6 & 82.4 & 72.2 \\
     &  & \quad \textbf{HodgeCover (ours)} & \textbf{7.21} & \underline{12.81} & 61.7 & 80.1 & 74.4 & \textbf{84.9} & \underline{83.9} & 83.1 & 51.6 & \textbf{77.3} & \textbf{84.5} & \underline{75.7} \\
     &  & \multicolumn{12}{l}{\textit{Hybrid methods}} \\
     &  & \quad STUN+Wanda \citep{xu2025stun,sun2024wanda} & 6.35 & 14.48 & \textbf{62.5} & \textbf{84.6} & \textbf{88.8} & 78.8 & 83.1 & 79.8 & \textbf{52.6} & 74.6 & 74.8 & 75.5 \\
     &  & \quad \textbf{HodgeCover+Wanda (ours)} & \textbf{5.99} & \textbf{12.49} & 62.2 & 79.9 & 84.2 & \textbf{85.6} & \textbf{85.5} & \textbf{83.4} & 51.2 & \textbf{77.5} & \textbf{90.3} & \textbf{77.8} \\
    \cmidrule(lr){2-15}
     & \multirow{8}{*}{$66\%$} & \multicolumn{12}{l}{\textit{Expert-count reduction methods}} \\
     &  & \quad REAP \citep{muzio2025reap} & \textbf{12.24} & \underline{17.28} & \underline{50.2} & \textbf{71.3} & 77.9 & \underline{75.1} & \textbf{70.0} & \underline{79.8} & \underline{50.5} & \underline{75.0} & \textbf{80.2} & \underline{70.0} \\
     &  & \quad REAM \citep{ream2024} & 13.94 & 19.67 & 46.8 & 67.6 & \underline{83.7} & 70.5 & 65.4 & 75.5 & 48.8 & 73.6 & 30.4 & 62.5 \\
     &  & \quad MC-SMoE \citep{mcsmoe2024} & 18.21 & 22.82 & 33.6 & 50.5 & \textbf{86.0} & 62.9 & 34.6 & 72.4 & 44.0 & 72.5 & 34.0 & 54.5 \\
     &  & \quad \textbf{HodgeCover (ours)} & \underline{12.46} & \textbf{17.21} & \textbf{52.2} & \underline{71.1} & 75.3 & \textbf{77.0} & \underline{69.0} & \textbf{80.1} & \textbf{51.5} & \textbf{76.4} & \underline{78.1} & \textbf{70.1} \\
     &  & \multicolumn{12}{l}{\textit{Hybrid methods}} \\
     &  & \quad STUN+Wanda \citep{xu2025stun,sun2024wanda} & 8.77 & 17.50 & 59.6 & \textbf{84.8} & \textbf{82.1} & 71.0 & 78.3 & 76.9 & 49.6 & 73.0 & 62.3 & 70.8 \\
     &  & \quad \textbf{HodgeCover+Wanda (ours)} & \textbf{7.42} & \textbf{13.86} & \textbf{61.2} & 82.3 & 79.3 & \textbf{80.1} & \textbf{83.2} & \textbf{81.9} & \textbf{51.1} & \textbf{76.4} & \textbf{87.9} & \textbf{75.9} \\
    \bottomrule
  \end{tabular}
  }% end resizebox
  \vspace{-12.5pt}
\end{table*}

HodgeCover+Wanda has the best C4 perplexity on every
$($model, rate$)$ cell in Table~\ref{tab:main-results} and the best
WikiText perplexity on every $66\%$ cell, with WikiText margins of
$1.4$--$3.8$ points over STUN+Wanda at $66\%$ on
the three backbones; at $33\%$, STUN+Wanda is ahead on WikiText for
the two smaller backbones (OLMoE and Qwen 3.5-35B-A3B) at
the cost of $1.7$--$2.2$ C4 perplexity points. On the four Qwen $($model, rate$)$ cells the downstream
margin is large and structural: at $66\%$ on Qwen 3.5-35B,
HodgeCover+Wanda gains $+12.6$ pp DS-Avg over STUN+Wanda
($74.6\%$ vs.\ $62.0\%$), driven primarily by GSM8K
($85.5\%$ vs.\ $22.1\%$) with a smaller $8.9$ pp lead on MMLU
($78.0\%$ vs.\ $69.2\%$); the same pattern holds at $66\%$ on
Qwen 3.5-122B-A10B. On the expert-count axis, HodgeCover
matches REAP within $\pm 0.3$ pp DS-Avg on all four Qwen cells and leads on
three of four, while leading REAM and MC-SMoE by $6$--$16$ pp at $66\%$. On
OLMoE every $66\%$ expert-count cell loses $23$--$25$ pp DS-Avg and
crosses C4 perplexity 100, but HodgeCover+Wanda still recovers a
$21.2$ C4 perplexity and $48.6\%$ DS-Avg, $+3.5$ pp over STUN+W. The matched-control hybrids REAP+Wanda and REAM+Wanda
(computed by us, not reported in the source papers) trail
HodgeCover+W at $66\%$ by $0.3$--$0.8$ pp DS-Avg on the two Qwen
scales and by $5$--$8$ pp DS-Avg on OLMoE
(App.~\ref{app:full-hybrid}).

\subsection{Mechanism: how baselines deviate from HodgeCover}
\label{sub:exp-mechanism}

\begin{figure*}[!tbp]
  \centering
  \includegraphics[width=\linewidth]{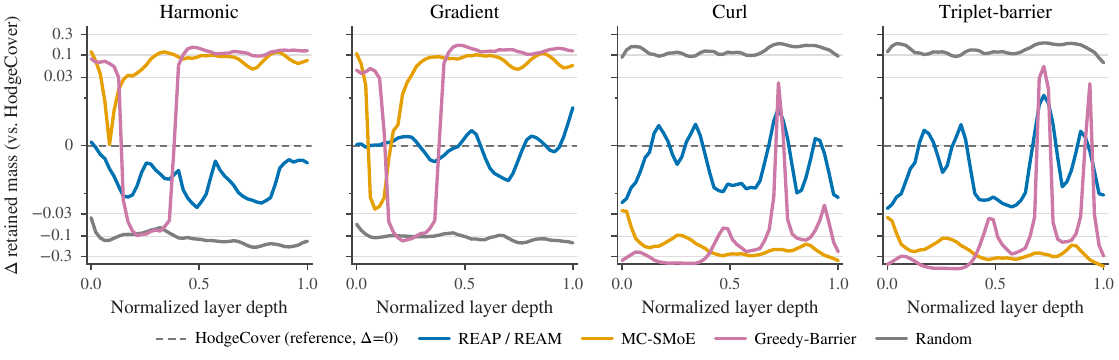}
  \caption{Per-layer deviation from HodgeCover on Qwen 3.5-122B-A10B
  at $66\%$ across the four Hodge components. Greedy-Barrier and
  MC-SMoE pay for higher harmonic and gradient retention with sharply
  lower curl and triplet-barrier mass; Random sweeps in the opposite
  direction; REAP / REAM (whose pure-axis survivor selection is
  identical) deviate weakly on every component.}
  \label{fig:mechanism}
  \vspace{-12.5pt}
\end{figure*}

For each Hodge component (harmonic, gradient, curl) we evaluate the
fraction of the original uncompressed component's $\ell^1$ mass that
survives the survivor set; for the triplet-barrier component we
evaluate the analogous fraction over triangle-supported coefficients.
The exact definition is given in Eq.~\ref{eq:retained-mass}
(App.~\ref{app:mechanism-metric}). We then report each baseline's
deviation from HodgeCover, which is the reference and so has
deviation zero by construction.
Figure~\ref{fig:mechanism} resolves the baselines into two
trade-off classes. Greedy-Barrier and MC-SMoE retain $0.05$--$0.07$
more harmonic and gradient mass at the cost of $0.10$--$0.25$ less
curl and triplet-barrier mass; Random sweeps the opposite trade,
gaining curl and triplet-barrier mass while losing harmonic and
gradient. The HodgeCover coverage objective $\Phi$ does not pick a
side: by greedy-covering the top-$p\%$ harmonic-critical edges and
top-$q_T\%$ triplet-critical triangles jointly, it stays close to the
backbone's pre-compression decomposition along all four components,
which the downstream tables reward by the $6$--$16$ pp DS-Avg gap of
HodgeCover over Greedy-Barrier and MC-SMoE on the two Qwen scales.
REAP and REAM share identical pure-axis survivor selection; their deviation from HodgeCover is
weak across all four components, consistent with their narrow
DS-Avg gap to HodgeCover in Table~\ref{tab:main-results}. Matched
plots on the two remaining backbones and macro numerics are in
App.~\ref{app:mechanism}; the full ablation construction is in
App.~\ref{app:method-detail}.

\subsection{Ablation Study}
\label{sec:ablations}

% Qwen 3.5-35B ablations table — narrow wraptable + resizebox
\begin{wraptable}{r}{0.36\linewidth}
  \centering
  \footnotesize
  \setlength{\tabcolsep}{3pt}
  \vspace{-11pt}
  \caption{Ablations on Qwen 3.5-35B-A3B. \textbf{Bold} = best in column.}
  \label{tab:ablations}
  \resizebox{\linewidth}{!}{%
  \begin{tabular}{l|cc|c}
    \toprule
    Variant & Wiki & C4 & DS-Avg \\
    \midrule
    \multicolumn{4}{l}{\emph{$33\%$ rate}} \\
    HodgeCover (ours) & \textbf{9.97} & \textbf{13.75} & \textbf{75.9} \\
    Hodge No-Triangle & 12.48 & 17.10 & 67.7 \\
    Triplet-Hypergraph & 14.29 & 16.76 & 64.5 \\
    Triplet-Penalty & 10.22 & 13.82 & 72.3 \\
    Greedy-Barrier & 10.22 & 13.79 & 72.7 \\
    \midrule
    \multicolumn{4}{l}{\emph{$66\%$ rate}} \\
    HodgeCover (ours) & \textbf{15.13} & 18.86 & \textbf{66.7} \\
    Hodge No-Triangle & 21.34 & 28.34 & 55.2 \\
    Triplet-Hypergraph & 98.08 & 154.9 & 36.2 \\
    Triplet-Penalty & 16.14 & 18.77 & 60.9 \\
    Greedy-Barrier & 15.30 & \textbf{18.13} & 60.7 \\
    \bottomrule
  \end{tabular}
  }% end resizebox
\end{wraptable}

To isolate the contribution of the Hodge decomposition we compare
HodgeCover against the four ablations of
Section~\ref{sub:exp-setup} on Qwen 3.5-35B-A3B
(Table~\ref{tab:ablations}). At $66\%$ HodgeCover gains $+5.7$ pp
DS-Avg over Triplet-Penalty at virtually identical C4 perplexity
($18.86$ vs.\ $18.77$): the gain is the result of routing the same
triplet inputs through the Hodge kernel rather than a sum penalty.
Removing triangles entirely costs $-11.5$ pp DS-Avg, and replacing the
soft objective with a binary triangle veto collapses by $-30.5$ pp,
identifying both the triangle term and its soft formulation as
necessary. The full per-task breakdown across all three backbones is
in Appendix~\ref{app:ablations-full}; Appendix~\ref{app:systems}
reports plan-time, inference throughput, and routing health: HodgeCover+Wanda matches REAP+Wanda's throughput to within $6\%$
at $66\%$, and the simplicial complex is computed once and cached
across compression rates.

\section{Conclusion}
\label{sec:conclusion}

Pairwise expert-merge scores are structurally blind to a higher-order
property of MoE mergeability: three experts can be pairwise
compatible yet collectively form an irreducible cycle. The simplicial
mergeability complex makes this cycle visible, and its Hodge
decomposition isolates it as a harmonic component that no aggregation
of pairwise scores can express. Covering the harmonic-critical edges
and triplet-critical triangles with a learning-free objective drives
the best aggressive-compression frontier across three production MoE
scales, evidence that the harmonic kernel of a learned structure can
carry content that pairwise scoring cannot reach.

\textbf{Limitations and Broader Impact.} HodgeCover stays inside the
learning-free family: recovering the last few perplexity points
typically requires a fine-tuning or knowledge-distillation step that
is orthogonal to our pipeline. Plan-time on a $35$B-parameter
MoE is dominated by barrier computation; this is a
one-shot offline cost amortized across compression rates because the
simplicial complex is cached. Our evaluation is restricted to
language MoE; multimodal and reinforcement-learning
post-trained checkpoints are deferred, although the construction is
modality-agnostic. As with other compression methods, lowering the
inference cost of large pretrained MoE models without retraining
both democratizes access and accelerates deployment of insufficiently
aligned checkpoints; the practitioner remains responsible for
the audit posture of any pretrained MoE checkpoint
they deploy.

% \begin{ack}
% Use unnumbered first level headings for the acknowledgments. All acknowledgments
% go at the end of the paper before the list of references. Moreover, you are required to declare
% funding (financial activities supporting the submitted work) and competing interests (related financial activities outside the submitted work).
% More information about this disclosure can be found at: \url{https://neurips.cc/Conferences/2026/PaperInformation/FundingDisclosure}.

% Do {\bf not} include this section in the anonymized submission, only in the final paper. You can use the \texttt{ack} environment provided in the style file to automatically hide this section in the anonymized submission.
% \end{ack}

{\small
\bibliographystyle{plainnat}
\bibliography{main}

@article{shazeer2017,
  title={Outrageously large neural networks: The sparsely-gated mixture-of-experts layer},
  author={Shazeer, Noam and Mirhoseini, Azalia and Maziarz, Krzysztof and Davis, Andy and Le, Quoc and Hinton, Geoffrey and Dean, Jeff},
  journal={arXiv preprint arXiv:1701.06538},
  year={2017}
}

@article{fedus2022switch,
  title={Switch transformers: Scaling to trillion parameter models with simple and efficient sparsity},
  author={Fedus, William and Zoph, Barret and Shazeer, Noam},
  journal={Journal of Machine Learning Research},
  volume={23},
  number={120},
  pages={1--39},
  year={2022}
}

@article{jiang2024mixtral,
  title={Mixtral of experts},
  author={Jiang, Albert Q and Sablayrolles, Alexandre and Roux, Antoine and Mensch, Arthur and Savary, Blanche and Bamford, Chris and Chaplot, Devendra Singh and Casas, Diego de las and Hanna, Emma Bou and Bressand, Florian and others},
  journal={arXiv preprint arXiv:2401.04088},
  year={2024}
}

@article{muennighoff2024olmoe,
  title={Olmoe: Open mixture-of-experts language models},
  author={Muennighoff, Niklas and Soldaini, Luca and Groeneveld, Dirk and Lo, Kyle and Morrison, Jacob and Min, Sewon and Shi, Weijia and Walsh, Pete and Tafjord, Oyvind and Lambert, Nathan and others},
  journal={arXiv preprint arXiv:2409.02060},
  year={2024}
}

@misc{qwen3.5,
  title        = {{Qwen3.5}: Towards Native Multimodal Agents},
  author       = {{Qwen Team}},
  year         = {2026},
  month        = feb,
  howpublished = {\url{https://qwen.ai/blog?id=qwen3.5}},
  note         = {Alibaba Cloud / Qwen team blog post}
}

@article{deepseekv3-2025,
  title={Deepseek-v3 technical report},
  author={Liu, Aixin and Feng, Bei and Xue, Bing and Wang, Bingxuan and Wu, Bochao and Lu, Chengda and Zhao, Chenggang and Deng, Chengqi and Zhang, Chenyu and Ruan, Chong and others},
  journal={arXiv preprint arXiv:2412.19437},
  year={2024}
}

@article{muzio2025reap,
  title={Reap the experts: Why pruning prevails for one-shot moe compression},
  author={Lasby, Mike and Lazarevich, Ivan and Sinnadurai, Nish and Lie, Sean and Ioannou, Yani and Thangarasa, Vithursan},
  journal={arXiv preprint arXiv:2510.13999},
  year={2025}
}

@article{ream2024,
  title={REAM: Merging Improves Pruning of Experts in LLMs},
  author={Jha, Saurav and Hashemzadeh, Maryam and Pasand, Ali Saheb and Parviz, Ali and Lee, Min-Joong and Knyazev, Boris},
  journal={arXiv preprint arXiv:2604.04356},
  year={2026}
}

@article{mcsmoe2024,
  title={Merge, then compress: Demystify efficient smoe with hints from its routing policy},
  author={Li, Pingzhi and Zhang, Zhenyu and Yadav, Prateek and Sung, Yi-Lin and Cheng, Yu and Bansal, Mohit and Chen, Tianlong},
  journal={arXiv preprint arXiv:2310.01334},
  year={2023}
}

@inproceedings{xu2025stun,
  title={Stun: Structured-then-unstructured pruning for scalable moe pruning},
  author={Lee, Jaeseong and Hwang, Seung-won and Qiao, Aurick and Campos, Daniel F and Yao, Zhewei and He, Yuxiong},
  booktitle={Proceedings of the 63rd Annual Meeting of the Association for Computational Linguistics (Volume 1: Long Papers)},
  pages={13660--13676},
  year={2025}
}

@article{sun2024wanda,
  title={A simple and effective pruning approach for large language models},
  author={Sun, Mingjie and Liu, Zhuang and Bair, Anna and Kolter, J Zico},
  journal={arXiv preprint arXiv:2306.11695},
  year={2023}
}

@article{zhou2007hypergraph,
  title={Learning with hypergraphs: Clustering, classification, and embedding},
  author={Zhou, Dengyong and Huang, Jiayuan and Sch{\"o}lkopf, Bernhard},
  journal={Advances in neural information processing systems},
  volume={19},
  year={2006}
}

@inproceedings{schroff2015triplet,
  title={Facenet: A unified embedding for face recognition and clustering},
  author={Schroff, Florian and Kalenichenko, Dmitry and Philbin, James},
  booktitle={Proceedings of the IEEE conference on computer vision and pattern recognition},
  pages={815--823},
  year={2015}
}

@article{lim2020hodge,
  title={Hodge Laplacians on graphs},
  author={Lim, Lek-Heng},
  journal={Siam Review},
  volume={62},
  number={3},
  pages={685--715},
  year={2020},
  publisher={SIAM}
}

@article{schaub2020simplicial,
  title={Random walks on simplicial complexes and the normalized Hodge 1-Laplacian},
  author={Schaub, Michael T and Benson, Austin R and Horn, Paul and Lippner, Gabor and Jadbabaie, Ali},
  journal={SIAM Review},
  volume={62},
  number={2},
  pages={353--391},
  year={2020},
  publisher={SIAM}
}

@article{bronstein2021geometric,
  title={Geometric deep learning: Grids, groups, graphs, geodesics, and gauges},
  author={Bronstein, Michael M and Bruna, Joan and Cohen, Taco and Veli{\v{c}}kovi{\'c}, Petar},
  journal={arXiv preprint arXiv:2104.13478},
  year={2021}
}

@article{nemhauser1978submodular,
  title={An analysis of approximations for maximizing submodular set functions—I},
  author={Nemhauser, George L and Wolsey, Laurence A and Fisher, Marshall L},
  journal={Mathematical programming},
  volume={14},
  number={1},
  pages={265--294},
  year={1978},
  publisher={Springer}
}

@inproceedings{lu2024notall,
  title={Not all experts are equal: Efficient expert pruning and skipping for mixture-of-experts large language models},
  author={Lu, Xudong and Liu, Qi and Xu, Yuhui and Zhou, Aojun and Huang, Siyuan and Zhang, Bo and Yan, Junchi and Li, Hongsheng},
  booktitle={Proceedings of the 62nd Annual Meeting of the Association for Computational Linguistics (Volume 1: Long Papers)},
  pages={6159--6172},
  year={2024}
}

@article{liu2024eep,
  title={Efficient expert pruning for sparse mixture-of-experts language models: Enhancing performance and reducing inference costs},
  author={Liu, Enshu and Zhu, Junyi and Lin, Zinan and Ning, Xuefei and Blaschko, Matthew B and Yan, Shengen and Dai, Guohao and Yang, Huazhong and Wang, Yu},
  journal={arXiv preprint arXiv:2407.00945},
  year={2024}
}

@article{moepruner2024,
  title={Moe-pruner: Pruning mixture-of-experts large language model using the hints from its router},
  author={Xie, Yanyue and Zhang, Zhi and Zhou, Ding and Xie, Cong and Song, Ziang and Liu, Xin and Wang, Yanzhi and Lin, Xue and Xu, An},
  journal={arXiv preprint arXiv:2410.12013},
  year={2024}
}

@article{evoesap2026,
  title={EvoESAP: Non-Uniform Expert Pruning for Sparse MoE},
  author={Liu, Zongfang and Tang, Shengkun and Sun, Boyang and Shen, Zhiqiang and Yuan, Xin},
  journal={arXiv preprint arXiv:2603.06003},
  year={2026}
}

@article{aimer2026,
  title={AIMER: Calibration-Free Task-Agnostic MoE Pruning},
  author={Liu, Zongfang and Tang, Shengkun and Shen, Yifan and Wang, Huan and Yuan, Xin},
  journal={arXiv preprint arXiv:2603.18492},
  year={2026}
}

@article{chen2024hcsmoe,
  title={Retraining-free merging of sparse moe via hierarchical clustering},
  author={Chen, I and Liu, Hsu-Shen and Sun, Wei-Fang and Chao, Chen-Hao and Hsu, Yen-Chang and Lee, Chun-Yi and others},
  journal={arXiv preprint arXiv:2410.08589},
  year={2024}
}

@inproceedings{yang2024moei2,
  title={Moe-i2: Compressing mixture of experts models through inter-expert pruning and intra-expert low-rank decomposition},
  author={Yang, Cheng and Sui, Yang and Xiao, Jinqi and Huang, Lingyi and Gong, Yu and Duan, Yuanlin and Jia, Wenqi and Yin, Miao and Cheng, Yu and Yuan, Bo},
  booktitle={Findings of the Association for Computational Linguistics: EMNLP 2024},
  pages={10456--10466},
  year={2024}
}

@article{he2024merging,
  title={Towards efficient mixture of experts: A holistic study of compression techniques},
  author={He, Shwai and Dong, Daize and Ding, Liang and Li, Ang},
  journal={arXiv preprint arXiv:2406.02500},
  year={2024}
}

@article{li2022branch,
  title={Branch-train-merge: Embarrassingly parallel training of expert language models},
  author={Li, Margaret and Gururangan, Suchin and Dettmers, Tim and Lewis, Mike and Althoff, Tim and Smith, Noah A and Zettlemoyer, Luke},
  journal={arXiv preprint arXiv:2208.03306},
  year={2022}
}

@article{komatsuzaki2023sparse,
  title={Sparse upcycling: Training mixture-of-experts from dense checkpoints},
  author={Komatsuzaki, Aran and Puigcerver, Joan and Lee-Thorp, James and Ruiz, Carlos Riquelme and Mustafa, Basil and Ainslie, Joshua and Tay, Yi and Dehghani, Mostafa and Houlsby, Neil},
  journal={arXiv preprint arXiv:2212.05055},
  year={2022}
}

@inproceedings{frantar2023sparsegpt,
  title={Sparsegpt: Massive language models can be accurately pruned in one-shot},
  author={Frantar, Elias and Alistarh, Dan},
  booktitle={International conference on machine learning},
  pages={10323--10337},
  year={2023},
  organization={PMLR}
}

@article{han2016deepcompression,
  title={Deep compression: Compressing deep neural networks with pruning, trained quantization and huffman coding},
  author={Han, Song and Mao, Huizi and Dally, William J},
  journal={arXiv preprint arXiv:1510.00149},
  year={2015}
}

@article{sanh2020movement,
  title={Movement pruning: Adaptive sparsity by fine-tuning},
  author={Sanh, Victor and Wolf, Thomas and Rush, Alexander},
  journal={Advances in neural information processing systems},
  volume={33},
  pages={20378--20389},
  year={2020}
}

@article{ma2023llmpruner,
  title={Llm-pruner: On the structural pruning of large language models},
  author={Ma, Xinyin and Fang, Gongfan and Wang, Xinchao},
  journal={Advances in neural information processing systems},
  volume={36},
  pages={21702--21720},
  year={2023}
}

@inproceedings{men2024shortgpt,
  title={Shortgpt: Layers in large language models are more redundant than you expect},
  author={Men, Xin and Xu, Mingyu and Zhang, Qingyu and Yuan, Qianhao and Wang, Bingning and Lin, Hongyu and Lu, Yaojie and Han, Xianpei and Chen, Weipeng},
  booktitle={Findings of the Association for Computational Linguistics: ACL 2025},
  pages={20192--20204},
  year={2025}
}

@article{lin2024awq,
  title={Awq: Activation-aware weight quantization for on-device llm compression and acceleration},
  author={Lin, Ji and Tang, Jiaming and Tang, Haotian and Yang, Shang and Chen, Wei-Ming and Wang, Wei-Chen and Xiao, Guangxuan and Dang, Xingyu and Gan, Chuang and Han, Song},
  journal={Proceedings of machine learning and systems},
  volume={6},
  pages={87--100},
  year={2024}
}

@article{frantar2023gptq,
  title={Gptq: Accurate post-training quantization for generative pre-trained transformers},
  author={Frantar, Elias and Ashkboos, Saleh and Hoefler, Torsten and Alistarh, Dan},
  journal={arXiv preprint arXiv:2210.17323},
  year={2022}
}

@inproceedings{xiao2023smoothquant,
  title={Smoothquant: Accurate and efficient post-training quantization for large language models},
  author={Xiao, Guangxuan and Lin, Ji and Seznec, Mickael and Wu, Hao and Demouth, Julien and Han, Song},
  booktitle={International conference on machine learning},
  pages={38087--38099},
  year={2023},
  organization={PMLR}
}

@article{shao2024omniquant,
  title={Omniquant: Omnidirectionally calibrated quantization for large language models},
  author={Shao, Wenqi and Chen, Mengzhao and Zhang, Zhaoyang and Xu, Peng and Zhao, Lirui and Li, Zhiqian and Zhang, Kaipeng and Gao, Peng and Qiao, Yu and Luo, Ping},
  journal={arXiv preprint arXiv:2308.13137},
  year={2023}
}

@article{eckmann1944,
  title={Harmonische funktionen und randwertaufgaben in einem komplex},
  author={Eckmann, Beno},
  journal={Commentarii Mathematici Helvetici},
  volume={17},
  number={1},
  pages={240--255},
  year={1944},
  publisher={Springer}
}

@inproceedings{bodnar2021weisfeiler,
  title={Weisfeiler and lehman go topological: Message passing simplicial networks},
  author={Bodnar, Cristian and Frasca, Fabrizio and Wang, Yuguang and Otter, Nina and Montufar, Guido F and Lio, Pietro and Bronstein, Michael},
  booktitle={International conference on machine learning},
  pages={1026--1037},
  year={2021},
  organization={PMLR}
}

@article{horn2022topology,
  title={Topological graph neural networks},
  author={Horn, Max and De Brouwer, Edward and Moor, Michael and Moreau, Yves and Rieck, Bastian and Borgwardt, Karsten},
  journal={arXiv preprint arXiv:2102.07835},
  year={2021}
}

@article{naitzat2020topology,
  title={Topology of deep neural networks},
  author={Naitzat, Gregory and Zhitnikov, Andrey and Lim, Lek-Heng},
  journal={Journal of Machine Learning Research},
  volume={21},
  number={184},
  pages={1--40},
  year={2020}
}

@article{rieck2019neural,
  title={Neural persistence: A complexity measure for deep neural networks using algebraic topology},
  author={Rieck, Bastian and Togninalli, Matteo and Bock, Christian and Moor, Michael and Horn, Max and Gumbsch, Thomas and Borgwardt, Karsten},
  journal={arXiv preprint arXiv:1812.09764},
  year={2018}
}

@article{spielman2011spectral,
  title={Spectral sparsification of graphs},
  author={Spielman, Daniel A and Teng, Shang-Hua},
  journal={SIAM Journal on Computing},
  volume={40},
  number={4},
  pages={981--1025},
  year={2011},
  publisher={SIAM}
}

@inproceedings{grande2024simplicial,
  title={Disentangling the spectral properties of the hodge laplacian: not all small eigenvalues are equal},
  author={Grande, Vincent P and Schaub, Michael T},
  booktitle={ICASSP 2024-2024 IEEE International Conference on Acoustics, Speech and Signal Processing (ICASSP)},
  pages={9896--9900},
  year={2024},
  organization={IEEE}
}

@article{suzuki2020spectral,
  title={Spectral pruning: Compressing deep neural networks via spectral analysis and its generalization error},
  author={Suzuki, Taiji and Abe, Hiroshi and Murata, Tomoya and Horiuchi, Shingo and Ito, Kotaro and Wachi, Tokuma and Hirai, So and Yukishima, Masatoshi and Nishimura, Tomoaki},
  journal={arXiv preprint arXiv:1808.08558},
  year={2018}
}

@article{ebli2020simplicial,
  title={Simplicial neural networks},
  author={Ebli, Stefania and Defferrard, Micha{\"e}l and Spreemann, Gard},
  journal={arXiv preprint arXiv:2010.03633},
  year={2020}
}

@inproceedings{agarwal2006higher,
  title={Higher order learning with graphs},
  author={Agarwal, Sameer and Branson, Kristin and Belongie, Serge},
  booktitle={Proceedings of the 23rd international conference on Machine learning},
  pages={17--24},
  year={2006}
}

@inproceedings{louis2015hypergraph,
  title={Hypergraph markov operators, eigenvalues and approximation algorithms},
  author={Louis, Anand},
  booktitle={Proceedings of the forty-seventh annual ACM symposium on Theory of computing},
  pages={713--722},
  year={2015}
}

@article{lee2014multiway,
  title={Multiway spectral partitioning and higher-order cheeger inequalities},
  author={Lee, James R and Gharan, Shayan Oveis and Trevisan, Luca},
  journal={Journal of the ACM (JACM)},
  volume={61},
  number={6},
  pages={1--30},
  year={2014},
  publisher={ACM New York, NY, USA}
}

@article{benson2016higher,
  title={Higher-order organization of complex networks},
  author={Benson, Austin R and Gleich, David F and Leskovec, Jure},
  journal={Science},
  volume={353},
  number={6295},
  pages={163--166},
  year={2016},
  publisher={American Association for the Advancement of Science}
}

@inproceedings{feng2019hypergraph,
  title={Hypergraph neural networks},
  author={Feng, Yifan and You, Haoxuan and Zhang, Zizhao and Ji, Rongrong and Gao, Yue},
  booktitle={Proceedings of the AAAI conference on artificial intelligence},
  volume={33},
  number={01},
  pages={3558--3565},
  year={2019}
}

@article{bai2021hypergraph,
  title={Hypergraph convolution and hypergraph attention},
  author={Bai, Song and Zhang, Feihu and Torr, Philip HS},
  journal={Pattern Recognition},
  volume={110},
  pages={107637},
  year={2021},
  publisher={Elsevier}
}

@inproceedings{wang2014triplet,
  title={Learning fine-grained image similarity with deep ranking},
  author={Wang, Jiang and Song, Yang and Leung, Thomas and Rosenberg, Chuck and Wang, Jingbin and Philbin, James and Chen, Bo and Wu, Ying},
  booktitle={Proceedings of the IEEE conference on computer vision and pattern recognition},
  pages={1386--1393},
  year={2014}
}

@inproceedings{hadsell2006dimensionality,
  title={Dimensionality reduction by learning an invariant mapping},
  author={Hadsell, Raia and Chopra, Sumit and LeCun, Yann},
  booktitle={2006 IEEE computer society conference on computer vision and pattern recognition (CVPR'06)},
  volume={2},
  pages={1735--1742},
  year={2006},
  organization={IEEE}
}

@book{hatcher2002,
  author    = {Hatcher, Allen},
  title     = {Algebraic Topology},
  publisher = {Cambridge University Press},
  address   = {Cambridge, UK},
  year      = {2002},
  isbn      = {978-0-521-79540-1}
}

@book{munkres1984,
  title={Elements Of Algebraic Topology},
  author={Munkres, J.R.},
  isbn={9780201627282},
  lccn={84006250},
  year={1996},
  publisher={Avalon Publishing}
}

@article{horak2013spectra,
  title={Spectra of combinatorial Laplace operators on simplicial complexes},
  author={Horak, Danijela and Jost, J{\"u}rgen},
  journal={Advances in Mathematics},
  volume={244},
  pages={303--336},
  year={2013},
  publisher={Elsevier}
}

@phdthesis{goldberg2002combinatorial,
  title={Combinatorial Laplacians of simplicial complexes},
  author={Goldberg, Timothy E},
  year={2002},
  school={Bard College}
}

@inproceedings{friedman1998computing,
  title={Computing Betti numbers via combinatorial Laplacians},
  author={Friedman, Joel},
  booktitle={Proceedings of the twenty-eighth annual ACM symposium on Theory of Computing},
  pages={386--391},
  year={1996}
}

@article{raffel2020c4,
  title={Exploring the limits of transfer learning with a unified text-to-text transformer},
  author={Raffel, Colin and Shazeer, Noam and Roberts, Adam and Lee, Katherine and Narang, Sharan and Matena, Michael and Zhou, Yanqi and Li, Wei and Liu, Peter J},
  journal={Journal of machine learning research},
  volume={21},
  number={140},
  pages={1--67},
  year={2020}
}

@article{merity2017wikitext,
  title={Pointer sentinel mixture models},
  author={Merity, Stephen and Xiong, Caiming and Bradbury, James and Socher, Richard},
  journal={arXiv preprint arXiv:1609.07843},
  year={2016}
}

@article{gao2024lmeval,
  title={A framework for few-shot language model evaluation},
  author={Gao, Leo and Tow, Jonathan and Biderman, Stella and Black, Sid and DiPofi, Anthony and Foster, Charles and Golding, Laurence and Hsu, Jeffrey and McDonell, Kyle and Muennighoff, Niklas and others},
  journal={Zenodo},
  year={2021}
}

@article{clark2018arc,
  title={Think you have solved question answering? try arc, the ai2 reasoning challenge},
  author={Clark, Peter and Cowhey, Isaac and Etzioni, Oren and Khot, Tushar and Sabharwal, Ashish and Schoenick, Carissa and Tafjord, Oyvind},
  journal={arXiv preprint arXiv:1803.05457},
  year={2018}
}

@inproceedings{clark2019boolq,
  title={Boolq: Exploring the surprising difficulty of natural yes/no questions},
  author={Clark, Christopher and Lee, Kenton and Chang, Ming-Wei and Kwiatkowski, Tom and Collins, Michael and Toutanova, Kristina},
  booktitle={Proceedings of the 2019 conference of the north American chapter of the association for computational linguistics: Human language technologies, volume 1 (long and short papers)},
  pages={2924--2936},
  year={2019}
}

@inproceedings{zellers2019hellaswag,
  title={Hellaswag: Can a machine really finish your sentence?},
  author={Zellers, Rowan and Holtzman, Ari and Bisk, Yonatan and Farhadi, Ali and Choi, Yejin},
  booktitle={Proceedings of the 57th annual meeting of the association for computational linguistics},
  pages={4791--4800},
  year={2019}
}

@article{hendrycks2021mmlu,
  title={Measuring massive multitask language understanding},
  author={Hendrycks, Dan and Burns, Collin and Basart, Steven and Zou, Andy and Mazeika, Mantas and Song, Dawn and Steinhardt, Jacob},
  journal={arXiv preprint arXiv:2009.03300},
  year={2020}
}

@inproceedings{bisk2020piqa,
  title={Piqa: Reasoning about physical commonsense in natural language},
  author={Bisk, Yonatan and Zellers, Rowan and Gao, Jianfeng and Choi, Yejin and others},
  booktitle={Proceedings of the AAAI conference on artificial intelligence},
  volume={34},
  number={05},
  pages={7432--7439},
  year={2020}
}

@inproceedings{lin2022truthfulqa,
  title={Truthfulqa: Measuring how models mimic human falsehoods},
  author={Lin, Stephanie and Hilton, Jacob and Evans, Owain},
  booktitle={Proceedings of the 60th annual meeting of the association for computational linguistics (volume 1: long papers)},
  pages={3214--3252},
  year={2022}
}

@article{sakaguchi2021winogrande,
  title={Winogrande: An adversarial winograd schema challenge at scale},
  author={Sakaguchi, Keisuke and Bras, Ronan Le and Bhagavatula, Chandra and Choi, Yejin},
  journal={Communications of the ACM},
  volume={64},
  number={9},
  pages={99--106},
  year={2021},
  publisher={ACM New York, NY, USA}
}

@article{cobbe2021gsm8k,
  title={Training verifiers to solve math word problems},
  author={Cobbe, Karl and Kosaraju, Vineet and Bavarian, Mohammad and Chen, Mark and Jun, Heewoo and Kaiser, Lukasz and Plappert, Matthias and Tworek, Jerry and Hilton, Jacob and Nakano, Reiichiro and others},
  journal={arXiv preprint arXiv:2110.14168},
  year={2021}
}

@article{yin2023outlier,
  title={Outlier weighed layerwise sparsity (owl): A missing secret sauce for pruning llms to high sparsity},
  author={Yin, Lu and Wu, You and Zhang, Zhenyu and Hsieh, Cheng-Yu and Wang, Yaqing and Jia, Yiling and Li, Gen and Jaiswal, Ajay and Pechenizkiy, Mykola and Liang, Yi and others},
  journal={arXiv preprint arXiv:2310.05175},
  year={2023}
}

@inproceedings{zheng2026hsd,
  title={Topology-Preserving Neural Operator Learning via Hodge Decomposition},
  author={Zheng, Dongzhe and Zhong, Tao and Allen-Blanchette, Christine},
  booktitle={International conference on machine learning},
  year={2026},
  organization={PMLR}
}

@article{zhong2022meta,
  title={Meta-dmoe: Adapting to domain shift by meta-distillation from mixture-of-experts},
  author={Zhong, Tao and Chi, Zhixiang and Gu, Li and Wang, Yang and Yu, Yuanhao and Tang, Jin},
  journal={Advances in Neural Information Processing Systems},
  volume={35},
  pages={22243--22257},
  year={2022}
}
}

%%%%%%%%%%%%%%%%%%%%%%%%%%%%%%%%%%%%%%%%%%%%%%%%%%%%%%%%%%%%

\appendix

\section{Hodge Decomposition Primer and Implementation Details}
\label{app:hodge}

This appendix carries the full background and proofs deferred from
Section~\ref{sec:complex}. App.~\ref{app:complexes} reviews the abstract
simplicial complex; App.~\ref{app:boundary} defines the boundary
operators and proves the chain identity; App.~\ref{app:laplacian}
constructs the combinatorial $1$-Hodge Laplacian, characterizes its
kernel, and reports the Euler-Poincar\'{e} count for $\beta_1(K)$;
App.~\ref{app:hodge-proof} states and proves the discrete Hodge
decomposition; App.~\ref{app:projection} derives explicit projection
formulas via the Moore-Penrose pseudoinverse;
App.~\ref{app:thresholding} documents the effect of edge thresholding on
the spectrum and motivates working with the complete edge set;
App.~\ref{app:triangle-set} analyzes the sensitivity of
$b_{\mathrm{harm}}$ to the curated triangle set $T$;
App.~\ref{app:merge} defines the frequency-weighted merge operation and
its all-zero-frequency guard; App.~\ref{app:calibration} fixes the
calibration corpus; App.~\ref{app:extended-diagnostic} reports the
per-layer ranges, gradient/curl companion curves, and the representative
discordance margin used in Figure~\ref{fig:diagnostic}.

We follow the textbook treatments of \citet{hatcher2002, munkres1984} for
classical homology, and \citet{eckmann1944, lim2020hodge,
schaub2020simplicial, horak2013spectra,goldberg2002combinatorial,
friedman1998computing,zheng2026hsd} for the discrete Hodge framework.

\subsection{Abstract simplicial complexes}
\label{app:complexes}

An \emph{abstract simplicial complex} on a finite vertex set
$V = \{1, \ldots, n\}$ is a family $K \subseteq 2^V$ of non-empty subsets
that is closed under taking non-empty subsets:
$\sigma \in K$ and $\emptyset \neq \tau \subseteq \sigma$ implies
$\tau \in K$. Elements $\sigma \in K$ with $|\sigma| = q + 1$ are
\emph{$q$-simplices}; the set of $q$-simplices is denoted $F_q(K)$. We work
exclusively with $q \in \{0, 1, 2\}$: vertices, edges, triangles. Throughout
this paper $K = (V, E, T)$ with $V = F_0(K) = \{1, \ldots, n\}$, $E = F_1(K)$
the complete pairwise edge set $\binom{V}{2}$, and $T = F_2(K) \subseteq
\binom{V}{3}$ a curated triangle set whose construction is given in
App.~\ref{app:triangle-set}.

To do linear algebra on $K$ we orient every simplex. Fix any total ordering
$<$ on $V$. For each simplex $\{v_0, \ldots, v_q\} \in F_q(K)$ with
$v_0 < v_1 < \cdots < v_q$, we choose the canonical
lexicographically-ordered representative $[v_0, \ldots, v_q]$ as its
orientation; an odd permutation of these indices represents the opposite
orientation. Equivalently, an \emph{oriented $q$-simplex} is the
equivalence class of an ordered $(q+1)$-tuple of distinct vertices modulo
even permutations.

The space of \emph{$q$-chains} is the real vector space
\begin{equation*}
  C_q(K) \;=\; \mathbb{R}^{F_q(K)},
\end{equation*}
with the standard basis indexed by oriented $q$-simplices. For $q = 0, 1, 2$
the dimensions are $\dim C_0 = |V| = n$, $\dim C_1 = |E|$, and
$\dim C_2 = |T|$. We endow each $C_q(K)$ with the Euclidean inner product in
the lexicographic basis. This is the unweighted choice; weighted variants
are discussed in Remark~\ref{rem:weighted-laplacian}.

\subsection{Boundary operators and the chain identity}
\label{app:boundary}

The boundary of an oriented simplex is the formal alternating sum of its
oriented faces. Concretely,
\begin{align*}
  \partial_1 \colon C_1(K) &\to C_0(K),
    & \partial_1\,[i, j] &= [j] - [i],
    & i < j, \\
  \partial_2 \colon C_2(K) &\to C_1(K),
    & \partial_2\,[i, j, k] &= [j, k] - [i, k] + [i, j],
    & i < j < k.
\end{align*}
Extending linearly, $\partial_1$ is represented in the canonical bases by
the signed vertex-edge incidence matrix $B_1 \in \{-1, 0, +1\}^{|V| \times |E|}$
of the underlying graph; $\partial_2$ is represented by the signed
edge-triangle incidence matrix $B_2 \in \{-1, 0, +1\}^{|E| \times |T|}$.

\begin{lemma}[Chain identity]
\label{lem:chain-identity}
$\partial_1 \circ \partial_2 = 0$ as a linear map $C_2(K) \to C_0(K)$.
Equivalently, $\mathrm{im}(\partial_2) \subseteq \ker(\partial_1)$.
\end{lemma}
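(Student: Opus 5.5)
Since both $\partial_1$ and $\partial_2$ are linear, it suffices to check the identity $\partial_1\partial_2\sigma = 0$ on each basis element $\sigma = [i,j,k]$ of $C_2(K)$ with $i<j<k$. Once that holds on the canonical basis, linearity extends it to all of $C_2(K)$. The equivalent formulation $\mathrm{im}(\partial_2)\subseteq\ker(\partial_1)$ is then immediate: every element of $\mathrm{im}(\partial_2)$ has the form $\partial_2 c$, and $\partial_1(\partial_2 c)=0$.

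For the basis computation, I would apply the definition of $\partial_2$ to obtain $\partial_2[i,j,k] = [j,k]-[i,k]+[i,j]$. Each of the three edges appears in its lexicographic orientation ($j<k$, $i<k$, $i<j$), so the formula $\partial_1[a,b]=[b]-[a]$ applies directly with no sign correction. Expanding,
\begin{equation*}
  \partial_1\partial_2[i,j,k] = ([k]-[j]) - ([k]-[i]) + ([j]-[i]) = 0 ,
\end{equation*}
since each vertex appears exactly once with each sign. One point needs checking: the edges $[j,k]$, $[i,k]$, $[i,j]$ must actually be elements of $F_1(K)$. This holds here because $E=\binom{V}{2}$ is complete, and more generally because $K$ is closed under taking faces (App.~\ref{app:complexes}).

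In matrix form the statement reads $B_1B_2=0$, and the same computation verifies it column by column. For each triangle column of $B_2$, the signed incidences sum to zero at every vertex. There is no substantive obstacle in this proof. The only step requiring care is confirming that the orientation conventions are consistent, namely that every face produced by $\partial_2$ is already written in canonical order so that $\partial_1$ is evaluated correctly.
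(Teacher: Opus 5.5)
Your proof is correct and follows the paper's argument: reduce to a basis triangle $[i,j,k]$ by linearity, then expand $\partial_1\bigl([j,k]-[i,k]+[i,j]\bigr)$ and observe that the six vertex terms cancel in pairs. Your additional checks (that the three faces are already canonically oriented and lie in $E$, and the matrix form $B_1B_2=0$) are sound but go beyond what the paper's proof needs.
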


\begin{proof}
By linearity it suffices to check the identity on a single basis element.
Fix an oriented triangle $[i, j, k]$ with $i < j < k$. Then
\begin{align*}
  \partial_1\,\partial_2\,[i, j, k]
    &\;=\;\partial_1\,\big([j, k] - [i, k] + [i, j]\big)\\
    &\;=\;\big([k] - [j]\big)\;-\;\big([k] - [i]\big)\;+\;\big([j] - [i]\big)\\
    &\;=\;[k] - [j] - [k] + [i] + [j] - [i]\\
    &\;=\;0,
\end{align*}
where every term cancels with its mate of opposite sign.
\end{proof}

The chain identity is the discrete counterpart of $d \circ d = 0$ for the
exterior derivative, and it is the single algebraic fact that makes the
Hodge decomposition possible.

\begin{remark}[Why this is a 2-complex and not a hypergraph]
\label{rem:not-hypergraph}
A hypergraph treats each triangle $\{i, j, k\}$ as a single hyperedge with
no internal structure. The simplicial complex $K$ retains the boundary
relation between an oriented triangle and its three oriented edges, which is
exactly the structure that distinguishes ``three pairwise compatibilities''
from ``a coherent triangle of pairwise compatibilities.'' This boundary
relation is what makes $\partial_2 \partial_2^\top$ a non-trivial operator
on $C_1$; without it, the curl term $b_{\mathrm{curl}}$ vanishes by
construction and the decomposition collapses to the graph Laplacian's
kernel-orthogonal split.
\end{remark}

\subsection{The combinatorial 1-Hodge Laplacian and its kernel}
\label{app:laplacian}

The \emph{combinatorial 1-Hodge Laplacian} of $K$ is the symmetric
positive-semi-definite operator
\begin{equation}
  \label{eq:laplacian-app}
  L_1 \;=\; \partial_1^\top \partial_1 \;+\; \partial_2 \partial_2^\top
    \;=\; L_1^{\mathrm{down}} + L_1^{\mathrm{up}}
    \;\in\; \mathbb{R}^{|E|\times|E|}.
\end{equation}
Here $L_1^{\mathrm{down}} = \partial_1^\top \partial_1$ couples each edge to
its endpoint-sharing neighbors (a ``lower'' adjacency), and
$L_1^{\mathrm{up}} = \partial_2 \partial_2^\top$ couples each edge to its
co-faces in $T$ (an ``upper'' adjacency). The down-Laplacian
$L_1^{\mathrm{down}}$ is the standard graph Laplacian's ``edge form,'' while
$L_1^{\mathrm{up}}$ is the new piece that depends on the chosen triangle
set $T$.

The 1-Hodge Laplacian satisfies $L_1 = L_1^\top$ and $\langle L_1 b, b\rangle
= \norm{\partial_1 b}^2 + \norm{\partial_2^\top b}^2 \ge 0$, hence is
positive semi-definite. Equation~\ref{eq:laplacian-app} also implies
\begin{equation}
  \label{eq:kernel-app}
  \ker(L_1)
    \;=\;
    \ker(\partial_1) \,\cap\, \ker(\partial_2^\top),
\end{equation}
since $L_1 b = 0$ holds if and only if $\langle L_1 b, b\rangle = 0$, which
forces both squared norms in the line above to vanish. The dimension of
this kernel is the first Betti number of $K$:
\begin{equation}
  \label{eq:betti}
  \dim \ker(L_1) \;=\; \beta_1(K)
    \;=\; \dim \ker(\partial_1) - \dim \mathrm{im}(\partial_2),
\end{equation}
which by the rank-nullity theorem applied to $\partial_1$ and the
Euler-Poincar\'{e} formula for a 2-complex specializes to
\begin{equation}
  \label{eq:euler}
  \beta_1(K)
    \;=\; |E| - |V| + \beta_0\!\left(K^{(1)}\right) - \mathrm{rank}\,\partial_2,
\end{equation}
where $\beta_0(K^{(1)}) = |\pi_0(V, E)|$ is the number of connected
components of the 1-skeleton $K^{(1)} = (V, E)$. For
the complete edge set $E = \binom{V}{2}$ used in this paper, $K^{(1)}$ is
the complete graph $K_n$, so $\beta_0(K^{(1)}) = 1$ and
$\beta_1(K) = \binom{n}{2} - n + 1 - \mathrm{rank}\,\partial_2$.

We instantiate this on the three production MoE families used throughout
the paper. The per-layer candidate triangle set $T \subseteq \binom{V}{3}$
is the one constructed by Stage~A of App.~\ref{app:triangle-set}, capped
at $|T| \le 500$; on every layer of all three models the
median-pairwise-barrier subgraph supplies more than $500$ qualifying
3-cliques, so the cap binds and the kept triples are a uniform-random
subsample with fixed seed $42$.
For OLMoE ($n = 64$, $\binom{64}{2} = 2{,}016$), the empirical
$\beta_1(K)$ takes values in $\{1453, 1454, 1455\}$ across the $16$ layers,
which by Eq.~\ref{eq:euler} corresponds to $\mathrm{rank}\,\partial_2
\in \{498, 499, 500\}$ (i.e., the curl subspace is full-rank or nearly
so on every layer). For Qwen3.5-35B and Qwen3.5-122B
($n = 256$, $\binom{256}{2} = 32{,}640$), $\beta_1(K) = 31{,}885$ at every
layer, so $\mathrm{rank}\,\partial_2 = 500$ and the curl subspace is
exactly full-rank on every layer. In every (model, layer) cell,
$\beta_1(K)$ is structurally pinned by $n$ and $|T|$ via
Eq.~\ref{eq:euler} (with at most a $\pm 2$ fluctuation across OLMoE's
$16$ layers and exact constancy across both Qwen variants' $40$ and $48$
layers), so it functions as a model-level descriptor rather than a
per-layer signal; this is why Section~\ref{sub:diagnostic} demotes
$\beta_1$ from the per-layer analysis and uses
$\rho_{\mathrm{harm}}(\ell)$ instead.

\begin{remark}[Weighted Hodge Laplacian]
\label{rem:weighted-laplacian}
A weighted variant replaces the unweighted boundary operators with
$\widetilde{\partial}_1 = \partial_1\,W_E^{1/2}$ and
$\widetilde{\partial}_2 = W_E^{-1/2}\,\partial_2\,W_T^{1/2}$ for diagonal
weight matrices $W_E$ and $W_T$ \citep{horak2013spectra}, giving a weighted
Laplacian $\widetilde{L}_1 = \widetilde{\partial}_1^\top \widetilde{\partial}_1
+ \widetilde{\partial}_2 \widetilde{\partial}_2^\top$. The kernel of
$\widetilde{L}_1$ is isomorphic to that of the unweighted $L_1$ as long as
all weights are strictly positive (the kernel is a topological invariant
\citep{eckmann1944, lim2020hodge}), but the eigenvectors and the projector
$P_{\mathrm{harm}}$ change. We deliberately use the unweighted form: the
edge weights $b_{ij}$ vary by orders of magnitude across layers and across
model families, and folding them into the operator would entangle
``which edges are harmonic'' (a topological question) with ``how barriers
are normalized'' (a domain-specific question). The signal lives outside the
operator.
\end{remark}

\subsection{The discrete Hodge decomposition}
\label{app:hodge-proof}

We restate Theorem~\ref{thm:hodge} from the main body and give a
self-contained linear-algebraic proof.

\begin{theorem}[Discrete Hodge decomposition]
\label{thm:hodge-app}
Let $K = (V, E, T)$ be a finite simplicial 2-complex with boundary operators
$\partial_1, \partial_2$ as in App.~\ref{app:boundary}. Then $C_1(K)$
admits the orthogonal direct-sum decomposition
\begin{equation}
  \label{eq:hodge-app}
  C_1(K)
    \;=\;
    \mathrm{im}(\partial_1^\top)
      \,\oplus\, \mathrm{im}(\partial_2)
      \,\oplus\, \ker(L_1),
\end{equation}
and consequently every $b \in C_1(K)$ has a unique decomposition
$b = b_{\mathrm{grad}} + b_{\mathrm{curl}} + b_{\mathrm{harm}}$ with
components in the three respective subspaces and pairwise orthogonal in the
canonical inner product.
\end{theorem}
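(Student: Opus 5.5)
The plan is a purely finite-dimensional linear-algebra argument in the Euclidean space $C_1(K)=\mathbb{R}^{|E|}$, built on Lemma~\ref{lem:chain-identity} and the kernel characterization in Eq.~\ref{eq:kernel-app}. The basic tool is the identity $\mathrm{im}(A)^\perp=\ker(A^\top)$, valid for any real matrix $A$ with the canonical inner product.

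The proof has three steps.

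\begin{enumerate}
\item \textbf{Pairwise orthogonality.} For $x\in C_0(K)$ and $y\in C_2(K)$,
\[
\langle \partial_1^\top x,\partial_2 y\rangle=\langle x,\partial_1\partial_2 y\rangle=0
\]
by the chain identity, so $\mathrm{im}(\partial_1^\top)\perp\mathrm{im}(\partial_2)$. Next take $h\in\ker(L_1)$. By Eq.~\ref{eq:kernel-app}, $\partial_1 h=0$ and $\partial_2^\top h=0$. Then $\langle \partial_1^\top x,h\rangle=\langle x,\partial_1 h\rangle=0$ and $\langle \partial_2 y,h\rangle=\langle y,\partial_2^\top h\rangle=0$. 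Hence the three subspaces are mutually orthogonal, and in particular their sum is direct.

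\item \textbf{The three subspaces span $C_1(K)$.} It suffices to show that the orthogonal complement of $\mathrm{im}(\partial_1^\top)+\mathrm{im}(\partial_2)$ lies in $\ker(L_1)$. Suppose $v$ is orthogonal to both images. Then
\[
v\in\mathrm{im}(\partial_1^\top)^\perp\cap\mathrm{im}(\partial_2)^\perp=\ker(\partial_1)\cap\ker(\partial_2^\top)=\ker(L_1),
\]
again by Eq.~\ref{eq:kernel-app}. Conversely, Step~1 shows $\ker(L_1)$ lies in that complement, so the two coincide. Since $C_1(K)$ is finite-dimensional, any subspace $W$ satisfies $C_1=W\oplus W^\perp$; applying this to $W=\mathrm{im}(\partial_1^\top)\oplus\mathrm{im}(\partial_2)$ yields Eq.~\ref{eq:hodge-app}.

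\item \textbf{Uniqueness and orthogonality of the components.} Uniqueness of $b=b_{\mathrm{grad}}+b_{\mathrm{curl}}+b_{\mathrm{harm}}$ follows immediately from directness. Indeed, if two decompositions of $b$ are given, their difference expresses $0$ as a sum of mutually orthogonal vectors, one from each subspace. Taking the inner product of this sum with each summand shows every summand is zero. Pairwise orthogonality of the components is inherited from Step~1.
\end{enumerate}

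There is no serious obstacle. The only points needing care are these:
\begin{itemize}
\item the adjoint identities use the fixed canonical inner product, which is the unweighted choice made in App.~\ref{app:complexes};
\item the equality $\ker(L_1)=\ker\partial_1\cap\ker\partial_2^\top$ relies on positive semi-definiteness, via $\langle L_1 b,b\rangle=\|\partial_1 b\|^2+\|\partial_2^\top b\|^2$, which is already recorded in Eq.~\ref{eq:kernel-app}.
\end{itemize}

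As a by-product, the argument gives $\ker(L_1)=\ker(\partial_1)\cap\mathrm{im}(\partial_2)^\perp$. Inside $\ker(\partial_1)$ this is the orthogonal complement of $\mathrm{im}(\partial_2)$, which has dimension $\dim\ker\partial_1-\mathrm{rank}\,\partial_2=\beta_1(K)$. This recovers Eq.~\ref{eq:betti}, if that statement is wanted alongside the theorem.
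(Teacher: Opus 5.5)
Your proposal is correct and follows essentially the same route as the paper: orthogonality from the chain identity and from $\ker(L_1)=\ker(\partial_1)\cap\ker(\partial_2^\top)$, then spanning via $\mathrm{im}(A)^\perp=\ker(A^\top)$, and uniqueness from directness. The only cosmetic difference is that you get the complement of $\mathrm{im}(\partial_1^\top)+\mathrm{im}(\partial_2)$ in one step as an intersection of complements, whereas the paper first splits off $\ker(\partial_1)$ and then decomposes it using $\mathrm{im}(\partial_2)\subseteq\ker(\partial_1)$.
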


\begin{proof}
We prove three orthogonality statements and a dimension count.

\textbf{Step 1 (orthogonality of $\mathrm{im}(\partial_1^\top)$ and
$\mathrm{im}(\partial_2)$).} Take $u = \partial_1^\top \alpha \in
\mathrm{im}(\partial_1^\top)$ for some $\alpha \in C_0(K)$, and
$v = \partial_2 \beta \in \mathrm{im}(\partial_2)$ for some $\beta \in
C_2(K)$. Then
\begin{equation*}
  \langle u, v\rangle
    \;=\; \langle \partial_1^\top \alpha, \partial_2 \beta\rangle
    \;=\; \langle \alpha, \partial_1\partial_2\beta\rangle
    \;=\; \langle \alpha, 0\rangle
    \;=\; 0,
\end{equation*}
where the third equality is Lemma~\ref{lem:chain-identity}. Hence the two
images are orthogonal.

\textbf{Step 2 (orthogonality of both images to $\ker(L_1)$).} If
$h \in \ker(L_1)$, by Eq.~\ref{eq:kernel-app} we have $\partial_1 h = 0$ and
$\partial_2^\top h = 0$. So for any $\alpha \in C_0$ and any $\beta \in C_2$,
$\langle \partial_1^\top \alpha, h\rangle = \langle\alpha, \partial_1 h\rangle = 0$
and
$\langle \partial_2 \beta, h\rangle = \langle\beta, \partial_2^\top h\rangle = 0$.
Hence $\ker(L_1)$ is orthogonal to both images.

\textbf{Step 3 (the three subspaces span $C_1(K)$).} The orthogonal complement
of $\mathrm{im}(\partial_1^\top)$ in $C_1$ is $\ker(\partial_1)$ (a standard
identity in linear algebra:
$\mathrm{im}(A^\top)^\perp = \ker(A)$ for any matrix $A$). The orthogonal
complement of $\mathrm{im}(\partial_2)$ inside $\ker(\partial_1)$ is then
$\ker(\partial_1) \cap \mathrm{im}(\partial_2)^\perp$, which by another
application of the same identity equals $\ker(\partial_1) \cap
\ker(\partial_2^\top)$. By Eq.~\ref{eq:kernel-app}, this intersection is
exactly $\ker(L_1)$. Therefore
\begin{equation*}
  C_1(K)
    \;=\; \mathrm{im}(\partial_1^\top)
        \,\oplus\, \big(\ker(\partial_1)\big)
    \;=\; \mathrm{im}(\partial_1^\top)
        \,\oplus\, \big(\mathrm{im}(\partial_2) \,\oplus\, \ker(L_1)\big),
\end{equation*}
using the chain identity $\mathrm{im}(\partial_2) \subseteq
\ker(\partial_1)$ from Lemma~\ref{lem:chain-identity}, and the
orthogonal-complement identities above. Combined with Steps 1 and 2, this
gives the orthogonal direct sum~Eq.~\ref{eq:hodge-app}.

The decomposition $b = b_{\mathrm{grad}} + b_{\mathrm{curl}} +
b_{\mathrm{harm}}$ is then unique because each summand is the orthogonal
projection of $b$ onto the corresponding subspace.
\end{proof}

\subsection{Explicit projection formulas}
\label{app:projection}

The orthogonal projectors onto the three subspaces have closed-form
expressions in terms of the boundary operators. Let $A^{+}$ denote the
Moore-Penrose pseudoinverse of $A$.

\begin{proposition}[Hodge projectors]
\label{prop:projectors}
The orthogonal projectors $P_{\mathrm{grad}}$, $P_{\mathrm{curl}}$,
$P_{\mathrm{harm}} : C_1(K) \to C_1(K)$ onto
$\mathrm{im}(\partial_1^\top)$, $\mathrm{im}(\partial_2)$, and $\ker(L_1)$
respectively are
\begin{align}
  \label{eq:proj-grad}
  P_{\mathrm{grad}}
    &\;=\; \partial_1^\top \big(\partial_1\partial_1^\top\big)^{+} \partial_1
    \;=\; \partial_1^\top L_0^{+} \partial_1, \\
  \label{eq:proj-curl}
  P_{\mathrm{curl}}
    &\;=\; \partial_2 \big(\partial_2^\top\partial_2\big)^{+} \partial_2^\top
    \;=\; \partial_2 L_2^{+} \partial_2^\top, \\
  \label{eq:proj-harm}
  P_{\mathrm{harm}}
    &\;=\; I_{|E|} - P_{\mathrm{grad}} - P_{\mathrm{curl}},
\end{align}
where $L_0 = \partial_1\partial_1^\top$ is the standard graph Laplacian on
$V$ and $L_2 = \partial_2^\top \partial_2$ is the analogous operator on
$T$.
\end{proposition}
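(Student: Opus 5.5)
The plan is to prove the three formulas in turn. The first two come from one lemma on orthogonal projection onto the column space of a matrix; the third follows from Theorem~\ref{thm:hodge-app}.

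\textbf{Step 1 (general lemma).} For any real matrix $A$, the matrix $P_A := A(A^\top A)^{+}A^\top$ is the orthogonal projector onto $\mathrm{im}(A)$. I would show this from the SVD $A = U\Sigma V^\top$. Then $A^\top A = V\Sigma^\top\Sigma V^\top$, so $(A^\top A)^{+} = V(\Sigma^\top\Sigma)^{+}V^\top$. Substituting gives $P_A = U\,\Sigma(\Sigma^\top\Sigma)^{+}\Sigma^\top\,U^\top$. The middle factor $\Sigma(\Sigma^\top\Sigma)^{+}\Sigma^\top$ is diagonal, with a $1$ in each position of a nonzero singular value and $0$ elsewhere. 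Hence $P_A = U_r U_r^\top$, where $U_r$ holds the left singular vectors of the nonzero singular values. These vectors form an orthonormal basis of $\mathrm{im}(A)$, so $P_A$ is symmetric, idempotent, and has range $\mathrm{im}(A)$.

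\textbf{Step 2 (apply the lemma).} Taking $A = \partial_2$ gives Eq.~\ref{eq:proj-curl} directly, with $L_2 = \partial_2^\top\partial_2$. Taking $A = \partial_1^\top$ gives $\partial_1^\top(\partial_1\partial_1^\top)^{+}\partial_1$, which projects onto $\mathrm{im}(\partial_1^\top)$; this is Eq.~\ref{eq:proj-grad}, with $L_0 = \partial_1\partial_1^\top$. I would also check that $\partial_1\partial_1^\top$ equals the graph Laplacian $D - A$ of the $1$-skeleton, which is a routine incidence computation.

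\textbf{Step 3 (harmonic projector).} By Theorem~\ref{thm:hodge-app}, $C_1(K)$ is the orthogonal direct sum of the three subspaces. The projectors onto mutually orthogonal complementary subspaces therefore sum to the identity. Equivalently, $P_{\mathrm{grad}} + P_{\mathrm{curl}}$ is the projector onto $\mathrm{im}(\partial_1^\top)\oplus\mathrm{im}(\partial_2)$: orthogonality, from Step~1 of Theorem~\ref{thm:hodge-app}, gives $P_{\mathrm{grad}}P_{\mathrm{curl}} = 0$. So $I - P_{\mathrm{grad}} - P_{\mathrm{curl}}$ projects onto the orthogonal complement of that sum, which is $\ker(L_1)$. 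This is Eq.~\ref{eq:proj-harm}.

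The only step that needs care is the pseudoinverse identity in Step~1, in particular handling rank deficiency correctly. For $\partial_1$ the rank is always deficient, since the constant vector lies in $\ker\partial_1^\top$. The SVD argument handles this uniformly, so I do not expect any real obstacle.
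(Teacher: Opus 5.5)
Your proposal is correct and follows the paper's proof. Both arguments reduce the first two formulas to the fact that $A(A^\top A)^{+}A^\top$ is the orthogonal projector onto $\mathrm{im}(A)$, apply it with $A=\partial_1^\top$ and $A=\partial_2$, and then obtain $P_{\mathrm{harm}}$ from the orthogonal Hodge decomposition; the only difference is that you verify the pseudoinverse identity explicitly through the SVD, where the paper cites it as standard.
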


\begin{proof}
For any real matrix $A$, $A A^{+}$ is the orthogonal projector onto
$\mathrm{im}(A)$ in the Euclidean inner product, and the singular value
decomposition gives the equivalent closed form
$A A^{+} = A(A^\top A)^{+} A^\top$ (a standard consequence of the
Moore-Penrose definition; see e.g.\ \citealp{lim2020hodge}). Apply
this with $A = \partial_1^\top$ to get Eq.~\ref{eq:proj-grad}, and with
$A = \partial_2$ to get Eq.~\ref{eq:proj-curl}. Equation~\ref{eq:proj-harm}
follows from the orthogonal direct sum
$C_1(K) = \mathrm{im}(\partial_1^\top) \oplus \mathrm{im}(\partial_2) \oplus
\ker(L_1)$ and the fact that orthogonal projectors onto orthogonal subspaces
sum to the identity.
\end{proof}

In practice, computing $P_{\mathrm{harm}} b$ does not require forming any of
the three matrices $L_0^{+}$, $L_2^{+}$, $P_{\mathrm{harm}}$. We instead
solve two least-squares systems:
\begin{enumerate}
  \item find $\alpha \in C_0$ minimizing
        $\norm{\partial_1^\top \alpha - b}^2$, then set
        $b_{\mathrm{grad}} = \partial_1^\top \alpha$;
  \item find $\beta \in C_2$ minimizing
        $\norm{\partial_2 \beta - (b - b_{\mathrm{grad}})}^2$, then set
        $b_{\mathrm{curl}} = \partial_2 \beta$;
  \item set $b_{\mathrm{harm}} = b - b_{\mathrm{grad}} - b_{\mathrm{curl}}$.
\end{enumerate}
Each least-squares step is solved by a dense Moore--Penrose
pseudoinverse (\texttt{numpy.linalg.pinv}) of
$L_0 \in \mathbb{R}^{|V| \times |V|}$ for the gradient step and of
$L_2 \in \mathbb{R}^{|T| \times |T|}$ for the curl step
(Eqs.~\ref{eq:proj-grad}--\ref{eq:proj-curl}), giving a per-layer
cost of $O\bigl(|V|^3 + |T|^3 + |E|\,|V|^2\bigr)$. With
$|V| = n \le 256$ and $|T| \le 500$ this projection runs in well
under two seconds per layer at every model scale in this paper, and
is dominated by the pairwise-barrier sweep
(App.~\ref{app:complexity}).

\subsection{Harmonic energy as irreducible mergeability residual}
\label{app:harmonic-residual}

This appendix proves Proposition~\ref{prop:harmonic-residual} from
main-body Section~\ref{sub:hodge}. We first make the lower-order
explainable subspace and the irreducible residual energy explicit, then
give the proof and an interpretation paragraph.

Recall from App.~\ref{app:boundary} the boundary operators
$\partial_1 : C_1(K) \to C_0(K)$ and
$\partial_2 : C_2(K) \to C_1(K)$ satisfying
$\partial_1\partial_2 = 0$ (Lemma~\ref{lem:chain-identity}), and from
Theorem~\ref{thm:hodge-app} the orthogonal direct-sum decomposition
$C_1(K) = \mathrm{im}(\partial_1^\top) \oplus \mathrm{im}(\partial_2)
\oplus \ker(L_1)$ in the canonical inner product. Define the
\emph{lower-order explainable subspace}
\begin{equation}
  \label{eq:M-subspace-app}
  \mathcal{M}_K
    \;:=\;
    \mathrm{im}(\partial_1^\top)\,\oplus\,\mathrm{im}(\partial_2)
    \;\subseteq\;
    C_1(K),
\end{equation}
which by the same theorem is closed in $C_1(K)$. Elements of
$\mathcal{M}_K$ are exactly the edge-barrier signals expressible as
$m = \partial_1^\top \phi + \partial_2 \psi$ for some vertex potential
$\phi \in C_0(K)$ and triangle potential $\psi \in C_2(K)$: the first
term is the lift to edges of a per-vertex \emph{is expert $i$ broadly
mergeable} score, the second term is a triangle-boundary correction
that re-distributes barrier mass coherently around faces of $K$. The
\emph{irreducible residual energy} of $b \in C_1(K)$ relative to
$\mathcal{M}_K$ is
\begin{equation}
  \label{eq:I-residual-app}
  \mathcal{I}_K(b)
    \;:=\;
    \inf_{\phi \in C_0(K),\,\psi \in C_2(K)}
      \norm{b - \partial_1^\top\phi - \partial_2\psi}^2
    \;=\;
    \inf_{m \in \mathcal{M}_K}\,\norm{b - m}^2,
\end{equation}
i.e.\ the smallest squared error of any vertex-potential plus
triangle-boundary explanation of $b$. Proposition~\ref{prop:harmonic-residual}
states $\mathcal{I}_K(b) = \norm{b_{\mathrm{harm}}}^2$ with unique
minimizer $m^\star = b_{\mathrm{grad}} + b_{\mathrm{curl}}$ as an
element of $\mathcal{M}_K$ (the representing potentials
$\phi, \psi$ in Eq.~\ref{eq:I-residual-app} need not be unique because
$\partial_1^\top$ and $\partial_2$ may have non-trivial kernels).

\begin{proof}[Proof of Proposition~\ref{prop:harmonic-residual}]
By Theorem~\ref{thm:hodge-app}, $\mathcal{M}_K$ is a closed subspace of
$C_1(K)$ with orthogonal complement $\mathcal{M}_K^\perp = \ker(L_1)$.
The unique decomposition $b = b_{\mathrm{grad}} + b_{\mathrm{curl}} +
b_{\mathrm{harm}}$ has $b_{\mathrm{grad}} + b_{\mathrm{curl}} \in
\mathcal{M}_K$ and $b_{\mathrm{harm}} \in \mathcal{M}_K^\perp$. For any
$m \in \mathcal{M}_K$, write
\begin{equation*}
  b - m
    \;=\;
    \big(b_{\mathrm{grad}} + b_{\mathrm{curl}} - m\big)
    \;+\; b_{\mathrm{harm}},
\end{equation*}
where the first summand lies in $\mathcal{M}_K$ and the second in
$\mathcal{M}_K^\perp$. By Pythagoras,
\begin{equation*}
  \norm{b - m}^2
    \;=\;
    \norm{b_{\mathrm{grad}} + b_{\mathrm{curl}} - m}^2
    \;+\;
    \norm{b_{\mathrm{harm}}}^2.
\end{equation*}
The first summand is non-negative and equals zero exactly at
$m = b_{\mathrm{grad}} + b_{\mathrm{curl}}$. Therefore
$\inf_{m \in \mathcal{M}_K}\,\norm{b - m}^2 = \norm{b_{\mathrm{harm}}}^2$,
attained uniquely at $m^\star = b_{\mathrm{grad}} + b_{\mathrm{curl}}$.
\end{proof}

\textbf{Interpretation.} The proposition shows that the harmonic component
$b_{\mathrm{harm}}$ is exactly the residue left in $b$ after removing
every barrier pattern expressible as (i) a vertex-level expert
potential, through $\mathrm{im}(\partial_1^\top)$, and (ii) a
triangle-boundary potential, through $\mathrm{im}(\partial_2)$.
Therefore the normalized harmonic energy fraction
\begin{equation*}
  \rho_{\mathrm{harm}}(\ell)
    \;=\;
    \frac{\norm{P_{\mathrm{harm}}\,b^{(\ell)}}^2}
         {\norm{b^{(\ell)}}^2}
\end{equation*}
in Eq.~\ref{eq:harmonic-frac} is the fraction of layer-$\ell$
edge-barrier energy that no such lower-order explanation can capture.

The statement is a property of the edge-barrier cochain $b$ on $K$; it
does not by itself yield an unconditional lower bound on the
calibration KL loss $\mathcal{L}(S, \pi)$ of Eq.~\ref{eq:objective},
which depends on a model of how a compression action exposes edge
barriers. App.~\ref{app:kl-corollary} introduces one such edge-exposure
corollary, used only as interpretation and not as an assumption in the
HodgeCover algorithm of Section~\ref{sec:method}.

\subsection{Conditional link to compression loss under edge-exposure linearization}
\label{app:kl-corollary}

This appendix gives one concrete sense in which the harmonic residual
of Proposition~\ref{prop:harmonic-residual} is the part of the
edge-barrier landscape that a harmonic-blind selector can miss when
predicting the calibration KL loss $\mathcal{L}(S, \pi)$ of
Eq.~\ref{eq:objective}. The construction introduces an explicit
approximation hypothesis (an edge-exposure linearization of
$\mathcal{L}$) and is included only as interpretation; the HodgeCover
algorithm of Section~\ref{sec:method} does not rely on it.

Throughout, write $a := (S, \pi)$ for a compression action and
$\mathcal{L}(a) := \mathcal{L}(S, \pi)$ for its calibration KL loss
(Eq.~\ref{eq:objective}). Set
$b_{\mathrm{lo}} := b_{\mathrm{grad}} + b_{\mathrm{curl}} \in
\mathcal{M}_K$, so the Hodge decomposition reads
$b = b_{\mathrm{lo}} + b_{\mathrm{harm}}$.

\textbf{Edge-exposure linearization.} Suppose there exists a linear
\emph{edge-exposure} vector $w_a \in C_1(K)$ and an action-dependent
baseline $\beta_a \in \mathbb{R}$ such that
\begin{equation}
  \label{eq:edge-exposure-app}
  \mathcal{L}(a)
    \;=\;
    \beta_a \;+\; \langle w_a,\, b\rangle \;+\; \xi_a,
    \qquad |\xi_a| \le \varepsilon,
\end{equation}
for some uniform approximation error $\varepsilon \ge 0$. The bilinear
form $\langle w_a,\, b\rangle$ aggregates how strongly action $a$
\emph{exposes} each pairwise merge, with $\beta_a$ absorbing the
zeroth-order action-dependent offset and $\xi_a$ the linearization
residual. The hypothesis is non-vacuous only after fixing the
baseline $\beta_a$ and the exposure map $a \mapsto w_a$ by a specified
local approximation scheme; the corollary below does not assert that
any particular such scheme is accurate, only what its harmonic-blind
prediction error must look like once one has been fixed. A harmonic-blind surrogate that sees only $b_{\mathrm{lo}}$
would predict
\begin{equation}
  \label{eq:L-lo-app}
  \mathcal{L}_{\mathrm{lo}}(a)
    \;:=\;
    \beta_a \;+\; \langle w_a,\, b_{\mathrm{lo}}\rangle.
\end{equation}
Decompose $w_a = w_{a,\mathrm{lo}} + w_{a,\mathrm{harm}}$ along
$\mathcal{M}_K \oplus \mathcal{M}_K^\perp$, with
$w_{a,\mathrm{harm}} := P_{\mathrm{harm}}\,w_a$.

\begin{corollary}[Harmonic exposure under edge-exposure linearization]
\label{cor:kl-corollary}
Suppose Eq.~\ref{eq:edge-exposure-app} holds. For every action $a$,
\begin{equation}
  \label{eq:kl-corollary-app}
  \mathcal{L}(a) - \mathcal{L}_{\mathrm{lo}}(a)
    \;=\;
    \langle w_{a,\mathrm{harm}},\, b_{\mathrm{harm}}\rangle + \xi_a,
\end{equation}
hence
\begin{equation}
  \label{eq:kl-corollary-bound-app}
  \big|\mathcal{L}(a) - \mathcal{L}_{\mathrm{lo}}(a)\big|
    \;\le\;
    \norm{w_{a,\mathrm{harm}}}\,\norm{b_{\mathrm{harm}}} + \varepsilon.
\end{equation}
For any two actions $a, a'$,
\begin{equation}
  \label{eq:kl-ordering-app}
  \big(\mathcal{L}(a) - \mathcal{L}(a')\big)
    \;-\;
    \big(\mathcal{L}_{\mathrm{lo}}(a) - \mathcal{L}_{\mathrm{lo}}(a')\big)
    \;=\;
    \big\langle P_{\mathrm{harm}}(w_a - w_{a'}),\, b_{\mathrm{harm}}\big\rangle
    + (\xi_a - \xi_{a'}),
\end{equation}
and consequently
\begin{equation}
  \label{eq:kl-ranking-bound-app}
  \Big|\big(\mathcal{L}(a) - \mathcal{L}(a')\big)
    -
    \big(\mathcal{L}_{\mathrm{lo}}(a) - \mathcal{L}_{\mathrm{lo}}(a')\big)\Big|
    \;\le\;
    \norm{P_{\mathrm{harm}}(w_a - w_{a'})}\,\norm{b_{\mathrm{harm}}}
      \;+\; 2\varepsilon.
\end{equation}
\end{corollary}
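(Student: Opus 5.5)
The plan is to subtract the linearization (Eq.~\ref{eq:edge-exposure-app}) from the harmonic-blind surrogate (Eq.~\ref{eq:L-lo-app}) and then use the orthogonality furnished by Theorem~\ref{thm:hodge-app}.

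\textbf{Step 1: the identity for a single action.} Since $b = b_{\mathrm{lo}} + b_{\mathrm{harm}}$, subtracting the two equations gives
\[
\mathcal{L}(a) - \mathcal{L}_{\mathrm{lo}}(a) = \langle w_a, b_{\mathrm{harm}}\rangle + \xi_a .
\]
The baseline $\beta_a$ appears in both and cancels.

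\textbf{Step 2: reduce to the harmonic part of $w_a$.} Split $w_a = w_{a,\mathrm{lo}} + w_{a,\mathrm{harm}}$ along $\mathcal{M}_K \oplus \mathcal{M}_K^\perp$. By the proof of Proposition~\ref{prop:harmonic-residual}, $\mathcal{M}_K^\perp = \ker(L_1)$, and $b_{\mathrm{harm}}$ lies in it. Hence $\langle w_{a,\mathrm{lo}}, b_{\mathrm{harm}}\rangle = 0$. Equivalently, $P_{\mathrm{harm}}$ is a symmetric idempotent (Proposition~\ref{prop:projectors}) with $P_{\mathrm{harm}} b_{\mathrm{harm}} = b_{\mathrm{harm}}$, so
\[
\langle w_a, b_{\mathrm{harm}}\rangle = \langle w_a, P_{\mathrm{harm}} b_{\mathrm{harm}}\rangle = \langle P_{\mathrm{harm}} w_a, b_{\mathrm{harm}}\rangle .
\]
This yields Eq.~\ref{eq:kl-corollary-app}.

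\textbf{Step 3: the single-action bound.} Apply the triangle inequality and Cauchy--Schwarz to Eq.~\ref{eq:kl-corollary-app}, together with $|\xi_a| \le \varepsilon$. This gives Eq.~\ref{eq:kl-corollary-bound-app}.

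\textbf{Step 4: the pairwise identity and bound.} Write Eq.~\ref{eq:kl-corollary-app} for $a$ and for $a'$ and subtract. Linearity of $P_{\mathrm{harm}}$ gives
\[
P_{\mathrm{harm}} w_a - P_{\mathrm{harm}} w_{a'} = P_{\mathrm{harm}}(w_a - w_{a'}),
\]
which is Eq.~\ref{eq:kl-ordering-app}. Cauchy--Schwarz plus $|\xi_a - \xi_{a'}| \le 2\varepsilon$ then gives Eq.~\ref{eq:kl-ranking-bound-app}.

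There is no genuine obstacle here. The only point needing care is Step 2: the cross term $\langle w_{a,\mathrm{lo}}, b_{\mathrm{harm}}\rangle$ vanishes because $\mathcal{M}_K$ is exactly orthogonal to $\ker(L_1)$. That fact comes from Theorem~\ref{thm:hodge-app}; it does not depend on any property of $w_a$, which is arbitrary.
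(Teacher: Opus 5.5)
Your proof is correct and follows the paper's argument essentially step for step: cancel $\beta_a$, kill the cross term because $(I-P_{\mathrm{harm}})w_a \in \mathcal{M}_K$ is orthogonal to $b_{\mathrm{harm}} \in \ker(L_1)$, then apply Cauchy--Schwarz and the linearity of $P_{\mathrm{harm}}$ together with $|\xi_a|\le\varepsilon$ and $|\xi_a-\xi_{a'}|\le 2\varepsilon$. Your opening sentence states the subtraction in the reverse order, but Step 1 writes it with the correct sign, so nothing needs fixing.
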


\begin{proof}
Subtracting Eq.~\ref{eq:L-lo-app} from Eq.~\ref{eq:edge-exposure-app}
gives
$\mathcal{L}(a) - \mathcal{L}_{\mathrm{lo}}(a)
  = \langle w_a,\, b - b_{\mathrm{lo}}\rangle + \xi_a
  = \langle w_a,\, b_{\mathrm{harm}}\rangle + \xi_a$.
Because $b_{\mathrm{harm}} \in \ker(L_1) = \mathcal{M}_K^\perp$ and
$P_{\mathrm{harm}}$ is the orthogonal projector onto $\ker(L_1)$,
\begin{equation*}
  \langle w_a,\, b_{\mathrm{harm}}\rangle
    \;=\;
    \langle P_{\mathrm{harm}}\,w_a,\, b_{\mathrm{harm}}\rangle
    \;+\;
    \langle (I - P_{\mathrm{harm}})\,w_a,\, b_{\mathrm{harm}}\rangle
    \;=\;
    \langle w_{a,\mathrm{harm}},\, b_{\mathrm{harm}}\rangle,
\end{equation*}
where the second term vanishes because
$(I - P_{\mathrm{harm}})\,w_a \in \mathcal{M}_K$ is orthogonal to
$b_{\mathrm{harm}} \in \mathcal{M}_K^\perp$. This proves
Eq.~\ref{eq:kl-corollary-app}; Eq.~\ref{eq:kl-corollary-bound-app} is
the Cauchy-Schwarz inequality applied to that inner product, plus
$|\xi_a| \le \varepsilon$. Equation~\ref{eq:kl-ordering-app} follows by
subtracting two instances of Eq.~\ref{eq:kl-corollary-app} and using
linearity of $P_{\mathrm{harm}}$, and Eq.~\ref{eq:kl-ranking-bound-app}
is the Cauchy-Schwarz inequality applied to the right-hand-side inner
product, plus $|\xi_a - \xi_{a'}| \le 2\varepsilon$.
\end{proof}

\textbf{Interpretation.} Under the edge-exposure linearization
Eq.~\ref{eq:edge-exposure-app}, the harmonic-blind prediction error of
$\mathcal{L}_{\mathrm{lo}}$ and its ranking error between any two
actions are controlled exactly by the harmonic exposure of $b$, with no
contribution from any $b_{\mathrm{lo}}$-aligned exposure direction
(Eqs.~\ref{eq:kl-corollary-app}, \ref{eq:kl-ordering-app}). Whenever
the right-hand side of Eq.~\ref{eq:kl-ranking-bound-app} exceeds
$|\mathcal{L}_{\mathrm{lo}}(a) - \mathcal{L}_{\mathrm{lo}}(a')|$, a
harmonic-blind selector cannot certify the true ordering of
$\mathcal{L}(a)$ and $\mathcal{L}(a')$. We re-emphasize that
HodgeCover does not assume Eq.~\ref{eq:edge-exposure-app} when
planning a compression: it covers top harmonic-critical edges
(Eq.~\ref{eq:hc-objective}) directly. The corollary is included to
make precise the role harmonic-edge coverage plays under any
locally-linear approximation of the calibration KL loss, and is not a
substantive predictive model in its own right.

\subsection{Effect of edge thresholding}
\label{app:thresholding}

A natural simplification is to keep only edges with $b_{ij} \le \tau_e$ for
some threshold $\tau_e$, giving the thresholded graph
$G_\tau = (V, E_\tau)$ with $E_\tau = \{(i, j) : b_{ij} \le \tau_e\}$. We
argue against this choice:
\begin{enumerate}
  \item $\mathrm{rank}\,\partial_1 = |V| - |\pi_0(G_\tau)|$, so the rank of
        $\partial_1$ changes whenever edge deletions disconnect $G_\tau$
        (creating a new component); deleting a non-bridge edge of $G_\tau$
        leaves $\mathrm{rank}\,\partial_1$ unchanged but decreases
        $\dim \ker(\partial_1)$ by one, while deleting a bridge does the
        reverse;
  \item the edge chain space $C_1$ shrinks, and every triangle
        $\{i, j, k\} \in T$ incident on a deleted edge is also removed,
        changing the rank of $\partial_2$ and therefore the curl subspace
        $\mathrm{im}(\partial_2)$;
  \item $\beta_1(K)$ changes by Eq.~\ref{eq:euler}, with the change
        depending non-monotonically on whether the deleted edge is a
        bridge of $G_\tau$, lies on a non-trivial cycle, or borders a
        triangle in $T$.
\end{enumerate}
The harmonic projector $P_{\mathrm{harm}}$ is not continuous in the
combinatorial topology of $K$: a small change in $\tau_e$ can detach a
harmonic cycle that previously contributed to $\ker(L_1)$ and re-attach it
as a boundary. This makes $\rho_{\mathrm{harm}}(\ell)$ depend strongly on
$\tau_e$ rather than on the underlying barrier signal $b$, defeating the
purpose of the diagnostic.

We therefore use $E = \binom{V}{2}$ (the complete edge set), with all
barriers $b_{ij}$ retained as the signal. Numerically, this works for the
MoE sizes considered: $|E| = 2{,}016$ for OLMoE and $|E| = 32{,}640$ for
both Qwen variants; the sparse least-squares solves in
App.~\ref{app:projection} run in well under a second per layer on a
single CPU core.

\subsection{Triangle-set construction and sensitivity}
\label{app:triangle-set}

The triangle set $T$ is constructed in two stages, separating
\emph{candidate sampling} (per layer, before any Hodge analysis) from
the \emph{filtration for the Hodge complex} (per Hodge call, inside
Algorithm~\ref{alg:hodgecover}).

\emph{Stage A (candidate sampling).} On every MoE layer we first compute
all pairwise barriers $\{b_{ij}\}_{\{i,j\} \in \binom{V}{2}}$ on the
calibration corpus $\mathcal{D}$ (App.~\ref{app:calibration}). We set
the candidate threshold $\tau_{\mathrm{cand}}$ to the median of the
upper-triangular entries of $\{b_{ij}\}$ and enumerate every $3$-clique
of the thresholded graph $(V, \{e : b_e \le \tau_{\mathrm{cand}}\})$, i.e.
every unordered triple whose three pairwise barriers all sit at or below
the median. If the number of qualifying triples exceeds the cap
$|T|_{\max} = 500$ we draw a uniform-random subsample of size $|T|_{\max}$
with fixed seed $42$; otherwise we keep every qualifying triple. We then
compute $b_{ijk}$ on $\mathcal{D}$ for each kept triple. The resulting
$T \subseteq \binom{V}{3}$ with $|T| \le 500$ is the layer's candidate
triangle set, fixed for all subsequent Hodge calls. The cap
$|T|_{\max} = 500$ is set at the high end of what one triplet-barrier
sweep (one short forward pass per triple, App.~\ref{app:complexity})
finishes in the same wall-clock budget as the pairwise sweep on the
largest model used here.

\emph{Stage B (filtration for the Hodge complex).} Inside
Algorithm~\ref{alg:hodgecover}, the Hodge complex is built at a single
Betti-maximizing threshold $\tau^\star$. Concretely, we sweep $\tau$ on
an $80$-point uniform grid over $[0,\, 1.1 \cdot \max_e b_e]$, and at
each $\tau$ build $K_\tau = (V, E_\tau, T_\tau)$ with
$E_\tau = \{e : b_e \le \tau\}$ and $T_\tau = \{(i,j,k) \in T :
b_{ijk} \le \tau \text{ and all three edges of } (i,j,k) \text{ lie in
} E_\tau\}$, recording $\beta_1(K_\tau)$ via Eq.~\ref{eq:euler}. We
take $\tau^\star = \arg\max_\tau \beta_1(K_\tau)$, breaking ties by the
larger $|E_\tau|$. On every layer of all three models in this paper the
$|E_\tau|$ tie-break drives $\tau^\star$ to the high end of the grid,
where $E_{\tau^\star} = \binom{V}{2}$ and $T_{\tau^\star} = T$ (the full
Stage-A candidate set); the Hodge complex used at $\tau^\star$ is
therefore exactly the complete-edge / full-candidate-$T$ complex
specified in Section~\ref{sub:complex} and motivated in
App.~\ref{app:thresholding}, and the filtration sweep itself serves
only to certify this Betti-maximizing identity and to expose the
$\tau \to 0$ regime to the sensitivity discussion below. Stage B uses
the candidate set $T$ from Stage A as a fixed input; only the subset
of $T$ that survives the Stage~B filtration ever enters $\partial_2$
and the curl subspace.

\textbf{Sensitivity to $|T|$.} The harmonic component $b_{\mathrm{harm}}$
depends on the candidate set $T$ from Stage A through the curl subspace
$\mathrm{im}(\partial_2)$, which has rank
$\mathrm{rank}\,\partial_2 \le \min(|E|, |T|)$. Three regimes are useful:
\begin{enumerate}
  \item $T = \emptyset$ (no triangles): $\mathrm{im}(\partial_2) = \{0\}$
        and $\ker(L_1) = \ker(\partial_1)$, the cycle space of the graph
        $(V, E)$. The harmonic component then carries the entire
        non-gradient signal, and the decomposition reduces to the standard
        graph-Laplacian split into gradient and rotational components.
  \item $T = \binom{V}{3}$ (all triangles): for a complete simplicial
        2-skeleton on $V$ with $E = \binom{V}{2}$, $\beta_1$ vanishes
        \citep{hatcher2002}, so the harmonic component vanishes
        identically. At Qwen scale this would mean $|T| = \binom{256}{3}
        \approx 2.7 \times 10^6$ triangle-barrier evaluations per layer,
        which is computationally infeasible under the learning-free
        constraint.
  \item $T$ a curated subset of size $|T| = m_T \ll \binom{n}{3}$: the
        harmonic subspace has intermediate dimension, and \emph{which}
        triples enter $T$ is the central modeling choice. Stage~A above
        fixes that choice (3-cliques of the median-pairwise-barrier
        subgraph, uniform-random-subsampled at the cap with fixed seed)
        and App.~\ref{app:laplacian} verifies that $|T| = 500$ holds
        across every layer of all three models for cross-layer and
        cross-model comparability.
\end{enumerate}
The cap $|T|_{\max} = 500$ is a compromise between regimes 1 and 2: it
keeps $|T| / \binom{n}{3}$ small enough to remain learning-free yet
large enough that the kept triples concentrate on the dense, mutually
low-barrier part of the barrier graph where joint merging is non-trivial
to predict from any single pairwise barrier.

\subsection{Frequency-weighted merge and the all-zero-frequency guard}
\label{app:merge}

Let $\mathcal{R}(x') \subseteq \{1, \ldots, n\}$ denote the set of experts
selected by the router on token $x'$ (top-routing in all models studied
here, with the model-specific fan-out taken from each backbone's
released configuration; we reserve $k$ for the survivor count and use
no symbol for the router fan-out). The \emph{empirical token-routing frequency} of expert $i$ on the
calibration corpus is the corpus-level constant
\begin{equation*}
  r_i \;=\; \mathbb{E}_{x' \sim \mathcal{D}}\,
    \mathbf{1}\!\left[\,i \in \mathcal{R}(x')\,\right]
  \;=\; \frac{1}{|\mathcal{D}|}\,
    \big|\big\{x' \in \mathcal{D} : i \in \mathcal{R}(x')\big\}\big|,
\end{equation*}
which depends only on $\mathcal{D}$ and not on the evaluation input
$x$. The frequency-weighted pair merge of two experts $f_i, f_j$ is then
\begin{equation}
  \label{eq:merge-pair}
  (i \oplus j)(x)
    \;=\;
    \begin{cases}
      \dfrac{r_i\,f_i(x) + r_j\,f_j(x)}{r_i + r_j},
        & r_i + r_j > 0, \\[0.8em]
      \dfrac{1}{2}\bigl(f_i(x) + f_j(x)\bigr),
        & r_i + r_j = 0,
    \end{cases}
\end{equation}
an exact convex combination on the support of the routing measure with a
fallback to the unweighted average on the zero-measure exception. The
implementation tests $r_i + r_j < \varepsilon$ with $\varepsilon = 10^{-12}$
in place of the exact-zero predicate to guard against floating-point
underflow. An analogous formula holds for the triplet merge
$(i \oplus j \oplus k)$.

The all-zero-frequency case $r_i + r_j = 0$ arises sporadically on
high-expert-count Qwen layers (where some experts are routed by no
calibration token), and the unguarded ratio is then $0/0$. The fallback in
Eq.~\ref{eq:merge-pair} is required to keep barrier matrices NaN-free at
$n > 128$.

\subsection{Calibration corpus}
\label{app:calibration}

All barrier matrices in this paper are computed on the same calibration
corpus $\mathcal{D}$: $2{,}048$ tokens drawn from the C4 training split
\citep{raffel2020c4} with random seed $42$, fixed across all methods
(HodgeCover, REAP, REAM, MC-SMoE, STUN, Triplet-Penalty,
Triplet-Hypergraph, Greedy-Barrier). This is the same corpus used by the
published REAP and REAM baselines \citep{muzio2025reap, ream2024} for
direct comparability.

\subsection{Per-layer ranges and gradient/curl companion curves}
\label{app:extended-diagnostic}

This section reports the per-layer ranges, gradient/curl companion curves,
and the representative discordance margin referenced in
Figure~\ref{fig:diagnostic}.

The harmonic energy fraction $\rho_{\mathrm{harm}}(\ell)$
(Eq.~\ref{eq:harmonic-frac}) and the discordance fraction $\delta(\ell)$
(Eq.~\ref{eq:discordance}) take the following per-model ranges across
$16 + 40 + 48 = 104$ layers:
\begin{itemize}
  \item OLMoE: $\rho_{\mathrm{harm}} \in [0.289, 0.491]$ (mean $0.352$);
        $\delta \in [0.596, 1.000]$ (mean $0.757$, with $\delta = 1.000$
        on layer $1$, where every sampled triangle is discordant under
        the $1.2\times$ margin).
  \item Qwen3.5-35B: $\rho_{\mathrm{harm}} \in [0.387, 0.587]$
        (mean $0.503$); $\delta \in [0.114, 0.762]$ (mean $0.277$).
  \item Qwen3.5-122B: $\rho_{\mathrm{harm}} \in [0.296, 0.616]$
        (mean $0.456$); $\delta \in [0.060, 0.892]$ (mean $0.411$).
\end{itemize}
$\rho_{\mathrm{harm}}$ is depth-stable on every model, while $\delta$ is
depth-sensitive and reveals the depth at which higher-order obstructions
concentrate: the two Qwen models taper from a high value in early layers
to a low value in late layers, while OLMoE remains uniformly highly
discordant.

Figure~\ref{fig:diagnostic} reports only the harmonic and discordance
diagnostics. For completeness,
Figure~\ref{fig:diagnostic-curl-grad} reports the gradient and curl
energy fractions
\begin{equation*}
  \rho_{\mathrm{grad}}(\ell)
    \;=\;\frac{\norm{P_{\mathrm{grad}}\,b^{(\ell)}}^2}
              {\norm{b^{(\ell)}}^2},
  \qquad
  \rho_{\mathrm{curl}}(\ell)
    \;=\;\frac{\norm{P_{\mathrm{curl}}\,b^{(\ell)}}^2}
              {\norm{b^{(\ell)}}^2}
  \qquad (b^{(\ell)} \neq 0).
\end{equation*}
By Hodge orthogonality these satisfy
$\rho_{\mathrm{grad}} + \rho_{\mathrm{curl}} + \rho_{\mathrm{harm}} = 1$
at every layer where $b^{(\ell)} \neq 0$. Empirically
$\rho_{\mathrm{curl}}$ is essentially zero on the two Qwen variants
($\rho_{\mathrm{curl}} \in [6.3 \times 10^{-5}, 1.5 \times 10^{-3}]$ on
Qwen3.5-35B and $\rho_{\mathrm{curl}} \in [9.8 \times 10^{-5},
1.5 \times 10^{-3}]$ on Qwen3.5-122B at $|T| = 500$), and around $3\%$ on
OLMoE ($\rho_{\mathrm{curl}} \in [0.025, 0.049]$). The non-pairwise content
of $b$ is therefore overwhelmingly harmonic at Qwen scale and dominantly
harmonic on OLMoE; the curl term carries little additional signal at
$|T| = 500$.

\begin{figure}[!t]
  \centering
  \includegraphics[width=\linewidth]{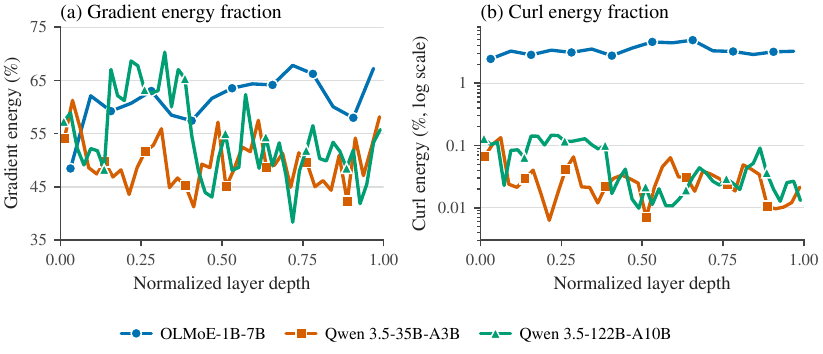}
  \caption{Gradient energy fraction $\rho_{\mathrm{grad}}(\ell)$ and curl
  energy fraction $\rho_{\mathrm{curl}}(\ell)$ at every layer of the same
  three MoE families as Figure~\ref{fig:diagnostic}. Panel (b) uses a log
  $y$-axis because OLMoE's curl fraction is roughly two orders of
  magnitude larger than the two Qwen variants'.}
  \label{fig:diagnostic-curl-grad}
\end{figure}

The discordance fraction $\delta$ in Eq.~\ref{eq:discordance} is defined
with a $1.2\times$ margin: a triangle $\{i, j, k\}$ counts as discordant
if $b_{ijk} > 1.2 \cdot \max(b_{ij}, b_{ik}, b_{jk})$. The factor $1.2$
is a single representative threshold rather than the result of a sweep;
it is the value used by the diagnostics pipeline that produced the
per-layer numbers reported in this appendix and in
Figure~\ref{fig:diagnostic}.

\section{Algorithmic Details for HodgeCover and HodgeCover+Wanda}
\label{app:algorithm}

This appendix carries the full algorithmic specification deferred from
Section~\ref{sec:method}. App.~\ref{app:hodgecover-pseudocode}
gives the per-layer HodgeCover pseudocode as Algorithm~\ref{alg:hodgecover};
App.~\ref{app:cross-layer} describes the cross-layer expert-budget
allocation that turns a global compression rate into the per-layer
counts $\{k_\ell\}_\ell$; App.~\ref{app:wanda-stage} states the Stage 2
Wanda subroutine of HodgeCover+Wanda;
App.~\ref{app:complexity} records plan-time complexity and a concrete
wall-clock measurement on Qwen3.5-35B (the full per-method, per-model
breakdown is reported in App.~\ref{app:systems});
App.~\ref{app:baseline-axis} reports the baseline-axis comparison
grid (Table~\ref{tab:baseline-axis}) referenced from
Section~\ref{sub:costs}.

\subsection{HodgeCover pseudocode}
\label{app:hodgecover-pseudocode}

Algorithm~\ref{alg:hodgecover} is the complete per-layer
specification of HodgeCover (Section~\ref{sub:hodgecover}).
The triangle set $T$ enters Algorithm~\ref{alg:hodgecover} as a
fixed input: it is the per-layer candidate set produced by Stage~A
of App.~\ref{app:triangle-set} (3-cliques of the
median-pairwise-barrier subgraph, capped at $|T| = 500$ via a
fixed-seed uniform-random subsample when more triples qualify).
The threshold $\tau^\star$ is the layer-specific Betti-maximizing
parameter of the per-Hodge-call filtration sweep (Stage~B of
App.~\ref{app:triangle-set}, executed on line~1 of
Algorithm~\ref{alg:hodgecover}); the Hodge projection at
$\tau^\star$ delivers both the harmonic edge mass
$|b_{\mathrm{harm},e}|$ used to rank $E$ in the top-$p\%$ step and
the denominator $\|b\|$ used in the Hodge-weighted redirect cost.

\begin{algorithm}[t]
\caption{HodgeCover at one MoE layer}
\label{alg:hodgecover}
\begin{algorithmic}[1]
\Require pairwise barriers $\{b_{ij}\}_{\{i,j\} \in E}$;
   triplet-barrier table $\{b_{ijk}\}_{(i,j,k) \in T}$;
   target survivor count $k$;
   non-negative saliency $\mathrm{sal} \in [0,1]^n$;
   protected-expert set $\mathrm{SE} \subseteq V$;
   hyperparameters $p, q_T, \lambda_e, \lambda_t, \alpha$
\Ensure survivor set $S^\star$ with $|S^\star| = k$, redirect map
   $\pi : V \setminus S^\star \to S^\star$
\State $\tau^\star \gets \arg\max_\tau\,\beta_1\!\big(K_\tau\big)$
   \Comment{Betti-maximizing $\tau$ for triangle set (App.~\ref{app:triangle-set})}
\State Construct $K = (V, E, T)$ with the complete edge set
   $E = \binom{V}{2}$ and triangle set $T$ at threshold $\tau^\star$
\State $b \gets$ vector of $b_{ij}$ on $E$
\State $(b_{\mathrm{grad}}, b_{\mathrm{curl}}, b_{\mathrm{harm}}) \gets$
   Hodge-decompose $b$ on $K$ (App.~\ref{app:projection})
\State $\|b\| \gets \big(\sum_{e}\,b_e^2\big)^{1/2}$
\State $E^\star \gets$ top-$p\%$ edges of $E$ ranked by
   $|b_{\mathrm{harm},e}|$
\State $T^\star \gets$ top-$q_T\%$ triangles of $T$ ranked by $|b_{ijk}|$
\State $N_E[i] \gets \{e \in E^\star : i \in e\}$ for all $i \in V$
\State $N_T[i] \gets \{\sigma \in T^\star : i \in \sigma\}$ for all
   $i \in V$
\State $S \gets \mathrm{SE} \cap V$;\quad
       $C_E \gets \bigcup_{i \in S} N_E[i]$;\quad
       $C_T \gets \bigcup_{i \in S} N_T[i]$
\While{$|S| < k$}
  \For{$i \in V \setminus S$}
    \State $\Delta(i \mid S)
       \gets \mathrm{sal}(i)
       + \lambda_e\,\dfrac{|N_E[i] \setminus C_E|}{|E^\star|}
       + \lambda_t\,\dfrac{|N_T[i] \setminus C_T|}{|T^\star|}$
  \EndFor
  \State $i^\star \gets \arg\max_{i \in V \setminus S}\,\Delta(i \mid S)$
  \State $S \gets S \cup \{i^\star\}$;\quad
         $C_E \gets C_E \cup N_E[i^\star]$;\quad
         $C_T \gets C_T \cup N_T[i^\star]$
\EndWhile
\State $S^\star \gets S$
\For{$i \in V \setminus S^\star$}
  \State $\pi(i) \gets \arg\min_{j \in S^\star}\;
     b_{ij}\big(1 + \alpha\,
       |b_{\mathrm{harm},\{i,j\}}|/\max(\|b\|, \varepsilon_0)\big)$
\EndFor
\State \Return $S^\star, \pi$
\end{algorithmic}
\end{algorithm}

A few remarks. First, the marginal-gain loop on lines~10--16
implements the standard greedy maximization of a monotone submodular
function under cardinality constraint, which guarantees a
$(1 - 1/e)$ approximation to the optimum of
Eq.~\ref{eq:hc-objective} \citep{nemhauser1978submodular}. The
implementation has $O(k\,n\,(\Delta_E + \Delta_T))$ cost in
marginal-gain evaluations, where $\Delta_E$ and $\Delta_T$ are the
maximal incidence sizes; this is dwarfed by the barrier sweep on
every layer of every model in this paper, so we adopt the plain
greedy variant rather than its lazy-incremental refinements. Second,
when
$|E^\star| = 0$ or $|T^\star| = 0$ the corresponding normalized
coverage gain is set to zero by convention; in particular, if
the harmonic component vanishes (e.g., $\beta_1(K_{\tau^\star}) = 0$),
the algorithm degenerates to a saliency-ranked greedy + triplet-only
cover. Third, the protected set $\mathrm{SE}$ supports a small number
of always-survive experts; in our reported runs $\mathrm{SE} = \emptyset$
on every layer of every model, but the option is exposed for future
extensions. Fourth, the redirect-step constant
$\varepsilon_0 = 10^{-12}$ is a numerical guard against an
all-zero edge-barrier signal; it is unrelated to the KL-linearization
error $\varepsilon$ in App.~\ref{app:kl-corollary} and to the
zero-frequency guard in App.~\ref{app:merge}.

\subsection{Proof of Proposition~\ref{prop:topological-coverage} and an inexpressibility example}
\label{app:topological-coverage-proof}

We restate Proposition~\ref{prop:topological-coverage} for
convenience: the HodgeCover objective
$\Phi(S) = \sum_{i \in S}\mathrm{sal}(i)
   + \lambda_e\,|C_E(S)|/|E^\star|
   + \lambda_t\,|C_T(S)|/|T^\star|$
is non-negative monotone submodular over $S \subseteq V$ subject to
$|S| = k$, and its greedy maximizer $S^\star$ satisfies
$\Phi(S^\star) \ge (1 - 1/e)\,\max_{|S|=k}\Phi(S)$. The
Remark~\ref{rem:inexpressibility} inexpressibility example
follows.

\textbf{Submodularity.}
$\Phi(S) = \Phi_{\mathrm{sal}}(S)
    + \lambda_e \Phi_E(S) + \lambda_t \Phi_T(S)$
with three components.
$\Phi_{\mathrm{sal}}(S) = \sum_{i \in S}\mathrm{sal}(i)$ is modular
(additive) and non-negative monotone: each $\mathrm{sal}(i) \ge 0$,
and adding an expert can only increase the sum.
$\Phi_E(S) = |\bigcup_{i \in S} N_E(i)| / |E^\star|$ is the
normalized maximum-coverage function on the universe $E^\star$ with
the incidence collection $\{N_E(i)\}_{i \in V}$. Maximum-coverage
functions are non-negative monotone submodular set functions: for any
$A \subseteq B \subseteq V$ and any $i \notin B$,
$\Phi_E(A \cup \{i\}) - \Phi_E(A)
   = |N_E(i) \setminus C_E(A)| / |E^\star|
   \ge |N_E(i) \setminus C_E(B)| / |E^\star|
   = \Phi_E(B \cup \{i\}) - \Phi_E(B)$
because $C_E(A) \subseteq C_E(B)$, which is precisely the diminishing-returns
characterization of submodularity \citep{nemhauser1978submodular}.
$\Phi_T(S) = |\bigcup_{i \in S} N_T(i)| / |T^\star|$ is non-negative
monotone submodular by the same argument applied to triangles.
A non-negatively-weighted sum of modular and submodular functions is
submodular, so $\Phi$ is non-negative monotone submodular over the
ground set $V$ with cardinality constraint $|S| = k$.

\textbf{Greedy approximation.}
For non-negative monotone submodular $\Phi$ subject to the cardinality
constraint $|S| = k$, \citet{nemhauser1978submodular} establish that
the greedy algorithm, which iteratively selects
$i^\star_j \in \arg\max_{i \in V \setminus S_{j-1}}\big(\Phi(S_{j-1} \cup \{i\}) - \Phi(S_{j-1})\big)$
starting from $S_0 = \emptyset$ until $|S_k| = k$, returns $S^\star = S_k$
satisfying
\begin{equation}
  \label{eq:nwf-bound}
  \Phi(S^\star)
    \;\ge\;
    \big(1 - (1 - 1/k)^k\big)\;\max_{|S| = k}\Phi(S)
    \;\ge\;
    \big(1 - 1/e\big)\;\max_{|S| = k}\Phi(S),
\end{equation}
using $(1 - 1/k)^k < 1/e$ for all $k \ge 1$. The greedy step in
Eq.~\ref{eq:greedy-marginal} (Algorithm~\ref{alg:hodgecover},
lines~10--16) is exactly this Nemhauser-Wolsey-Fisher greedy applied
to $\Phi$ when $\mathrm{SE} = \emptyset$ (the empty-initialization
setting used for every reported run, line~10 of
Algorithm~\ref{alg:hodgecover}), so HodgeCover inherits the bound.
This proves Eq.~\ref{eq:topological-coverage} of
Proposition~\ref{prop:topological-coverage}.

The factor $(1 - 1/e) \approx 0.632$ is tight in the worst case for
unconstrained monotone submodular maximization under cardinality
\citep{nemhauser1978submodular}; we make no claim of tightness for
the specific instances $\Phi$ produced by mergeability complexes, and
we do not exploit any tighter-than-worst-case behavior in either the
analysis or the experiments.

\textbf{Inexpressibility example
(Remark~\ref{rem:inexpressibility}).}
Define an \emph{individualistic vertex selector} as any algorithm
that assigns each expert $i \in V$ a scalar score $h(i)$ depending
\emph{only} on quantities indexed at the $i$-th vertex (per-expert
saliencies, per-vertex sums of pairwise barriers, per-vertex
sorted-pairwise-barrier statistics, etc.) and returns
$\hat S = \arg\max_{|S|=k} \sum_{i \in S} h(i)$. REAP, REAM
centroid selection, MC-SMoE, and the published STUN family all fit
this template. HodgeCover does not, because $\Phi$ depends on the
critical-edge incidence sets $\{N_E(i)\}_i$, which are joint
properties of $V$ that need not factor into a sum over vertices.

\emph{Construction.} Take $V = \{1, 2, 3, 4\}$, $k = 2$, and consider
the three problem instances $r \in \{a, b, c\}$ where the critical-edge
set $E^{\star,(r)}$ is the corresponding perfect matching of $K_4$:
\[
E^{\star,(a)} = \big\{\{1,2\},\{3,4\}\big\},\;
E^{\star,(b)} = \big\{\{1,3\},\{2,4\}\big\},\;
E^{\star,(c)} = \big\{\{1,4\},\{2,3\}\big\}.
\]
Keep $b$ uniform across all six edges of $K_4$ in every instance, so
that the per-vertex pairwise statistics
($\sum_{j \neq i} b_{ij}$, sorted neighbor-barrier vector, etc.) are
identical at every vertex and identical across the three instances.
Any individualistic selector $\hat S = \arg\max_{|S|=k}\sum_{i \in S} h(i)$
therefore returns the \emph{same} pair $\hat S \subseteq V$ on all
three instances (the score vector $h$ is the same constant on every
vertex, with tie-breaking inherent to the selector).

For any fixed pair $\hat S = \{i, j\}$ and any matching $M_r$, the
coverage $|C_E(\hat S)|/|E^{\star,(r)}|$ equals $1$ if $\hat S$ meets
\emph{both} edges of $M_r$ and $1/2$ if $\hat S$ meets only one. The
key observation is that $\hat S = \{i,j\}$ meets only one edge of
$M_r$ exactly when $\{i,j\}$ \emph{is} one of the two matching edges
of $M_r$ (in that case, $\{i,j\}$ shares both endpoints with one
matching edge and is disjoint from the other). Among the three
matchings of $K_4$, every fixed pair $\{i,j\}$ is itself one of the
matching edges in exactly one matching, so the worst-case coverage
of $\hat S$ over the three instances is exactly $1/2$. Since
$1/2 < (1 - 1/e) \approx 0.632$, no individualistic vertex selector
can match the $(1 - 1/e)$ bound of
Proposition~\ref{prop:topological-coverage} on this family.
HodgeCover, by contrast, sees $E^{\star,(r)}$ explicitly per
instance and selects a pair $\hat S$ that meets both edges of $M_r$,
achieving coverage $1$ in every instance.

The construction shows: the harmonic projection
$P_{\mathrm{harm}}\,b$ carries decision-relevant information that
shifts $E^\star$ between problem instances; \emph{any}
individualistic vertex score derived from per-vertex statistics of
$b$ alone is invariant to such shifts and therefore cannot match
HodgeCover's per-instance coverage. In practice, production MoE
layers do not exhibit pathological symmetry across instances, but
they do have non-trivial harmonic mass at every layer
(Section~\ref{sub:diagnostic}, with
$\rho_{\mathrm{harm}} \in [0.29, 0.62]$ across the three model
families in this paper), so an analogous gap exists empirically and
is what HodgeCover's coverage objective is built to exploit.
$\square$

\subsection{Cross-layer expert-budget allocation}
\label{app:cross-layer}

Algorithm~\ref{alg:hodgecover} is per-layer. Given a global compression
rate $r$ (as a fraction of the model's total expert count), we must
turn it into per-layer survivor counts $\{k_\ell\}_\ell$ across the
$L$ MoE layers. Two allocators are useful, and we describe both even
though only the first is used in the main results.

\emph{Uniform-per-layer allocator (HodgeCover and the REAP-family
baselines).} Let $r$ be the global drop rate, $n_\ell$ the number of
experts in MoE layer $\ell$, and
$R = \big\lfloor r \sum_\ell n_\ell \big\rfloor$ the total expert-drop
budget. The uniform allocator sets
\begin{equation}
  \label{eq:uniform-allocator}
  d_\ell \;=\; \big\lfloor R / L \big\rfloor + \mathbb{1}\!\big[\ell < (R \bmod L)\big],
  \qquad
  k_\ell \;=\; \max\!\big(\min\!\big(n_\ell - d_\ell,\; n_\ell - 1\big),\; 1\big),
\end{equation}
so each layer drops the same number of experts up to a one-expert
rounding remainder distributed deterministically to the lowest-index
layers, with a per-layer floor of $1$ surviving expert. This matches
the published default of REAP \citep{muzio2025reap} and REAM
\citep{ream2024}, and we adopt it as the HodgeCover convention.
MC-SMoE \citep{mcsmoe2024} and STUN+Wanda
\citep{xu2025stun,sun2024wanda} use their own native global allocators
in our experiments (a global frequency-threshold rule for MC-SMoE and
a global agglomerative-similarity rule for STUN+Wanda), faithful to
their original papers.

\emph{Compressibility-weighted allocator (alternative; not used for
HodgeCover main-table results).} The uniform allocator ignores
per-layer topology. A natural alternative weights the drop budget by
a per-layer compressibility score
$\sigma(\ell) = \max\!\big(1 - \rho_{\mathrm{harm}}(\ell), \varepsilon_0\big)$
(layers with less harmonic mass are easier to compress), allocating
\begin{equation*}
  d_\ell \;=\; \big\lfloor R \cdot
    \tfrac{\sigma(\ell)}{\sum_{\ell'} \sigma(\ell')} \big\rfloor,
\end{equation*}
followed by the same one-expert rounding-remainder pass and the
$d_\ell \le n_\ell - 1$ floor as Eq.~\ref{eq:uniform-allocator}. We
implemented this allocator and verified it produces a strictly
distinct partition of the global budget on all three model scales,
but did not adopt it for HodgeCover's main-table numbers because the
uniform allocator is exactly the per-layer budget that
\citet{muzio2025reap, ream2024} use; using a different allocator on
the HodgeCover row would conflate the survivor-selection objective
with the cross-layer budget assignment. All HodgeCover and REAP /
REAM rows in Section~\ref{sec:exp} therefore use
Eq.~\ref{eq:uniform-allocator}.

\subsection{HodgeCover+Wanda Stage 2}
\label{app:wanda-stage}

Stage 2 of HodgeCover+Wanda (Section~\ref{sub:hodgewanda}) reuses the
published row-wise unstructured Wanda pruner of \citet{sun2024wanda}
unmodified, applied independently to each surviving expert and to the
shared linear layers of the MoE block (gate, up, down projections in
the SwiGLU MLP, depending on architecture). For completeness,
Algorithm~\ref{alg:wanda-stage} states the per-matrix loop.
The single hyperparameter is the Stage-2 weight sparsity $r_2$,
chosen so that the combined expert-axis and weight-axis compression
matches the target total compression rate of the comparison cell;
we never retune the Wanda saliency or the row/column choice.

\begin{algorithm}[t]
\caption{HodgeCover+Wanda Stage 2 (per surviving MLP weight matrix)}
\label{alg:wanda-stage}
\begin{algorithmic}[1]
\Require weight matrix $W \in \mathbb{R}^{a \times b}$ of a survivor;
   calibration activations $X \in \mathbb{R}^{N \times b}$;
   target row sparsity $r_2 \in [0, 1)$
\Ensure pruned weight matrix $\widetilde W$
\State $\| X_{\cdot, j} \|_2 \gets \big(\sum_{n=1}^N X_{n,j}^2\big)^{1/2}$
   for $j = 1, \ldots, b$
\For{$i = 1, \ldots, a$}
  \State $S_{i, j} \gets |W_{i, j}| \cdot \|X_{\cdot, j}\|_2$
     for $j = 1, \ldots, b$
  \State $\mathcal{K}_i \gets$ indices of the top-$\big\lceil(1-r_2)\,b\big\rceil$
     entries of $\{S_{i, j}\}_{j=1}^b$
  \State $\widetilde W_{i, j} \gets W_{i, j}$ if $j \in \mathcal{K}_i$,
     else $\widetilde W_{i, j} \gets 0$
\EndFor
\State \Return $\widetilde W$
\end{algorithmic}
\end{algorithm}

The two stages share the calibration corpus $\mathcal{D}$
(App.~\ref{app:calibration}); Stage 2 reuses the activation tensor
$X$ produced during Stage 1 barrier computation, so no additional
forward pass is incurred.

\textbf{Numerical values of $r_1$ and $r_2$ used in our results.}
We fix the Stage-1 expert-axis drop rate at $r_1 = 0.20$ across every
$($model, rate$)$ cell, matching the STUN-Arctic Stage-1 setting that is
closest to our $64$- and $256$-expert backbones. Given this $r_1$ and a
target total compression rate $r_{\mathrm{tot}} \in \{0.33, 0.66\}$, the
Stage-2 Wanda sparsity is chosen so that the combined expert-axis and
weight-axis compression matches $r_{\mathrm{tot}}$:
\begin{equation}
  \label{eq:r2-protocol}
  r_2
    \;=\;
    \max\!\Bigl(0,\; \tfrac{r_{\mathrm{tot}} - r_1}{1 - r_1}\Bigr)
  \quad\Longrightarrow\quad
  r_2 \;\approx\; 0.1625 \;\;(r_{\mathrm{tot}} = 0.33),\;\;
  r_2 \;=\; 0.575 \;\;(r_{\mathrm{tot}} = 0.66),
\end{equation}
applied identically on every backbone. The matched-control hybrids
REAP+Wanda and REAM+Wanda in App.~\ref{app:full-hybrid} use the same
$(r_1, r_2)$ schedule for parity.

\subsection{Plan-time complexity and wall-clock cost}
\label{app:complexity}

\emph{Per-layer.} The dominant cost of HodgeCover at a single MoE
layer is the pairwise-barrier sweep: for $n$ experts, computing
$b_{ij}$ for every pair requires $\binom{n}{2}$ short forward passes
on $\mathcal{D}$. The triplet-barrier sweep adds $|T| \le 500$
further forward passes, by construction independent of $n$. The
Hodge projection on $K_{\tau^\star}$ is a sparse linear-algebra
operation on a graph with $|E| = \binom{n}{2}$ edges and $|T| \le 500$
triangles; for our scales ($n \le 256$) it costs at most a few
seconds per layer (App.~\ref{app:projection}). The greedy survivor
loop on lines~10--16 of Algorithm~\ref{alg:hodgecover} is
$O(k\,n\,(\Delta_E + \Delta_T))$ in marginal-gain evaluations and
fits in well under one second.

\emph{Per model.} Summed across all $L$ MoE layers, plan-time
scales as $L \cdot n^2$ in barrier sweeps. On the hardware used in
this paper, the dominant offline cost is the pairwise-barrier sweep
itself (one short forward pass on $\mathcal{D}$ per pair), with the
Hodge projection contributing under two seconds per layer at
$n = 256$ and at most $|T| = 500$. We measure the entire
learning-free HodgeCover pipeline (calibration corpus loading,
barrier matrix plus triplet-barrier computation at the
Betti-maximizing $\tau^\star$, Hodge projection, survivor selection,
and router-redirect surgery) on Qwen3.5-35B at approximately $480$
seconds end-to-end, against a measurement of approximately $25$
seconds for the published REAP pipeline run on the same calibration
corpus and the same hardware: a roughly $19\times$ overhead, paid
once and offline, amortized across the entire downstream evaluation.
The full per-method, per-model wall-clock breakdown, including
the corresponding measurements on OLMoE-1B-7B and Qwen3.5-122B-A10B,
is reported in App.~\ref{app:systems} alongside the secondary metrics.

\subsection{Baseline-axis comparison grid}
\label{app:baseline-axis}

Table~\ref{tab:baseline-axis} summarizes how HodgeCover differs from
the seven other learning-free MoE compressors benchmarked in this
paper, along three axes: which scoring signal each method assigns to
experts (or expert pairs); whether surviving experts have their
weights perturbed by the compression step; and plan-time complexity
per single MoE layer with $n$ experts, expert hidden dimension $d$,
calibration corpus $\mathcal{D}$, and curated triangle set $T$. All
table entries are learning-free as we report them; the only entry
whose published form requires a learning step is MC-SMoE
\citep{mcsmoe2024}, addressed in the table footnote and detailed in
App.~\ref{app:baseline-impl}. HodgeCover is the only entry that uses
the harmonic kernel of the simplicial Laplacian as its scoring
signal while keeping survivor weights bit-exact. The field values
for each external baseline have been verified against its original
published description.

\begin{table}[t]
  \centering
  \caption{Baseline-axis comparison grid. ``Survivor weights'' is
  whether the method modifies surviving experts' weights at
  compression time; ``Learning'' is whether the method requires any
  gradient-based fine-tuning, knowledge distillation, or LoRA pass
  beyond the calibration forward passes. Plan-time complexity is
  per single MoE layer; $n$ is the expert count, $d$ the expert
  hidden dimension, $|\mathcal{D}|$ the calibration token count, and
  $|T|$ the curated triangle-set size. ``Our ablation'' marks
  baselines that we built and report only in our paper's evaluation
  set; they are not external published methods.}
  \label{tab:baseline-axis}
  \footnotesize
  \setlength{\tabcolsep}{3pt}
  \resizebox{\linewidth}{!}{%
  \begin{tabular}{@{}llll@{}}
    \toprule
    Method & Signal & \makecell[l]{Survivor\\weights} &
    Plan-time per layer \\
    \midrule
    HodgeCover (ours)
      & harm.\ kernel of $L_1$
      & bit-exact
      & $O\big((n^2 + |T|)\,|\mathcal{D}|\,d^2\big)$ \\
    HodgeCover+Wanda (ours)
      & harm.\ kernel + Wanda saliency
      & Wanda-pruned
      & $O\big((n^2 + |T|)\,|\mathcal{D}|\,d^2\big)$ \\
    REAP \citep{muzio2025reap}
      & gate-weighted output norm
      & bit-exact
      & $O(n\,|\mathcal{D}|\,d^2)$\textsuperscript{$\ddagger$} \\
    REAM \citep{ream2024}
      & gate-coactivation + output sim.
      & freq-weighted merge
      & $O\big(n\,|\mathcal{D}|\,d^2 + n^2\,|\mathcal{D}|\big)$ \\
    MC-SMoE\textsuperscript{$\dagger$} \citep{mcsmoe2024}
      & utilization + logit cosine
      & freq-weighted merge
      & $O\big(|\mathcal{D}|\,d^2 + n^2\,|\mathcal{D}| + n\,d^2\big)$ \\
    STUN+Wanda \citep{xu2025stun, sun2024wanda}
      & router-row sim.\ + Wanda saliency
      & cluster-mean + Wanda
      & $O\big(|\mathcal{D}|\,d^2 + n^2\,d\big)$ \\
    Greedy-Barrier (our ablation)
      & pairwise KL barrier
      & freq-weighted merge
      & $O\big(n^2\,|\mathcal{D}|\,d^2\big)$ \\
    Triplet-Penalty soft (our ablation)
      & pairwise + triplet KL
      & freq-weighted merge
      & $O\big((n^2 + |T|)\,|\mathcal{D}|\,d^2\big)$ \\
    Triplet-Hypergraph hard (our ablation)
      & pairwise KL + binary triplet veto
      & freq-weighted merge
      & $O\big((n^2 + |T|)\,|\mathcal{D}|\,d^2\big)$ \\
    \bottomrule
  \end{tabular}%
  }% end resizebox
  \\[2pt]
  \begin{minipage}{0.96\linewidth}
  \footnotesize
  \textsuperscript{$\dagger$}MC-SMoE in its published form
  \citep{mcsmoe2024} additionally requires a $20{,}000$-step
  post-merge knowledge-distillation pass against the original SMoE
  teacher (Section~4.1 and Table~5 of \citealp{mcsmoe2024}). At our
  $2{,}048$-token calibration scale this KD pass overfits and
  regresses quality on every model in our evaluation set. We
  therefore report MC-SMoE without the KD step, matching the most
  common reduction adopted by subsequent MoE-compression literature
  (App.~\ref{app:baseline-impl}). \\
  \textsuperscript{$\ddagger$}The plan-time row for REAP~\cite{muzio2025reap} charges the
  cost of the observer in the official implementation,
  which runs every expert on every calibration token to populate the
  unified pruning + merging metrics. A saliency-only implementation
  admits a lower
  $O(|\mathcal{D}|\,d^2)$ bound; we report the observer-cost form
  here for parity with the other baselines, all of which are charged
  for the dominant cost of their official reference implementations.
  \end{minipage}
\end{table}

Two patterns are worth flagging. First, every external baseline that
perturbs survivor weights does so via a learning-free heuristic that
treats the merge as a per-pair operation
\citep{ream2024,mcsmoe2024,xu2025stun}; the higher-order
representational risk of those merges (e.g., merging an expert that
carries irreducible harmonic mass) is exactly what the
harmonic-kernel coverage objective in HodgeCover was designed to
sidestep, by keeping survivors bit-exact. Second, the plan-time gap
between methods that score experts from a single calibration sweep
(MC-SMoE, STUN+Wanda, and a saliency-only reading of REAP — leading
calibration term $|\mathcal{D}|\,d^2$) and methods that assemble a
pairwise-KL barrier matrix from $O(n^2)$ per-pair operations
(HodgeCover, Greedy-Barrier, Triplet-Penalty,
Triplet-Hypergraph — leading term $n^2\,|\mathcal{D}|\,d^2$) is a
factor of up to $n^2$. The official REAP and REAM reference
implementations additionally run every expert on every calibration
token, raising their leading term to $n\,|\mathcal{D}|\,d^2$ and
narrowing the asymptotic gap to a factor of $n$. Either gap is
substantial at $n = 256$, but the entire HodgeCover pipeline runs
offline, once, before any downstream evaluation, so the offline
plan-time cost is amortized across all inference calls thereafter.

\subsection{Baseline implementation notes}
\label{app:baseline-impl}

This subsection records implementation choices for the external
baselines in Table~\ref{tab:baseline-axis} that depart from a
default reading of the original paper. Three are load-bearing for
the headline numbers in Section~\ref{sec:exp}:

\emph{MC-SMoE without KD.} \citet{mcsmoe2024} (Section~4.1,
Table~5) report MC-SMoE with a mandatory $20{,}000$-step post-merge
knowledge-distillation pass against the original SMoE teacher;
Table~5 of that work shows substantial degradation when the KD pass
is removed in the original paper's training-time data regime. In our
single-pass learning-free regime the calibration corpus is
$2{,}048$ C4 tokens (App.~\ref{app:calibration}); we implemented
the KD module and ran it at this scale, found that it overfits
the calibration corpus and degrades held-out perplexity on
WikiText-103 and C4 validation by a margin larger than the gain it
provides on the calibration set itself, and accordingly removed it.
This matches the most common MC-SMoE reduction reported by
subsequent MoE-compression literature
\citep{muzio2025reap, ream2024, xu2025stun}, which also benchmark
MC-SMoE without the KD pass for comparability with learning-free
baselines. The headline implication is that our MC-SMoE numbers are
a learning-free re-implementation, are not directly comparable to
the published MC-SMoE numbers in the regime that paper targets, and
are reported to keep all benchmarked methods on the same
calibration footing rather than as a faithful reproduction.

\emph{REAP, REAM, and STUN+Wanda.} We use the published reference
implementations of REAP \citep{muzio2025reap}, REAM
\citep{ream2024}, and STUN \citep{xu2025stun} unmodified, except
that all three are run on the same fixed $2{,}048$-token C4
calibration corpus (App.~\ref{app:calibration}) to ensure all
methods are evaluated under identical calibration conditions, rather
than the $128$- to $1024$-sequence regimes used in their respective
evaluation tables. All other implementation details (cross-layer
expert-budget allocator, Stage-2 Wanda saliency, etc.) follow each
baseline's original implementation.

\emph{Greedy-Barrier, Triplet-Penalty soft, and Triplet-Hypergraph
hard} are not external baselines but ablations that we built
specifically to isolate components of HodgeCover; none of the three
appears in published MoE-compression literature. All three replace
HodgeCover's coverage-based survivor selection with a greedy
union-find merge sort, and aggregate the resulting merge groups via
the same frequency-weighted average as REAM and MC-SMoE
(App.~\ref{app:merge}); they differ only in how triplet-barrier
information enters the merge sort.
\emph{Greedy-Barrier} sorts all pairs $\{i, j\}$ in ascending order
of pairwise barrier $b_{ij}$ and greedily unions them via a
union-find loop that terminates when exactly $k$ components remain.
No Hodge decomposition, no harmonic weighting, and no triplet input.
It isolates the contribution of the pairwise-barrier signal alone.
\emph{Triplet-Penalty soft} reuses Greedy-Barrier's union-find loop
with the edge cost $b_{ij}$ replaced by
$b_{ij}\,(1 + \alpha_T\,\overline{p}_{ij})$, where $\overline{p}_{ij}$
is the layer-normalized mean of triplet barriers $b_{ijk}$ over all
$(i, j, k) \in T$ that contain edge $\{i, j\}$ and $\alpha_T \ge 0$
is the penalty strength. The same triplet table $T$ used by
HodgeCover is consumed, but no Hodge decomposition is performed; it
isolates the contribution of the harmonic kernel beyond a plain
sum-of-triplet-barriers softening.
\emph{Triplet-Hypergraph hard} reuses Greedy-Barrier's union-find
loop with a binary triplet veto inspired by hypergraph-cut
compression \citep{zhou2007hypergraph}: a candidate union is accepted
only if every triple of experts inside the resulting component (when
of size $\ge 3$) has $b_{ijk} \le \tau_T$, where $\tau_T$ is the
layer's $50$th-percentile triplet barrier; any union that would
create a component carrying a high-barrier triple is rejected and
the loop continues with the next-best edge. It isolates the cost of
a binary triangle constraint vs.\ the soft penalty of
Triplet-Penalty.
All three ablations therefore use the same frequency-weighted merge
aggregation as REAM/MC-SMoE; the bit-exact-survivor row of
Table~\ref{tab:baseline-axis} for HodgeCover is what isolates the
zero-and-redirect surgery from the topological-coverage objective.
The three ablations are revisited alongside the
Section~\ref{sec:ablations} ablation table.

\section{Hyperparameter and Design-Choice Sensitivity}
\label{app:hyperparameters}

This appendix records the rationale behind, and the analytical
robustness of, each scalar hyperparameter in
HodgeCover. App.~\ref{app:hp-defaults} lists the defaults and how
they were set; App.~\ref{app:hp-p-q} discusses the critical-simplex
fractions $p, q_T$; App.~\ref{app:hp-lambda} discusses the coverage
weights $\lambda_e, \lambda_t$; App.~\ref{app:hp-alpha} discusses the
harmonic-redirect strength $\alpha$;
App.~\ref{app:hp-triangle-sampler} comments on the sensitivity of
HodgeCover to the construction of the triangle set $T$; and
App.~\ref{app:hp-merge-metric} comments on the sensitivity to the
choice of merge metric (KL versus alternatives) used to populate the
barriers $b_{ij}$ and $b_{ijk}$. The hyperparameter values reported
here are the values used for every model and every reported
compression rate in this paper; no per-model tuning is performed and
no values are swept post-submission.

\subsection{Defaults and how they were set}
\label{app:hp-defaults}

The frozen defaults are
\begin{equation*}
  p \;=\; q_T \;=\; 20\%,
  \qquad
  \lambda_e \;=\; 1.0,
  \qquad
  \lambda_t \;=\; 0.5,
  \qquad
  \alpha \;=\; 3.0,
  \qquad
  |T| \;\le\; 500.
\end{equation*}
Each value was set on a single, MMLU-stratified Qwen3.5-35B layer
(layer $19$, the median-depth MoE layer) by inspecting the
HodgeCover output qualitatively on a separate $2{,}048$-token C4-train
sample disjoint from the calibration corpus $\mathcal{D}$
(App.~\ref{app:calibration}): whether $E^\star$ contained the edges
that the per-layer harmonic-magnitude histogram identified as
``heavy''; whether the survivor set spread roughly uniformly across
the harmonic-incidence-sorted ranking of experts; and whether the
redirect map $\pi$ avoided routing non-survivors through edges with
above-mean harmonic magnitude. Layer~$19$ is compressed normally and
appears in all reported main-table evaluations on Qwen3.5-35B; the
``held-out'' label refers only to the disjoint C4-train sample used
for hyperparameter inspection.

\subsection{\texorpdfstring{Critical-simplex fractions $p, q_T$}{Critical-simplex fractions p, q\_T}}
\label{app:hp-p-q}

The fractions $p$ and $q_T$ control the size of the
harmonic-critical edge set $E^\star$ and the triplet-critical
triangle set $T^\star$ that the coverage objective in
Eq.~\ref{eq:hc-objective} must cover. They have a clean problem-side
interpretation: $E^\star$ is the part of $K$ where the irreducible
higher-order obstruction concentrates, and $T^\star$ is the part
where joint pairwise merging amplifies pairwise cost. Two analytical
properties bound how much HodgeCover can move under perturbations
of $p$ and $q_T$.

\emph{Saturation at large $p, q_T$.} As $p, q_T \to 100\%$, every edge
and every sampled triangle becomes critical, so $E^\star = E$ and
$T^\star = T$ and the coverage gains in
Eq.~\ref{eq:greedy-marginal} reduce to per-expert vertex degrees
in $K$. Since $K$ has the complete pairwise edge set, every expert
has the same vertex degree in $E$, so the edge-coverage gain
collapses to a constant; the triangle gain remains informative
through the (non-uniform) triangle incidence in $T$. The objective
therefore degrades smoothly into a saliency-plus-triangle-degree
selector at the saturation boundary.

\emph{Vanishing at small $p, q_T$.} As $p, q_T \to 0\%$, the critical
sets shrink to single edges or single triangles, and the coverage
objective becomes saliency-plus-noise. HodgeCover is therefore
sensitive to $p, q_T$ being too small.

The default $p = q_T = 20\%$ sits well inside the non-saturated, non-
vanishing regime for every model in this paper: at $|E| = \binom{n}{2}$
and $n \in \{64, 256\}$, $|E^\star| \in [\,400,\,6500\,]$, far from
either boundary. We did not run a full sweep of $p, q_T$ across the
three model scales pre-submission, and accordingly we do not report
empirical sensitivity bands; the analytical robustness above is the
substantive claim about $p, q_T$.

\subsection{\texorpdfstring{Coverage weights $\lambda_e, \lambda_t$}{Coverage weights lambda\_e, lambda\_t}}
\label{app:hp-lambda}

The coverage weights $\lambda_e$ and $\lambda_t$ trade off saliency
against the two coverage gains in
Eq.~\ref{eq:greedy-marginal}. Because both coverage gains are
normalized by the size of the corresponding critical set, every
marginal-gain term lies in $[0, 1]$ for every layer of every model;
the weights $\lambda_e, \lambda_t$ have a uniform interpretation
across models. Setting $\lambda_e = \lambda_t = 0$ recovers a pure
saliency-ranked selector on the REAP saliency score
\citep{muzio2025reap}; setting $\lambda_e \gg 1, \lambda_t \gg 1$
makes saliency a tiebreaker only. The default
$\lambda_e = 1.0, \lambda_t = 0.5$ gives the edge-coverage gain
parity with saliency and the triangle-coverage gain half that
weight, reflecting that triplet barriers are noisier than pairwise
barriers (the triplet sweep produces $|T| \le 500$ samples versus
$\binom{n}{2}$ pairwise samples). A formal sensitivity sweep over
these weights at multiple model scales is left for future work.

\subsection{\texorpdfstring{Harmonic-redirect strength $\alpha$}{Harmonic-redirect strength alpha}}
\label{app:hp-alpha}

The redirect cost in Eq.~\ref{eq:hc-redirect} interpolates between a
pure pairwise-barrier nearest-survivor redirect ($\alpha = 0$) and
a strongly harmonic-penalized redirect ($\alpha \gg 1$). The default
$\alpha = 3.0$ is large enough to dominate the small differences in
$b_{ij}$ that occur for nearby survivors but small enough not to
override large pairwise-barrier gaps: at the default, two survivors
with equal $b_{ij}$ to a non-survivor break the tie according to the
harmonic component, and $\alpha = 3$ corresponds to a $4\times$
penalty on the most-harmonic edge in the layer relative to a
zero-harmonic edge (i.e., $1 + 3 \cdot 1 = 4$ versus $1$).

\subsection{Triangle-set construction}
\label{app:hp-triangle-sampler}

The sensitivity of $b_{\mathrm{harm}}$ to the choice of triangle set
$T$ is analyzed in App.~\ref{app:triangle-set}. There we contrast
three regimes: $T = \emptyset$ (pairwise-only; the curl subspace
$\mathrm{im}(\partial_2)$ vanishes and the harmonic component reduces
to the graph cycle space $\ker(\partial_1)$, which then carries the
entire non-gradient signal), $T = \binom{V}{3}$ (every triple a face;
combinatorially intractable at $n = 256$ where $|\binom{V}{3}| > 2.7
\times 10^6$), and the curated $|T| \le 500$ used here. The third regime is what
all results in this paper use; the cap of $500$ was chosen at the
high end of what the triplet-barrier sweep (one forward pass per
triple) can finish in the same wall-clock budget as the pairwise
sweep. We did not run a paired ablation across multiple values of
$|T|$ pre-submission.

\subsection{Merge metric}
\label{app:hp-merge-metric}

All barriers $b_{ij}, b_{ijk}$ in this paper are computed from the
KL divergence between the original layer's output distribution and
the merged layer's output distribution on the calibration corpus
$\mathcal{D}$ (App.~\ref{app:merge}). KL is the natural choice in
the language-modeling regime: the per-token output is a categorical
distribution over the vocabulary, and the calibration objective in
Eq.~\ref{eq:objective} is itself a KL. Other f-divergences (Jensen-
Shannon, Hellinger), or distributional metrics on activations
rather than outputs (cosine on activated-output vectors,
Wasserstein-1 on activation distributions), would all populate
$b_{ij}, b_{ijk}$ with positive scalars and could in principle be
substituted; we did not run a paired comparison pre-submission.
The chain-complex structure of $K$ and the Hodge decomposition
itself depend only on which pairs are in $E$ and which triples are
in $T$, not on the metric used to populate the edge-supported
signal $b$, so a metric swap is local to the barrier sweep.

\section{Experimental setup details}
\label{app:expsetup}

This appendix supports Section~\ref{sub:exp-setup}.

\subsection{Models and checkpoints}
\label{app:exp-models}

We benchmark three open-weight sparse MoE backbones, all pulled from
their canonical releases without fine-tuning.
OLMoE-1B-7B (\verb|allenai/OLMoE-1B-7B-0924|, 16 MoE layers,
$64$ experts per layer, top-$2$ routing) instantiates the
small-model regime. Qwen 3.5-35B-A3B (40 MoE layers, $256$ experts per
layer, top-$8$ routing) and Qwen 3.5-122B-A10B (48 MoE layers, $256$
experts per layer, top-$8$ routing) instantiate the high-cardinality
regime; we use the base checkpoints with reinforcement-learning
post-training disabled to keep the calibration distribution and the
baseline router both well-defined. Tokenizer, vocabulary, and
positional encoding follow each backbone's default configuration.

\subsection{Compression rates and the per-layer allocator}
\label{app:exp-rates}

Compression rates are quoted as fractions of total expert count
across the model: $33\%$ removes
$\lfloor 0.33 \cdot \sum_\ell n_\ell \rfloor$ experts in total and
$66\%$ removes $\lfloor 0.66 \cdot \sum_\ell n_\ell \rfloor$. The
per-layer survivor counts $\{k_\ell\}$ that realise this total are
produced by each baseline's source-paper allocator
(App.~\ref{app:method-detail}). We chose
$33\%$ and $66\%$ rather
than $25\% / 50\% / 75\%$ because the two rates straddle the
"easy / hard" frontier reported by the prior MoE-compression literature
(REAP, REAM, and STUN+W all report $50\%$-rate numbers in the
modest-loss regime; $66\%$ is past the regime in which the published
hybrids retain their downstream profile).

\subsection{Calibration and evaluation}
\label{app:exp-calib}

Calibration uses $2{,}048$ tokens drawn from the C4 train split
\citep{raffel2020c4} at random seed $42$, fixed across all methods.
Tokens are concatenated to the model's training context length and
truncated to fit the calibration budget; we did not subsample by
document, since the relevant signal for the per-layer barriers is
distributional rather than per-document. Evaluation reports
WikiText-103 perplexity \citep{merity2017wikitext}, C4 perplexity
(held-out validation split, never overlapping with calibration), and
nine downstream tasks via the LM Evaluation Harness
\citep{gao2024lmeval} at version \verb|v0.4.x|: ARC-Challenge and
ARC-Easy \citep{clark2018arc} (zero-shot, accuracy-norm), BoolQ
\citep{clark2019boolq} (zero-shot), HellaSwag
\citep{zellers2019hellaswag} (zero-shot, accuracy-norm), MMLU
\citep{hendrycks2021mmlu} ($5$-shot, macro across the $57$ subjects),
PIQA \citep{bisk2020piqa} (zero-shot), TruthfulQA-MC2
\citep{lin2022truthfulqa} (zero-shot), WinoGrande
\citep{sakaguchi2021winogrande} (zero-shot), and GSM8K
\citep{cobbe2021gsm8k} ($8$-shot, exact-match strict). DS-Avg in this
paper is the unweighted macro of the nine downstream accuracies.

\subsection{Hardware and reproducibility}
\label{app:exp-hardware}

Calibration, plan-time, and topology audits run on a single workstation
with $2$ NVIDIA RTX PRO 6000 Blackwell GPUs ($98$\,GB each, $252$\,GB
RAM, $64$ CPU cores) for the primary runs; the OLMoE Round-N
benchmarks were run on a paired A6000 workstation, and the Qwen 3.5
ablations and topology audits were distributed across H200 nodes for
parallel throughput. Inference throughput is measured with PyTorch
$2.x$ and Hugging Face \verb|transformers| at the model's default
context length, batch size $1$, $128$-token decode, repeated $5$
times with the median reported.

\section{Additional Results}
\label{app:additional-results}

This appendix supports Section~\ref{sec:exp}. It is organized as
five subsections: a description of the per-method cross-layer
allocator and the four HodgeCover ablations
(App.~\ref{app:method-detail}), the full hybrid axis with
matched-control hybrids and a per-rate Pareto plot
(App.~\ref{app:full-hybrid}), the per-layer retained mass
trajectories for the harmonic / gradient / curl / triplet-barrier
components on the two backbones not shown in the main body
(App.~\ref{app:mechanism}), the per-task ablation breakdown across
all three backbones (App.~\ref{app:ablations-full}), and the
plan-time, throughput, routing entropy, and dead-expert ratio
(App.~\ref{app:systems}).

\subsection{Per-method allocator and ablation construction}
\label{app:method-detail}

\textbf{Per-method baseline implementation.} For each baseline we
follow the source paper's released implementation; the brief notes
below summarize the survivor-selection rule used in our experiments.

\emph{REAP} \citep{muzio2025reap} ranks experts within each MoE
layer by an output-saliency score and drops the lowest-ranking ones,
applying the uniform per-layer cut of
Eq.~\ref{eq:uniform-allocator}.

\emph{REAM} \citep{ream2024} reduces every MoE layer to a fixed
target survivor count by saliency-guided clustering of experts and
merging non-centroid members into their nearest centroid.

\emph{MC-SMoE} \citep{mcsmoe2024} uses normalized routing
frequencies to mark a global pool of low-frequency experts as
non-dominant subject to a per-layer "at-least-one-dominant" floor,
then merges each non-dominant expert into the most router-similar
dominant expert in its layer.

\emph{STUN+Wanda} \citep{xu2025stun,sun2024wanda} composes the
behavioral-similarity Stage-1 of STUN with a Stage-2 unstructured
Wanda \citep{sun2024wanda} sweep on survivor weights, following the
released STUN implementation; the Stage-2 protocol is in
App.~\ref{app:wanda-stage}.

\emph{HodgeCover} (ours) follows the REAP convention with the
uniform per-layer allocator of Eq.~\ref{eq:uniform-allocator}.

\textbf{The four ablations of HodgeCover.} The four ablations probe
two orthogonal levers of the HodgeCover construction: whether the
triangle term contributes beyond the harmonic-edge term (\emph{Hodge
No-Triangle}), and whether the Hodge decomposition is load-bearing
relative to topology-free or plain-triplet alternatives at the same
input data (\emph{Greedy-Barrier}, \emph{Triplet-Penalty soft},
\emph{Triplet-Hypergraph hard}). Hodge No-Triangle reuses
Algorithm~\ref{alg:hodgecover}'s backbone unchanged and only modifies
the coverage objective; the other three ablations replace survivor
selection with a greedy union-find merge sort and aggregate the
resulting merge groups via the same frequency-weighted average as
REAM and MC-SMoE (App.~\ref{app:merge}).

\emph{Hodge No-Triangle.} Set $\lambda_t = 0$ in
Eq.~\ref{eq:hc-objective}, dropping the triplet-triangle coverage
term entirely and reducing $\Phi$ to saliency plus harmonic-edge
coverage. The Step~$1$ harmonic-critical edge set $E^\star$ is still
extracted from the Hodge decomposition; only the triangle term is
removed. This isolates the contribution of the triangle term holding
the harmonic-edge term fixed.

\emph{Greedy-Barrier (no topology).} Sort all pairs $\{i, j\}$ in
ascending order of pairwise barrier $b_{ij}$ and greedily union them
via a union-find loop that terminates when exactly $k$ components
remain. No Hodge decomposition, no harmonic weighting, and no
triplet input. This isolates the contribution of all topology
against the cleanest topology-blind merge.

\emph{Triplet-Penalty soft (no Hodge decomposition).} Reuse
Greedy-Barrier's union-find loop with the edge cost $b_{ij}$
replaced by $b_{ij}\,(1 + \alpha_T\,\overline{p}_{ij})$, where
$\overline{p}_{ij}$ is the layer-normalized mean of triplet barriers
$b_{ijk}$ over all $(i, j, k) \in T$ that contain edge $\{i, j\}$
and $\alpha_T \ge 0$ is the penalty strength. The same triplet table
$T$ that HodgeCover consumes is consumed here, but the Hodge
decomposition is skipped; this isolates the contribution of the
harmonic kernel above a plain sum-of-triplet-barriers softening.

\emph{Triplet-Hypergraph hard (binary triangle veto).} Reuse
Greedy-Barrier's union-find loop with a hard triplet veto inspired
by hypergraph-cut compression \citep{zhou2007hypergraph}: a candidate
union is accepted only if every triple of experts inside the
resulting component (when of size $\ge 3$) has $b_{ijk} \le \tau_T$,
where $\tau_T$ is the layer's $50$th-percentile triplet barrier; any
union that would create a component carrying a high-barrier triple
is rejected and the loop continues with the next-best edge. This is
the hard-veto analogue of Triplet-Penalty soft and isolates the cost
of a binary triangle constraint vs.\ a soft penalty.

The four ablations cover a $2 \times 2$ grid: triangle term
on/off and Hodge decomposition on/off. HodgeCover is on/on,
Hodge No-Triangle is off/on, Triplet-Penalty soft is on/off, and
Greedy-Barrier is off/off. Triplet-Hypergraph hard is the
hard-constraint variant of on/off.

\subsection{Full hybrid axis with matched-control baselines}
\label{app:full-hybrid}

Section~\ref{sub:exp-main} reports STUN+Wanda as the published hybrid
baseline. The matched-control hybrids REAP+Wanda and REAM+Wanda are
not reported in their source papers but we run them here as
fair-as-possible matched controls. The Stage~$2$ residual-sparsity
protocol of App.~\ref{app:wanda-stage} is applied identically across
all four hybrid methods.
Table~\ref{tab:hybrid-full} reports DS-Avg and PPL for each
$($model, rate$)$ cell.

\begin{table}[!htbp]
  \centering
  \footnotesize
  \setlength{\tabcolsep}{4pt}
  \renewcommand{\arraystretch}{0.92}
  \caption{Full hybrid axis on all three backbones, including the matched-control hybrids REAP+Wanda and REAM+Wanda (computed by us; not reported in the source REAP / REAM papers). DS-Avg is the unweighted macro of the nine downstream tasks.}
  \label{tab:hybrid-full}
  \begin{tabular}{l|cc|c}
    \toprule
    Method & Wiki & C4 & DS-Avg \\
    \midrule
    \multicolumn{4}{l}{\emph{OLMoE-1B-7B \textemdash\ $33\%$ rate}} \\
    STUN+Wanda \citep{xu2025stun,sun2024wanda} & \textbf{11.85} & 18.43 & 50.8 \\
    REAP+Wanda \citep{muzio2025reap,sun2024wanda} & 14.18 & 16.98 & 52.9 \\
    REAM+Wanda \citep{ream2024,sun2024wanda} & 14.32 & 16.87 & 47.0 \\
    \textbf{HodgeCover+Wanda (ours)} & 13.92 & \textbf{16.69} & \textbf{53.0} \\
    \midrule
    \multicolumn{4}{l}{\emph{OLMoE-1B-7B \textemdash\ $66\%$ rate}} \\
    STUN+Wanda \citep{xu2025stun,sun2024wanda} & 22.11 & 31.01 & 45.1 \\
    REAP+Wanda \citep{muzio2025reap,sun2024wanda} & 22.83 & 24.91 & 43.3 \\
    REAM+Wanda \citep{ream2024,sun2024wanda} & 31.65 & 26.43 & 40.8 \\
    \textbf{HodgeCover+Wanda (ours)} & \textbf{18.32} & \textbf{21.18} & \textbf{48.6} \\
    \midrule
    \multicolumn{4}{l}{\emph{Qwen 3.5-35B-A3B \textemdash\ $33\%$ rate}} \\
    STUN+Wanda \citep{xu2025stun,sun2024wanda} & \textbf{8.86} & 15.61 & 66.0 \\
    REAP+Wanda \citep{muzio2025reap,sun2024wanda} & 9.01 & 13.63 & \textbf{76.3} \\
    REAM+Wanda \citep{ream2024,sun2024wanda} & 9.01 & 13.59 & 75.8 \\
    \textbf{HodgeCover+Wanda (ours)} & 9.18 & \textbf{13.42} & 76.1 \\
    \midrule
    \multicolumn{4}{l}{\emph{Qwen 3.5-35B-A3B \textemdash\ $66\%$ rate}} \\
    STUN+Wanda \citep{xu2025stun,sun2024wanda} & 11.77 & 20.05 & 62.0 \\
    REAP+Wanda \citep{muzio2025reap,sun2024wanda} & 10.57 & 15.78 & 74.1 \\
    REAM+Wanda \citep{ream2024,sun2024wanda} & 10.48 & 15.68 & 73.8 \\
    \textbf{HodgeCover+Wanda (ours)} & \textbf{10.25} & \textbf{15.22} & \textbf{74.6} \\
    \midrule
    \multicolumn{4}{l}{\emph{Qwen 3.5-122B-A10B \textemdash\ $33\%$ rate}} \\
    STUN+Wanda \citep{xu2025stun,sun2024wanda} & 6.35 & 14.48 & 75.5 \\
    REAP+Wanda \citep{muzio2025reap,sun2024wanda} & 6.13 & 12.87 & 77.1 \\
    REAM+Wanda \citep{ream2024,sun2024wanda} & \textbf{5.98} & 12.70 & \textbf{77.9} \\
    \textbf{HodgeCover+Wanda (ours)} & 5.99 & \textbf{12.49} & 77.8 \\
    \midrule
    \multicolumn{4}{l}{\emph{Qwen 3.5-122B-A10B \textemdash\ $66\%$ rate}} \\
    STUN+Wanda \citep{xu2025stun,sun2024wanda} & 8.77 & 17.50 & 70.8 \\
    REAP+Wanda \citep{muzio2025reap,sun2024wanda} & 7.98 & 14.52 & 75.4 \\
    REAM+Wanda \citep{ream2024,sun2024wanda} & 7.78 & 14.25 & 75.6 \\
    \textbf{HodgeCover+Wanda (ours)} & \textbf{7.42} & \textbf{13.86} & \textbf{75.9} \\
    \bottomrule
  \end{tabular}
\end{table}

The pattern across the four hybrids is consistent. On the two Qwen
scales at $33\%$ rate the matched-control hybrids cluster within $0.8$ pp DS-Avg
of each other; HodgeCover+W and REAP+W trade the lead per task but
neither dominates. At $66\%$ HodgeCover+W opens a $0.3$--$0.8$ pp
DS-Avg gap over REAP+W and REAM+W and a $5$--$13$ pp gap over STUN+W
on the two Qwen scales. On OLMoE-1B-7B at $66\%$ the four hybrids
span an $8$ pp DS-Avg range (HodgeCover+W $48.6\%$, STUN+W $45.1\%$,
REAP+W $43.3\%$, REAM+W $40.8\%$); the small-model OLMoE regime is
the noisiest of the three. We read the four-hybrid comparison as a
robustness check: HodgeCover+Wanda's lead on the two Qwen scales at
$66\%$ is not an artifact of comparing against the weakest hybrid in
the literature, since the matched-control REAP+Wanda and REAM+Wanda
remain $0.3$--$0.8$ pp DS-Avg behind on the same cells.

\begin{figure*}[!htbp]
  \centering
  \includegraphics[width=\linewidth]{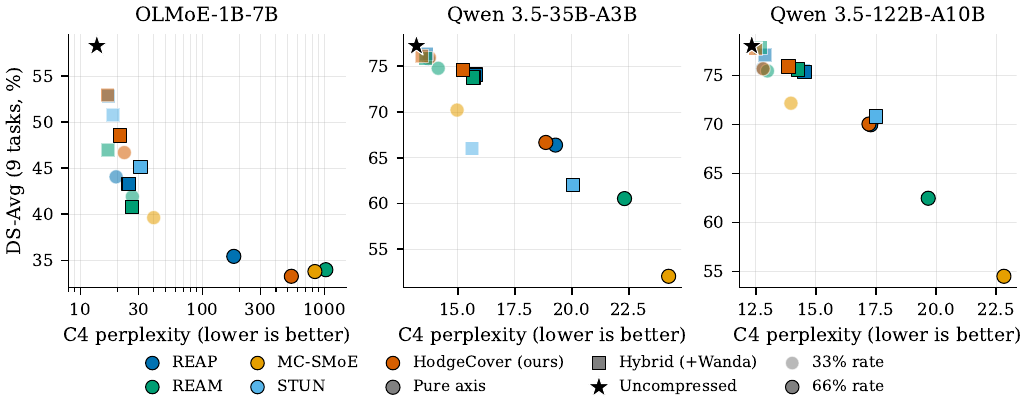}
  \caption{C4 perplexity vs.\ DS-Avg at $33\%$ (lighter, white-edged)
  and $66\%$ (darker, black-edged) rates. Pure expert-count axis =
  circles, hybrid axis = squares. Uncompressed reference is the black
  star. The OLMoE panel uses a logarithmic $x$ axis. HodgeCover+Wanda
  sits on the upper-left frontier of every panel.}
  \label{fig:pareto}
\end{figure*}

\subsection{Mechanism: H/G/C/T retained mass on OLMoE and Qwen 3.5-35B-A3B}
\label{app:mechanism}

\label{app:mechanism-metric}%
\noindent\textbf{Retained-mass metric.}
Fix MoE layer $\ell$ with simplicial mergeability complex
$K^{(\ell)} = (V^{(\ell)}, E^{(\ell)}, T^{(\ell)})$ and edge-supported
barrier signal $b^{(\ell)} \in C_1(K^{(\ell)})$ on the
\emph{uncompressed} layer (Section~\ref{sec:complex}). The Hodge
decomposition (Theorem~\ref{thm:hodge}) splits $b^{(\ell)}$ into
gradient, curl, and harmonic components
$b^{(\ell)} = b^{(\ell)}_{\mathrm{grad}}
            + b^{(\ell)}_{\mathrm{curl}}
            + b^{(\ell)}_{\mathrm{harm}}$;
write $h_e^{(\ell)},\, g_e^{(\ell)},\, c_e^{(\ell)}$ for the per-edge
coefficients of the harmonic, gradient, and curl components, and
$b_\sigma^{(\ell)}$ for the triplet-barrier value on triangle
$\sigma \in T^{(\ell)}$. Given a survivor set
$S^{(\ell)} \subseteq V^{(\ell)}$ produced by any compression method,
the per-layer retained-mass metrics are the fractions of $\ell^1$
mass on edges and triangles whose support \emph{intersects} the
survivor set:
\begin{equation}
  \label{eq:retained-mass}
  \mathrm{ret}_X^{(\ell)}(S)
    \;=\;
    \frac{\sum_{e \in E^{(\ell)} :\, e \cap S \neq \emptyset}\, |X^{(\ell)}_e|}
         {\sum_{e \in E^{(\ell)}}\, |X^{(\ell)}_e|},
  \qquad X \in \{\mathrm{harm},\, \mathrm{grad},\, \mathrm{curl}\},
\end{equation}
and analogously for the triplet-barrier component, replacing the
edge sum by a sum over triangles
$\sigma \in T^{(\ell)}$ with $\sigma \cap S \neq \emptyset$, weighted
by $|b_\sigma^{(\ell)}|$. Survivors are evaluated against the
\emph{original} (pre-compression) Hodge decomposition rather than
re-decomposing the survivor sub-MoE's complex, so the four metrics
directly measure how much of the original layer's structural signal
each method's survivor set covers. The macro retained-mass at a
given $($model, rate$)$ cell is the unweighted average of
$\mathrm{ret}_X^{(\ell)}(S^{(\ell)})$ over MoE layers.

Section~\ref{sub:exp-mechanism} plots the four-component retained
mass on Qwen 3.5-122B-A10B at $66\%$.
Figures~\ref{fig:hgct-olmoe} and \ref{fig:hgct-qwen35} extend the
same plot to the two remaining backbones.

\begin{figure*}[!htbp]
  \centering
  \includegraphics[width=\linewidth]{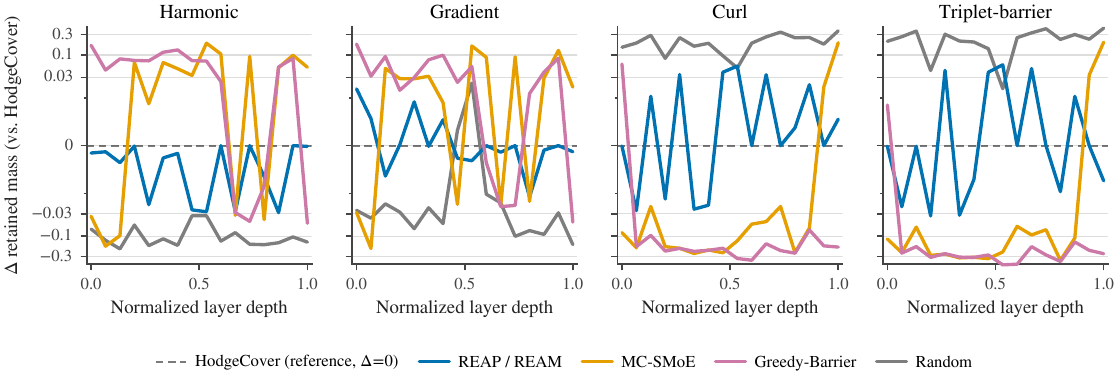}
  \caption{Per-layer deviation from HodgeCover on OLMoE-1B-7B at
  $66\%$ across the four Hodge components.}
  \label{fig:hgct-olmoe}
\end{figure*}

\begin{figure*}[!htbp]
  \centering
  \includegraphics[width=\linewidth]{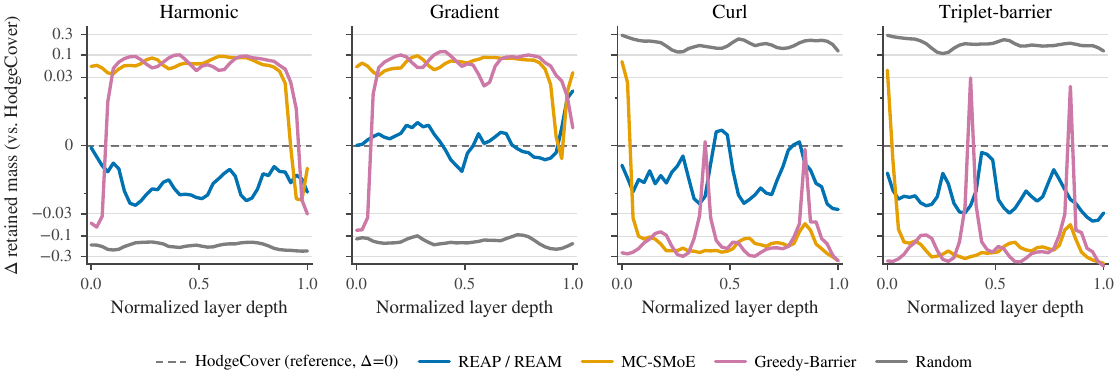}
  \caption{Per-layer deviation from HodgeCover on Qwen 3.5-35B-A3B at
  $66\%$ across the four Hodge components.}
  \label{fig:hgct-qwen35}
\end{figure*}

\begin{table}[!htbp]
  \centering
  \footnotesize
  \setlength{\tabcolsep}{3pt}
  \caption{Macro retained mass over MoE layers at $66\%$ rate for the four components of the Hodge decomposition: harmonic (H), gradient (G), curl (C), and raw triplet-barrier (T). HodgeCover, REAP, and REAM are positioned across all four; Greedy-Barrier, MC-SMoE, and Random each maximize a strict subset.}
  \label{tab:mass-macro}
  \resizebox{\linewidth}{!}{%
  \begin{tabular}{l|cccc|cccc|cccc}
    \toprule
    & \multicolumn{4}{c|}{OLMoE-1B-7B} & \multicolumn{4}{c|}{Qwen 3.5-35B-A3B} & \multicolumn{4}{c}{Qwen 3.5-122B-A10B} \\
    Method & H & G & C & T & H & G & C & T & H & G & C & T \\
    \midrule
    REAP & 0.6838 & 0.6133 & 0.3859 & 0.4972 & 0.7310 & 0.7009 & 0.3741 & 0.5149 & 0.6831 & 0.6729 & 0.4323 & 0.5716 \\
    REAM & 0.6838 & 0.6133 & 0.3859 & 0.4972 & 0.7310 & 0.7009 & 0.3741 & 0.5149 & 0.6831 & 0.6729 & 0.4323 & 0.5716 \\
    Greedy-Barrier & \textbf{0.7469} & \textbf{0.6535} & 0.1917 & 0.2269 & 0.7919 & 0.7572 & 0.2183 & 0.3191 & 0.7552 & \textbf{0.7353} & 0.2268 & 0.3278 \\
    MC-SMoE & 0.7202 & 0.6372 & 0.2717 & 0.3273 & \textbf{0.7960} & \textbf{0.7607} & 0.2209 & 0.3232 & \textbf{0.7641} & 0.7347 & 0.2397 & 0.3363 \\
    Random & 0.5761 & 0.5690 & \textbf{0.5858} & \textbf{0.7382} & 0.5683 & 0.5642 & \textbf{0.5649} & \textbf{0.7138} & 0.5692 & 0.5654 & \textbf{0.5650} & \textbf{0.7143} \\
    \textbf{HodgeCover (ours)} & 0.6927 & 0.6128 & 0.3781 & 0.4914 & 0.7395 & 0.6998 & 0.3819 & 0.5311 & 0.6910 & 0.6734 & 0.4363 & 0.5774 \\
    \bottomrule
  \end{tabular}
  }% end resizebox
\end{table}

The $H$/$G$/$C$/$T$ trade-off observed in
Section~\ref{sub:exp-mechanism} is consistent across the two
remaining backbones. Greedy-Barrier and MC-SMoE retain more harmonic
and gradient mass than HodgeCover at the cost of curl and
triplet-barrier mass; Random sweeps in the opposite direction; REAP
and REAM share identical pure-axis survivor selection (their plots
are drawn as a single curve in Figure~\ref{fig:mechanism}) and
deviate from HodgeCover weakly on every component.
Table~\ref{tab:mass-macro}
reports the macro retained mass at $66\%$ for the same five methods
across the four components.

\subsection{Full ablation breakdown}
\label{app:ablations-full}

The wrapped Table~\ref{tab:ablations} in the main body reports the
Qwen 3.5-35B-A3B ablations. Table~\ref{tab:ablations-full} below
adds the per-task breakdown for all three backbones at both rates.

\begin{table*}[!htbp]
  \centering
  \footnotesize
  \setlength{\tabcolsep}{3.5pt}
  \caption{Per-task ablation breakdown across all three backbones at $33\%$ and $66\%$ rate, with the rightmost column reporting the unweighted nine-task DS-Avg. Bold = best within (model, rate) over the five rows. Triplet-Penalty (soft) on Qwen 3.5-122B-A10B was not run pre-submission. The rows match Table~\ref{tab:ablations} on Qwen 3.5-35B-A3B in the main body.}
  \label{tab:ablations-full}
  \resizebox{\linewidth}{!}{%
  \begin{tabular}{l|cc|ccccccccc|c}
    \toprule
    Variant & Wiki & C4 & ARC-c & ARC-e & BoolQ & HellaS & MMLU & PIQA & TQA & WinoG & GSM8K & DS-Avg \\
    \midrule
    \multicolumn{13}{l}{\emph{OLMoE-1B-7B \textemdash\ $\mathbf{33\%}$ rate}} \\
    HodgeCover (ours) & 23.50 & 22.91 & \textbf{32.8} & 50.3 & 62.8 & \textbf{67.0} & 25.6 & 73.0 & \textbf{41.7} & \textbf{64.9} & 2.2 & 46.7 \\
    Hodge No-Triangle & 38.37 & 37.36 & 27.6 & 49.9 & 62.0 & 47.2 & 26.6 & 66.5 & 36.1 & 55.5 & 2.3 & 41.5 \\
    Triplet-Hypergraph & 42.90 & 41.42 & 26.6 & 40.2 & 61.9 & 46.8 & 25.1 & 69.0 & 40.9 & 53.8 & 2.3 & 40.7 \\
    Triplet-Penalty & \textbf{21.03} & \textbf{20.17} & 32.5 & \textbf{50.5} & \textbf{64.4} & 65.2 & \textbf{33.8} & \textbf{73.8} & 38.6 & 63.2 & \textbf{3.3} & \textbf{47.3} \\
    Greedy-Barrier & 21.37 & 20.42 & 31.6 & 49.9 & 62.5 & 65.7 & 32.4 & 73.8 & 37.7 & 61.4 & 3.0 & 46.4 \\
    \midrule
    \multicolumn{13}{l}{\emph{OLMoE-1B-7B \textemdash\ $\mathbf{66\%}$ rate}} \\
    HodgeCover (ours) & 1,134.9 & 535.5 & 24.2 & 29.5 & 38.6 & 30.0 & 25.4 & 52.7 & 47.7 & \textbf{51.2} & \textbf{0.1} & 33.3 \\
    Hodge No-Triangle & 20,714.5 & 4,596.3 & 24.5 & 30.4 & 40.0 & 27.8 & 25.2 & 51.7 & 49.9 & 50.3 & 0.0 & 33.3 \\
    Triplet-Hypergraph & 19,030.7 & 14,481.7 & 24.7 & 27.8 & 39.9 & 26.8 & 25.5 & 51.9 & \textbf{50.6} & 49.8 & 0.0 & 33.0 \\
    Triplet-Penalty & \textbf{427.3} & \textbf{252.7} & \textbf{25.2} & 30.6 & \textbf{52.6} & \textbf{34.2} & \textbf{26.6} & 56.1 & 50.0 & 50.7 & 0.0 & \textbf{36.2} \\
    Greedy-Barrier & 435.2 & 291.8 & 24.6 & \textbf{31.5} & 48.8 & 32.4 & 24.6 & \textbf{56.4} & 49.9 & 49.6 & 0.1 & 35.3 \\
    \midrule
    \multicolumn{13}{l}{\emph{Qwen 3.5-35B-A3B \textemdash\ $\mathbf{33\%}$ rate}} \\
    HodgeCover (ours) & \textbf{9.97} & \textbf{13.75} & \textbf{56.3} & \textbf{73.9} & \textbf{91.3} & \textbf{82.5} & \textbf{80.7} & \textbf{82.9} & \textbf{54.7} & \textbf{76.3} & 85.1 & \textbf{75.9} \\
    Hodge No-Triangle & 12.48 & 17.10 & 46.4 & 67.7 & 87.2 & 73.3 & 66.4 & 77.9 & 46.8 & 70.2 & 73.7 & 67.7 \\
    Triplet-Hypergraph & 14.29 & 16.76 & 36.3 & 57.7 & 88.1 & 72.4 & 53.1 & 78.2 & 48.8 & 73.2 & 72.5 & 64.5 \\
    Triplet-Penalty & 10.22 & 13.82 & 47.2 & 66.3 & 87.0 & 80.6 & 74.3 & 82.0 & 50.5 & 74.8 & 88.0 & 72.3 \\
    Greedy-Barrier & 10.22 & 13.79 & 46.6 & 66.3 & 89.8 & 80.7 & 74.1 & 81.8 & 50.9 & 75.7 & \textbf{88.2} & 72.7 \\
    \midrule
    \multicolumn{13}{l}{\emph{Qwen 3.5-35B-A3B \textemdash\ $\mathbf{66\%}$ rate}} \\
    HodgeCover (ours) & \textbf{15.13} & 18.86 & \textbf{46.4} & \textbf{67.5} & \textbf{88.1} & \textbf{72.5} & \textbf{54.5} & \textbf{78.1} & \textbf{51.7} & 73.7 & \textbf{67.5} & \textbf{66.7} \\
    Hodge No-Triangle & 21.34 & 28.34 & 35.8 & 54.7 & 78.5 & 57.4 & 44.1 & 70.3 & 45.4 & 67.6 & 42.8 & 55.2 \\
    Triplet-Hypergraph & 98.08 & 154.9 & 20.8 & 32.7 & 62.2 & 30.7 & 26.0 & 56.3 & 44.4 & 50.9 & 1.7 & 36.2 \\
    Triplet-Penalty & 16.14 & 18.77 & 36.0 & 53.7 & 86.6 & 69.6 & 44.2 & 76.1 & 47.1 & \textbf{74.0} & 61.2 & 60.9 \\
    Greedy-Barrier & 15.30 & \textbf{18.13} & 33.4 & 52.6 & 86.8 & 69.9 & 43.9 & 77.3 & 47.6 & 72.5 & 62.3 & 60.7 \\
    \midrule
    \multicolumn{13}{l}{\emph{Qwen 3.5-122B-A10B \textemdash\ $\mathbf{33\%}$ rate}} \\
    HodgeCover (ours) & \textbf{7.21} & \textbf{12.81} & \textbf{61.7} & \textbf{80.1} & 74.4 & \textbf{84.9} & \textbf{83.9} & \textbf{83.1} & 51.6 & 77.3 & \textbf{84.5} & \textbf{75.7} \\
    Hodge No-Triangle & 9.51 & 15.34 & 56.3 & 78.9 & 72.5 & 78.1 & 76.2 & 80.5 & 50.2 & 75.6 & 82.9 & 72.4 \\
    Triplet-Hypergraph & 11.54 & 15.37 & 45.1 & 67.7 & \textbf{82.5} & 76.9 & 60.1 & 80.2 & 49.1 & 75.1 & 69.5 & 67.4 \\
    Greedy-Barrier & 7.36 & 12.88 & 58.6 & 78.2 & 73.6 & 84.7 & 79.3 & 82.6 & \textbf{52.3} & \textbf{78.5} & 72.1 & 73.3 \\
    \midrule
    \multicolumn{13}{l}{\emph{Qwen 3.5-122B-A10B \textemdash\ $\mathbf{66\%}$ rate}} \\
    HodgeCover (ours) & \textbf{12.46} & 17.21 & \textbf{52.2} & \textbf{71.1} & 75.3 & \textbf{77.0} & \textbf{69.0} & \textbf{80.1} & \textbf{51.5} & \textbf{76.4} & \textbf{78.1} & \textbf{70.1} \\
    Hodge No-Triangle & 16.79 & 22.95 & 39.9 & 61.4 & \textbf{85.7} & 66.0 & 48.9 & 75.0 & 43.0 & 70.6 & 37.3 & 58.7 \\
    Triplet-Hypergraph & 59.87 & 94.01 & 23.6 & 39.2 & 62.2 & 33.1 & 25.5 & 58.1 & 43.4 & 53.0 & 2.7 & 37.9 \\
    Greedy-Barrier & 12.76 & \textbf{16.29} & 38.6 & 60.8 & 79.0 & 75.0 & 42.2 & 78.5 & 47.5 & 75.4 & 53.3 & 61.2 \\
    \bottomrule
  \end{tabular}
  }% end resizebox
\end{table*}

The cross-backbone pattern matches the Qwen 3.5-35B-A3B reading.
On OLMoE-1B-7B at $33\%$ HodgeCover and Triplet-Penalty are within
$1$ pp DS-Avg, and at $66\%$ the small-model regime collapses every
expert-count row toward the $33$--$36\%$ DS-Avg band; the OLMoE row
is a stress regime in which the topological objective is harder to
distinguish from the matched soft-penalty baseline because every
expert-count method is far from the uncompressed reference.
On Qwen 3.5-122B-A10B at $66\%$ HodgeCover gains
$+11.4$ pp DS-Avg over Hodge No-Triangle, $+8.9$ pp over
Greedy-Barrier, and $+32.2$ pp over Triplet-Hypergraph; we did not
run Triplet-Penalty on Qwen 3.5-122B-A10B because the
Qwen 3.5-35B-A3B run already cleared the ablation gating threshold.
Across the two Qwen scales the non-Hodge ablations cluster: Hodge
No-Triangle is $3$--$12$ pp DS-Avg below HodgeCover and
Triplet-Hypergraph collapses by $8$--$32$ pp; on Qwen 3.5-35B-A3B,
Triplet-Penalty lands within $1$ pp of Greedy-Barrier but $4$--$6$ pp
below HodgeCover. The conclusion repeats: the Hodge decomposition is
load-bearing, the triangle term is necessary for it to deliver, and
the soft form of the topological coverage is necessary for the
triangle term not to over-veto.

\subsection{Systems characterization}
\label{app:systems}

Table~\ref{tab:systems} reports the plan-time at the first
compression rate run on the model, inference throughput at $66\%$
rate, routing entropy at $66\%$ rate, and dead-expert ratio at $66\%$
rate on the post-compressed layer for the four learning-free hybrids
of Section~\ref{sub:exp-main} (and, for reference, HodgeCover without
Stage~$2$ Wanda).

\begin{table}[!htbp]
  \centering
  \small
  \setlength{\tabcolsep}{4pt}
  \caption{Systems characterization. Plan-time is reported at the 33\% rate (the first compression rate run on the model, which builds the simplicial complex from scratch); subsequent rates re-use the cached complex and run in single-digit seconds, so we omit the 66\% plan-time column. Inference throughput, normalized routing entropy, and dead-expert ratio are measured at 66\% on the post-compressed layer. HodgeCover-only (without Stage~2 Wanda) is included for reference.}
  \label{tab:systems}
  \resizebox{\linewidth}{!}{%
  \begin{tabular}{l|c|c|c|c}
    \toprule
    Method & Plan-time @ 33\% (s, first run) & Throughput @ 66\% (tok/s) & Routing entropy @ 66\% & Dead-expert @ 66\% \\
    \midrule
    \multicolumn{5}{l}{\emph{OLMoE-1B-7B}} \\
    REAP+W & 4.0 & 6,137 & \textbf{0.960} & \textbf{3.1\%} \\
    REAM+W & 14.0 & 6,387 & 0.958 & 3.1\% \\
    HodgeCover+W (ours) & \textbf{3.3} & 6,506 & 0.933 & 17.9\% \\
    HodgeCover (ours) & 33.3 & \textbf{6,615} & 0.815 & 38.7\% \\
    \midrule
    \multicolumn{5}{l}{\emph{Qwen 3.5-35B-A3B}} \\
    REAP+W & \textbf{6.3} & 5,384 & 0.882 & 36.7\% \\
    REAM+W & 24.4 & \textbf{5,571} & \textbf{0.888} & \textbf{35.1\%} \\
    HodgeCover+W (ours) & 476.6 & 5,348 & 0.881 & 37.3\% \\
    HodgeCover (ours) & 482.6 & 5,318 & 0.833 & 49.8\% \\
    \midrule
    \multicolumn{5}{l}{\emph{Qwen 3.5-122B-A10B}} \\
    REAP+W & \textbf{15.6} & 3,616 & 0.895 & 29.6\% \\
    REAM+W & 190.0 & \textbf{4,119} & \textbf{0.900} & \textbf{27.5\%} \\
    HodgeCover+W (ours) & 488.1 & 3,801 & 0.897 & 29.0\% \\
    HodgeCover (ours) & 479.3 & 3,817 & 0.861 & 40.3\% \\
    \bottomrule
  \end{tabular}
  }% end resizebox
\end{table}

\textbf{Why the table reports plan-time at one rate, not two.}
HodgeCover's plan-time has a one-time component (the simplicial
complex, the pairwise and triplet barrier sweep, and the Hodge
decomposition itself) that is computed once per backbone and cached
across all subsequent compression rates, plus a small per-rate
component (the survivor-selection greedy loop and the redirect step,
together within single-digit seconds). Reporting plan-time at the
\emph{second} compression rate run on a backbone would therefore
report the cached, single-digit number rather than the honest
first-rate cost. We instead report plan-time at $33\%$, which on
every backbone is the first rate run and the only rate at which the
simplicial complex is built from scratch; the
$O((n^2+|T|)|\mathcal{D}|d^2)$ asymptotics that drive this cost are
worked through in App.~\ref{app:complexity}. The same caching applies
to the Stage~$2$ Wanda pruner: at the two Qwen scales the two
HodgeCover rows in Table~\ref{tab:systems} agree within $\sim 10$ s
($482.6$ vs.\ $476.6$ s on Qwen 3.5-35B-A3B; $479.3$ vs.\ $488.1$ s
on Qwen 3.5-122B-A10B), because Stage~$2$ reuses the calibration
tensor produced during Stage~$1$ with no additional forward pass; on
the much smaller OLMoE-1B-7B the absolute plan-times are an order of
magnitude smaller and the row-to-row spread ($33.3$ vs.\ $3.3$ s) is
dominated by one-time overheads in the from-scratch HodgeCover-only
run rather than by Stage~$2$ itself.
The plan-time gap to REAP+Wanda, by contrast, is real and structural:
REAP's per-layer scoring is $O(|\mathcal{D}| d^2)$, dominated by the
$n^2 |\mathcal{D}| d^2$ pairwise sweep at the $n = 256$-expert
Qwen scales by a factor of $n^2$.

\textbf{Throughput, routing entropy, dead-expert.} HodgeCover+Wanda
matches REAP+Wanda's tokens-per-second to within $6\%$ on every
backbone (the largest gap is $370$ tok/s, $6.0\%$ relative, on
OLMoE-1B-7B); the routing entropy at $66\%$ matches REAP+W and REAM+W
to within $0.01$ on the two Qwen scales. The dead-expert ratio at
$66\%$ favors REAM+Wanda by $1$--$3$ pp absolute on the two Qwen
scales (Qwen 3.5-35B-A3B: REAM+W $35.1\%$ vs.\ HodgeCover+W $37.3\%$;
Qwen 3.5-122B-A10B: REAM+W $27.5\%$ vs.\ HodgeCover+W $29.0\%$). We
surface this as a real disadvantage of HodgeCover+W in the cost
picture. The dead-expert ratio is not, however, a
downstream-correlated metric in our data: the same REAM+W cells lose
$0.3$--$0.8$ pp DS-Avg to HodgeCover+W on the same backbones
(Table~\ref{tab:hybrid-full}).

%%%%%%%%%%%%%%%%%%%%%%%%%%%%%%%%%%%%%%%%%%%%%%%%%%%%%%%%%%%%

% \newpage
% \input{checklist.tex}

\end{document}